\DeclareMathOperator*{\Id}{I}
\DeclareMathOperator*{\VS}{V}
\newcommand{\lgen}{\ensuremath{\langle}}
\newcommand{\rgen}{\ensuremath{\rangle}}
\newcommand{\ls}{\ell}
\newcommand{\idS}{\ensuremath{\mathfrak{s}}}
\newcommand{\R}{\ensuremath{\mathds{R}}}		
\newcommand{\C}{\ensuremath{\mathds{C}}}		
\newcommand{\N}{\ensuremath{\mathds{N}}}		
\DeclareMathOperator*{\E}{\mathds{E}}					
\DeclareMathOperator*{\htid}{ht}  		        
\newcommand{\0}{\ensuremath{0}}
\newcommand{\cumu}{\kappa}
\newcommand{\rk}{\operatorname{rank}}
\newcommand{\codim}{\operatorname{codim}}
\newcommand{\lspan}{\operatorname{span}}
\newcommand{\calA}{\mathcal{A}}
\newcommand{\calF}{\mathcal{F}}
\newcommand{\calI}{\mathcal{I}}
\newcommand{\calJ}{\mathcal{J}}
\newcommand{\calM}{\mathcal{M}}
\newcommand{\calO}{\mathcal{O}}
\newcommand{\calP}{\mathcal{P}}
\newcommand{\fraks}{\mathfrak{s}}
\newcommand{\CC}{\mathbb{C}}
\newcommand{\NN}{\mathbb{N}}
\theoremstyle{plain}
\newtheorem{Satz}{Theorem}[section]
\newtheorem{Thm}[Satz]{Theorem}
\newtheorem{Prop}[Satz]{Proposition}
\newtheorem{Conj}[Satz]{Conjecture}
\newtheorem{Cor}[Satz]{Corollary}
\newtheorem{Lem}[Satz]{Lemma}
\newtheorem{Prob}[Satz]{Problem}
\theoremstyle{definition}
\newtheorem{Rem}[Satz]{Remark}
\newtheorem{Def}[Satz]{Definition}
\newtheorem{Ex}[Satz]{Example}
\newtheorem{Not}[Satz]{Notation}
\newtheorem{Ass}[Satz]{Assumption}
\def\itboxx#1{\ifvmode\indent\fi\makebox[2em][r]{\rmn(#1)} }
\newcommand{\rmn}{\fontshape{n}\fontseries{m}\selectfont\rm}
\begin{document}

\title{Regression for Sets of Polynomial Equations}

\author[F.~J.~Kir\'{a}ly]{Franz J.~Kir\'{a}ly$^1$}
\email{franz.j.kiraly@tu-berlin.de}
\address{$^1$Machine Learning Group \\
        Berlin Institute of Technology (TU Berlin) \\
        and Discrete Geometry Group, Institute of Mathematics, FU Berlin}

\author[P.~von B\"unau]{Paul von B\"unau$^2$}
\email{paul.buenau@tu-berlin.de}

\author[J.~S.~M\"uller]{Jan Saputra M\"uller$^2$}
\email{saputra@cs.tu-berlin.de}

\author[D.~A.~J.~Blythe]{Duncan A.~J.~Blythe$^3$}
\email{duncan.blythe@bccn-berlin.de}
\address{$^3$Machine Learning Group \\
         Berlin Institute of Technology (TU Berlin) \\
         and Bernstein Center for Computational Neuroscience (BCCN), Berlin}

\author[F.~C.~Meinecke]{Frank C.~Meinecke$^2$}
\email{frank.meinecke@tu-berlin.de}
\address{$^2$Machine Learning Group \\
         Berlin Institute of Technology (TU Berlin) }

\author[K.-R.~M\"uller]{Klaus-Robert M\"uller$^4$}
\email{klaus-robert.mueller@tu-berlin.de}
\address{$^4$Machine Learning Group \\
         Berlin Institute of Technology (TU Berlin) \\
	 	and IPAM, UCLA, Los Angeles, USA}

\begin{abstract}
We propose a method called \textit{ideal regression} for approximating an arbitrary system
of polynomial equations by a system of a particular type. Using techniques
from approximate computational algebraic geometry, we show how we can solve ideal regression
directly without resorting to numerical optimization. Ideal regression is useful
whenever the solution to a learning problem can be described by a system of
polynomial equations. As an example, we demonstrate how to formulate
Stationary Subspace Analysis (SSA), a source separation problem, in terms of
ideal regression, which also yields a consistent estimator for SSA. We then compare
this estimator in simulations with previous optimization-based approaches for SSA.
\end{abstract}

\maketitle

\section{Introduction}
\label{sec:intro}

Regression analysis explains
the relationship between covariates and a target
variable by estimating the parameters of a function which best fits the
observed data. The function is chosen to be of a particular type (e.g.\ linear)
to facilitate interpretation or computation. In this paper, we introduce a
similar concept to sets of polynomials equations: given arbitrary input
polynomials, the aim is to find a set of polynomials of a particular type that best
approximates the set of solutions of the input. As in ordinary regression, these
polynomials are parameterized to belong to a certain desired class. This class
of polynomials is usually somewhat simpler than the input polynomials. We call this
approach \textit{ideal regression}, inspired by the algebraic concept of
an ideal in a ring. In fact, the algorithm that we derive is based on techniques from
approximate computational algebraic geometry. In machine learning, ideal
regression is useful whenever the solution to a learning problem can be
written in terms of a set of polynomial equations. We argue that our ideal regression
framework has several advantages: (a)~it allows to naturally formulate regression
problems with intrinsical algebraic structure;
(b)~our algorithm solves ideal regression directly instead of resorting to less
efficient numerical optimization; and (c)~the algebraic formulation is amenable
to a wide range of theoretical tools from algebraic geometry. We will demonstrate
these points in an application to a concrete parametric estimation task.

\begin{figure*}[ht]
  \begin{center}
    \includegraphics{./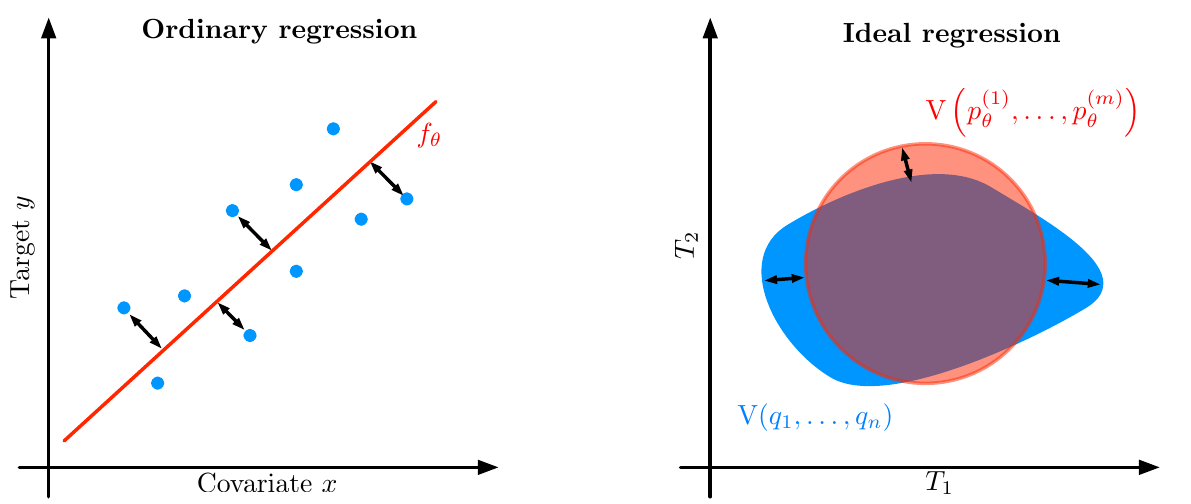}
 	 \caption{
	 	\label{fig:ideal_regression} Ordinary regression (left panel) fits the data (blue points)
		by a function $f_\theta$ (red line) parametrized by $\theta$. Ideal regression
		(right panel) approximates a set of arbitrary input polynomials $q_1, \ldots, q_n$ by
		a set of polynomials $p^{(1)}_\theta, \ldots, p^{(m)}_\theta$ that are of a special type
		parameterized by $\theta$. The approximation is in terms of their set of solutions:
			the parameter $\theta$ is chosen such that $\VS(p^{(1)}_\theta, \ldots, p^{(m)}_\theta)$
					(red shape) best fits the the solutions to the input (blue shape), in
					the space of $T_1$ and $T_2$.
			       }
  \end{center}
\end{figure*}

More formally, the analogy between ordinary and ideal regression is as follows. In
ordinary regression, we are given a dataset $\mathcal{D} = \{ (x_i, y_i) \}_{i=1}^n \subset \R^D \times \R$
and the aim is to determine parameters $\theta$ for a regression function $f_\theta : \R^D \rightarrow \R $
such that $f(x_i)$ fits the target $y_i$ according to some criterion, i.e.\ informally speaking,
\begin{align}
\label{eq:olsfit}
f_\theta (x_i) \approx y_i \text{ for all samples $(x_i, y_i) \in \mathcal{D}$}.
\end{align}
In ideal regression, the input consists of a set of $n$ arbitrary polynomial
equations $q_1(T) = \cdots = q_n(T) = 0$ in the vector of variables $T = (T_1, \ldots, T_D)$
which may e.g.\ correspond to the coordinates of data. That is, the data set
$\mathcal{D} = \{ q_1, \ldots, q_n \}$ consists of the coefficients of the input
polynomials. Let $\VS(\mathcal{D}) \subset \C^d$ be the set of solutions
to the input equations, i.e.\
\begin{align*}
	\VS(\mathcal{D}) = \{ x \in \C^D \; | \; q(x) = 0 \, \forall p \in \mathcal{D} \},
\end{align*}
where $p(x)$ denotes the evaluation of the polynomial $p$ on the values $x$.
The aim of ideal regression is to determine another set of polynomials
$p^{(1)}_\theta, \ldots, p^{(m)}_\theta$ parametrized by $\theta$ that best approximate
the input polynomials $\mathcal{D}$ in terms of their set of solutions;
that is, informally,
\begin{align}
\label{eq:irfit}
	\VS(p^{(1)}_\theta, \ldots, p^{(m)}_\theta) \approx \VS(\mathcal{D}) .
\end{align}
The class of polynomials (parametrized by $\theta$) by which we approximate arbitrary
input is chosen such that it has certain desirable properties, e.g.\ is easy to
interpret or is of a particular type prescribed by the context of the application.
Thus, in ordinary regression we fit a parametrized function to arbitrary data and in ideal
regression we fit a parametrized system of polynomial equations to arbitrary systems of polynomial equations. Note that even if $\VS(\mathcal{D})$ has no exact solution (e.g.~due to noise, over-determinedness, or when reducing degrees of freedom), we can still find an approximate regression system close to the inputs. Figure~\ref{fig:ideal_regression} illustrates the analogy between ordinary regression and
ideal regression.

The natural algebraic object to parameterize sets of equations up to additive and multiplicative ambiguities are ideals in polynomial rings\footnote{that is, sets of polynomials closed under addition in the set, and under multiplication with arbitrary polynomials}. The parametric family $p^{(1)}_\theta, \ldots, p^{(m)}_\theta$ corresponds to a parametric ideal $\mathcal{F}_{\theta}$ in the polynomial ring $\mathbb{C}[T_1,\dots, T_d].$ In ring theoretic language, the informal regression condition reformulates to
\begin{align}
\label{eq:irfitideal}
	\mathcal{F}_\theta \ni_{\text{approx}} \{q_1,\ldots, q_n\},
\end{align}
where $\ni_{\text{approx}}$ stands for being approximately contained. Intuitively, this means that the equations $q_i$ are well-approximated by the parametric ideal $\mathcal{F}_\theta.$

The ideal regression setting is fairly general. It can be applied to a wide range of
learning settings, including the following.
\begin{itemize}
\item {\bf Linear dimension reduction and feature extraction.} When linear features of data are known, ideal regression provides the canonical way to estimate the target parameter with or without ''independent'' or ''dependent'' labels. This subsumes PCA dimensionality reduction, linear regression with positive codimension and linear feature estimation.
\item {\bf Non-linear polynomial regression.} When the regressor is a set of polynomials with specific structure, as e.g.~in positive codimensional polynomial regression or reduced rank regression, ideal regression allows to estimate the coefficients for the regressor polynomials simultaneously.
\item {\bf Comparison of moments and marginals.} Ideal regression is the canonical tool when equalities or projections of cumulants are involved. The example we will pursue in the main part of the paper will be of this type, as it is possibly the simplest one where non-linear polynomials occur naturally.
\item {\bf Kernelized versions.} Non-linear feature mapping and kernelization is natural to integrate in the regression process as the presented estimator builds on least-squares estimates of vector spaces.
\end{itemize}

\begin{figure*}[ht]
  \begin{center}
    \includegraphics{./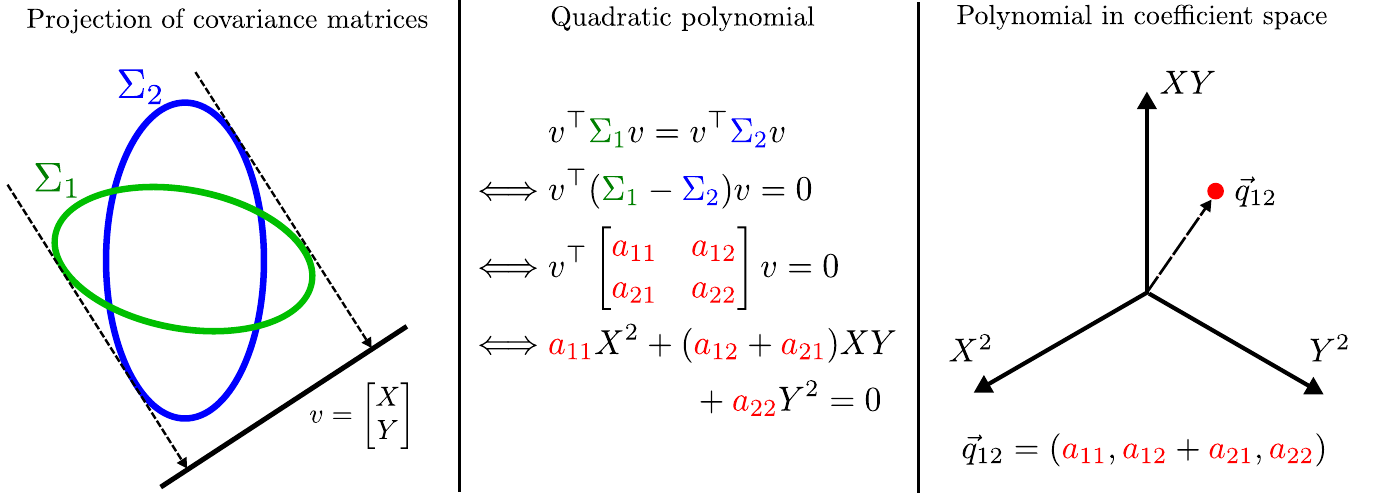}
 	 \caption{
	 	\label{fig:alg_setting}
			Representation of the problem: the left panel shows the covariance
			matrices $\Sigma_1$ and $\Sigma_2$ with the desired projection $v$.
			In the middle panel, this projection is defined as
			the solution to a quadratic polynomial. This polynomial is
			embedded in the vector space of coefficients spanned by the monomials
			$X^2, Y^2$ and $X Y$ shown in the right panel. 	
			       }
  \end{center}
\end{figure*}

\subsection{Example: finding common marginals}

In this paper, we will demonstrate ideal regression in a non-linear example: we will reformulate a statistical marginalization task as ideal regression. Namely, we study the following problem:

\begin{Prob}\label{Prob:marg}
Given $D$-variate random variables $X_1, \ldots, X_m,$
find a projection $P \in \R^{d \times D}$ to a $d$-dimensional subspace under which
which the $X_i$ are identically distributed, i.e.\
\begin{align*}
	P X_1 \sim \cdots \sim P X_m .
\end{align*}
\end{Prob}
For example, the $X_i$ can model different data clusters or epochs, for which we want to find an informative common projection $P$.
A special subcase of this problem, where the $X_i$ are approximated by Gaussians, is Stationary Subspace Analysis~\cite{PRL:SSA:2009}
which has been applied successfully to Brain-Computer-Interfacing~\cite{BunMeiSchMul10Finding}, Computer
Vision~\cite{MeiBunKawMul09Learning}, domain adaptation~\cite{HarKawWasBun10SSA},
geophysical data analysis and feature extraction for change point detection~\cite{BunMeiSchMul10Boosting}.
Previous SSA algorithms \cite{PRL:SSA:2009, HarKawWasBun10SSA, KawSamBunMei11AnInformation} have addressed this
task by finding the minimum of an objective function that measures the difference
of the projected cumulants on the sought-after subspace.

Under the assumption that such a linear map $P$ exists, we can describe the set of all maps yielding common marginals.
A necessary (and in practice sufficient) condition is that the projections of the cumulants under $P$ agree.
Thus the coefficients of the polynomial equations are given by the coefficients of
the cumulants of the $X_i$ (see Section~\ref{sec:marg-ir} for details).
The output ideals correspond to the possible row-spans of $P$; note that the fact that the $P X_i$ have identical distribution depends only on the row-span of $P$. Equivalently, the regression parameter $\theta$ ranges over the set all sub-vector spaces of dimension $d$ in $D$-space, i.e.~over the Grassmann manifold $\mbox{Gr}(d,D).$ The regression ideal $\mathcal{F}_\theta$ is then just $\mbox{I}(\theta),$ the ideal of the vector space $\theta$, considered as a subset of complex $D$-space.

\subsection{Outline of the algorithm}

In the application of ideal regression studied in this paper, the aim is to determine
linear polynomials that have approximately the same vanishing set as the input polynomials
derived from differences of cumulants. The representation in which the algorithm
computes is the vector space of polynomials: each polynomial is represented by a
vector of its coefficients as shown in the right panel of Figure~\ref{fig:alg_setting}. The
coefficient vector space is spanned by all monomials of a particular degree, e.g.\ in
Figure~\ref{fig:alg_setting} the axis correspond to the monomials $X^2, X Y$ and $Y^2$ because
all homogeneous polynomials of degree two in two variables can be written as a linear
combination of these monomials.

\begin{figure*}[ht]
  \begin{center}
    \includegraphics{./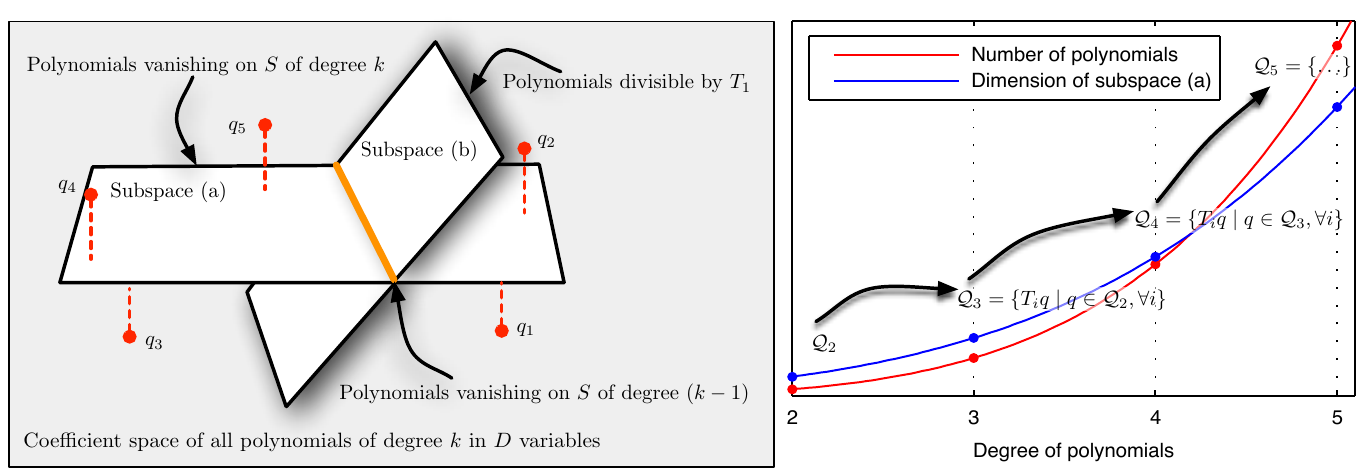}
 	 \caption{
	 	\label{fig:alg_multup}
			The left panel shows the vector space of coefficients; the input polynomials (red points)
			lie approximately on the subspace (a) of polynomials vanishing on $S$. In order to reduce
			the degree of the polynomial, we determine the intersection (orange) with the subspace (b)
			of polynomials that are divisible by a variable $T_1$. The polynomials in this
			intersection are isomorphic to the polynomials of degree $k-1$ that vanish on $S$.
			The right panel shows the M\"unchhausen process of multiplying the set of polynomials
			$\mathcal{Q}_i$ up to a degree so that we have enough
			elements $|\mathcal{Q}_i|$ to determine the basis for the subspace (a). In the shown case, where
			we start with $|\mathcal{Q}_2| = 5$ input polynomials in $D=6$ variables and
			$\dim S = 3$, we need to go up to degree 5. From $\mathcal{Q}_5$ we then descend to a linear
			form by repeatedly dividing out a single variable as shown in the left panel.
		 }
  \end{center}
\end{figure*}

The algorithm uses an algebraic trick: We first generate more and more equations by making the given ones more complicated; then, when their number suffices, we can make them simpler again to end at a system of the desired type. Playing on analogies, we thus call the algorithm the \textit{M\"unchhausen}
procedure\footnote{After the eponymous and semi-fictional Baron Hieronymus Carl Friedrich M\"unchhausen, who purportedly pulled himself and his horse out of the swamp by his own hair; compare the M\"unchhausen trilemma in epistemiology.}.

In the first part of the M\"unchhausen algorithm, we generate the said larger system of equations, having a particular (higher) degree,
but the same vanishing set. This is done by multiplying the input equations by all monomials
of a fixed degree, as illustrated in the right panel of Figure~\ref{fig:alg_multup}.
We will show that for any (generic) input, there always exists a degree such that the
resulting polynomials span a certain linear subspace of the coefficient
vector space (up to noise), i.e.\ that the red curve will always cross the blue
curve. That is, we show that the coefficient vectors lie approximately
on a linear subspace of known dimension (subspace~(a) in Figure~\ref{fig:alg_multup}).
This is the case because we have assumed that there exists a subspace of the data space
on which the marginals agree. After this, we obtain an approximate
basis for this subspace in coefficient space by applying PCA dimension reduction; this provides
us with an approximate basis for subspace~(a) in the left panel of Figure~\ref{fig:alg_multup}.

In the second part of the algorithm, we reduce the degree of the system of equations,
i.e.\ the approximate basis for subspace~(a), by repeatedly dividing out single
variables\footnote{i.e.~saturating with the homogenizing variable}.
In each step, we reduce the degree by one as illustrated by
the left panel of Figure~\ref{fig:alg_multup}. We compute a basis for the
intersection (orange line) of subspace~(a) and the subspace~(b) of polynomials of degree
$k$ that are divisible by the variable $T_1$. By dividing each basis element by $T_1$
we have obtained a system of equations of lower degree, which has approximately the same
set of solutions as the input. We repeat this process until we arrive at a system
of linear equations.

\subsection{Relation to other work}
The ideal regression approach draws inspiration from and integrates several concepts from different fields of research.
The first important connection is with computational algebra, as the estimation procedure is essentially a ring theoretic algorithm which can cope with noise and inexact data. In the noiseless case, estimating the regression parameter $\theta$ is essentially calculating the radical of a specific ideal, or, more specifically, computing the homogenous saturation of an ideal. These tasks are notoriously known to be very hard in general\footnote{namely, doubly exponential in the number of variables, see e.g.~\cite[section 4.]{Lap06Alg}}; however, for generic inputs, the computational complexity somewhat drops into feasible regions, as implied by the results on genericity in the appendix. The best known algorithms for computations of radicals are those of \cite{GiaTraZac88Gro}, implemented in AXIOM and REDUCE, the algorithm of \cite{EisHunVas92Dir}, implemented in Macaulay 2, the algorithm of \cite{CabConCar97Yet}, currently implemented in CoCoA, and the algorithm of \cite{KriLog91Alg} and its modification by \cite{Lap06Alg}, available in SINGULAR. Closely related to homogenous saturation is also the well-known Buchberger's algorithm for computation of reduced Gröbner basis, which can be seen as the inhomogenous counterpart of homogenous saturation when a degree-compatible term order is applied.

The second contribution comes from the field of approximate and numerical algebra, as the exact algorithms from computational algebra are numerically unstable even under small variations of the inputs and thus unfeasible for direct application to our case. The first application of numerical linear algebra to vector spaces of polynomials can be found in \cite{Cor95}, the numerical aspects of noisy polynomials have been treated in \cite{Ste04}. Also, the nascent field of approximate algebra has developed tools to deal with noise, see \cite{KrePouRob09}. In particular, the approximate vanishing ideal algorithm \cite{Hel06} fits polynomial equations to noisy data points with a method that essentially applies a sequence of weighted polynomial kernel regressions. The estimator for ideal regression given in this paper is essentially a deterministic algorithm using approximate computational algebra.

On the conceptual level, the idea of using (linear and commutative) algebra as an ingredient in statistical modelling and for solution of statistical problems is natural, as algebra is the science of structure. Therefore, studying structure in a statistical context makes algebra under stochastical premises a canonical tool. This idea gives rise to a plethora of different approaches, which today are subsumed as the field of algebraic statistics - in the broader meaning of the term. Standard references in the field include \cite{Stu02}, \cite{Stu10} and \cite{Gib10}, or \cite{Ama00} and \cite{Wat09}. Also, there is a range of machine learning methods dealing with non-linear algebraic structure or symmetries in data, e.g.~\cite{Ris07Skew,RisBor08Skew} or \cite{GPCA05}. In many applications, the concept of genericity also arises as the algebraic counterpart of statistical nondegeneracy; where it is mostly applied to the choice of model parameters and the study of a general such. Also, algebraic geometry and commmutative algebra are already successfully applied there to obtain theoretical results on statistical objects and methods.

\section{The algorithm}\label{sec:marg-ir}

The probability distribution of every smooth real random variable $X$ can be fully characterized in terms
of its \textit{cumulants}, which are the tensor coefficients of the cumulant generating function.

Before we continue, we introduce a useful shorthand notation for linearly transforming
tensors, i.e.~cumulants.
\begin{Def}
Let $A\in \CC^{d\times D}$ be a matrix. For a tensor $T\in \R^{D^{(\times k)}},$ we will denote
by $A\circ T$ the application of $A$ to $T$ along all tensor dimensions, i.e.
\begin{align*}
\left(A\circ T\right)_{i_1\dots i_k}=\sum_{j_1=1}^D\dots \sum_{j_k=1}^D A_{i_{1}j_{1}}\cdot\ldots\cdot A_{i_{k}j_{k}}T_{j_1\dots j_k}.
\end{align*}
\end{Def}
Using this, we can now define the cumulants of a $D$-dimensional smooth real random variable $X$ via the Taylor expansion of the cumulant generating function.
\begin{Def}
Let $X$ be a smooth real $D$-dimensional random variable. Then the cumulant generating function of $X$ is defined as
$
	\chi_X(\tau)  = \log \left( \E \left[ e^{i \tau^\top X} \right] \right)
				 = \sum_{k=1}^\infty (i \tau) \circ \frac{\cumu_k(X)}{k!},
$
where $\tau\in \R^D.$ The coefficient tensors $\cumu_k(X)$ are called the $k$-th cumulants of $X$.
\end{Def}
For the problem addressed in this paper, cumulants are a particularly suitable
representation because the cumulants of a linearly transformed random variable
are the multilinearly transformed cumulants, as a classical and elementary calculation shows:
\begin{Prop}
Let $X$ be a smooth real $D$-dimensional random variable and let $A \in \R^{d \times D}$ be a
matrix. Then the cumulants of the transformed random variable $A\cdot X$ are the transformed
cumulants,
$
	\cumu_k(A X) = A  \circ \cumu_k(X)
$
where $\circ$ denotes the application of $A$ along all tensor dimensions.
\end{Prop}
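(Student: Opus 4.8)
The plan is to reduce the claim to a one-line relation between cumulant generating functions and then match Taylor coefficients degree by degree. First I would note that for any $\tau \in \R^d$ one has $\tau^\top(AX) = (A^\top\tau)^\top X$, so that
\begin{align*}
\chi_{AX}(\tau) = \log\left(\E\left[e^{i\tau^\top AX}\right]\right) = \log\left(\E\left[e^{i(A^\top\tau)^\top X}\right]\right) = \chi_X(A^\top\tau).
\end{align*}
By the definition of the cumulant generating function the left-hand side expands as $\sum_{k\ge 1}(i\tau)\circ\cumu_k(AX)/k!$, so it remains to expand $\chi_X(A^\top\tau)$ in $\tau$ and to identify its degree-$k$ homogeneous part.

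Next I would substitute $\sigma = A^\top\tau$ into $\chi_X(\sigma) = \sum_{k\ge 1}(i\sigma)\circ\cumu_k(X)/k!$ and use the purely multilinear identity
\begin{align*}
(A^\top\tau)\circ\cumu_k(X) = \tau\circ\left(A\circ\cumu_k(X)\right),
\end{align*}
which is verified by writing both sides in coordinates: the left side contracts $\cumu_k(X)_{j_1\dots j_k}$ against $\prod_{r=1}^k (A^\top\tau)_{j_r} = \prod_{r=1}^k \sum_{i_r} A_{i_r j_r}\tau_{i_r}$, while the right side first forms $(A\circ\cumu_k(X))_{i_1\dots i_k} = \sum_{j_1,\dots,j_k} \prod_{r=1}^k A_{i_r j_r}\,\cumu_k(X)_{j_1\dots j_k}$ and then contracts against $\prod_{r=1}^k \tau_{i_r}$; both collapse to the same double sum $\sum_{i_1\dots i_k}\sum_{j_1\dots j_k} \prod_{r=1}^k A_{i_r j_r}\tau_{i_r}\,\cumu_k(X)_{j_1\dots j_k}$, which is just the bilinearity of tensor contraction together with $(A^\top\tau)^\top = \tau^\top A$ lifted to all tensor slots. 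Hence $\chi_{AX}(\tau) = \sum_{k\ge 1}(i\tau)\circ\left(A\circ\cumu_k(X)\right)/k!$, and comparing with the defining expansion of $\chi_{AX}$ gives $\cumu_k(AX) = A\circ\cumu_k(X)$ for every $k$.

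The only point that needs care is the legitimacy of comparing coefficients, i.e.\ the uniqueness of the Taylor expansion. Since $X$ is smooth, $\tau\mapsto\E[e^{i\tau^\top X}]$ is smooth and nonvanishing in a neighbourhood of the origin, so $\chi_X$ is a well-defined smooth germ there with a unique Taylor series; because $A^\top$ maps a neighbourhood of $0\in\R^d$ into a neighbourhood of $0\in\R^D$, the composition $\chi_X(A^\top\cdot)$ is likewise a smooth germ whose degree-$k$ part is well defined, so the identity above holds degree by degree and pins down the (symmetric) tensor $\cumu_k(AX)$ uniquely. I expect no genuine obstacle here — the statement is classical — and if one preferred to avoid power-series bookkeeping one could equally well run the argument with the finite Taylor polynomial of order $k$ plus a smooth remainder, which changes nothing.
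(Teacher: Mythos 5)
Your proof is correct and is precisely the ``classical and elementary calculation'' the paper alludes to: the paper states this proposition without proof, and your argument --- passing to the cumulant generating function via $\chi_{AX}(\tau)=\chi_X(A^\top\tau)$, pulling $A^\top$ through the contraction slot by slot, and matching homogeneous Taylor parts (which determine the symmetric cumulant tensors uniquely) --- is the standard way to supply it. Nothing is missing.
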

We now derive an algebraic formulation for Problem~\ref{Prob:marg}: note that
$P X_i \sim P X_j$ if and only if $v X_i\sim v X_j$ for all row vectors $v\in \lspan P^\top.$
\begin{Prob}
Find all $d$-dimensional linear subspaces in the set of vectors
\begin{align*}
	S & = \{ v \in \R^D \; \left| \; v^\top X_1 \sim \cdots \sim v^\top X_m \} \right. \\
	  & = \{v\in \R^D \; \left| \; v^\top \circ \cumu_k (X_i)=v^\top \circ \cumu_k (X_j),\; \right.\\
	  & \hspace{4cm}
	  k \in\N, 1 \le i,j\le m\}   \\
  	  & = \{v\in \R^D \; \left| \; v^\top \circ ( \cumu_k (X_i) - \cumu_k (X_m) ) = 0 ,\; \right. \\
	  & \hspace{4cm}
	  k \in\N, 1 \le i < m\}.
\end{align*}
\end{Prob}
The equivalence of the problems then follows from the fact that the projection $P$ can be characterized by its
row-span which is a $d$-dimensional linear subspace in $S$. Note that while we are looking for linear subspaces in $S$,
in general $S$ itself is not a vector space. Apart from the fact that $S$ is homogeneous, i.e.~$\lambda S = S$ for all
$\lambda \in \R\setminus \{0\}$, there is no additional structure that we utilize.
To use the tools from computational algebra, we now only need to consider the
left hand side of each of the equations as polynomials $f_1, \ldots, f_{n}$ in the variables $T_1, \ldots, T_D$,
\begin{align*}
	f_j = \begin{bmatrix} T_1  \cdots  T_D \end{bmatrix} \circ  ( \cumu_k (X_i) - \cumu_k (X_m) ),
\end{align*}
with $j$ running through $n$ combinations of $i$ and $k$. The $f_j$ are formally elements of the polynomial ring over the complex numbers $\C[T_1,\dots, T_D]$. In particular, if we restrict ourselves to a finite number of cumulants, we can write $S$ as the set of solutions to
a finite number $n$ of polynomial equations,
\begin{align*}
	S  = \left\{ v \in \R^D \; \left| \;  f_{1}(v) = \cdots = f_{n}(v) = 0, \; 1 \le j \le k \right\} \right.,
\end{align*}
which means that $S$ is an \textit{algebraic set} or an \textit{algebraic variety}.
Thus, in the language of algebraic geometry, we can reformulate the problem as follows.
\begin{Prob}\label{Prob:Alg}
Find all $d$-dimensional linear subspaces in the algebraic set
\begin{align*}
	S = \VS( f_1, \ldots, f_{n}) .
\end{align*}
\end{Prob}
In order to describe in algebraic terms how this can be done we need to assume that a unique
solution indeed exists, while assuming as little as possible about the given polynomials.
Therefore we need to employ the concept of \textit{generic polynomials}, which is defined
rigorously in the supplemental material. Informally, a polynomial is generic when it does
not fulfill any other conditions than the imposed ones and those which follow logically. This
can be modelled by assuming that the coefficients are independently sampled from a sufficiently general probability distribution (e.g.~Gaussians),
only subject to the imposed constraints. The statements following the assumption of genericity are then probability-one statements
under those sampling conditions.

In our context, we can show that under genericity conditions on the $f_1,\dots, f_n$,
and when their number $n$ is
large enough, the solution is unique and equivalent to finding a linear generating set for the
radical of the ideal $\langle f_1,\dots, f_{n}\rangle$, which equals its homogenous saturation,
as Corollary~\ref{Cor:IdentS} in the supplemental
material asserts:
\begin{Prob}
\label{prob:finalpaper}
Let $f_1,\dots, f_n$ with $n \geq D+1$ be generic homogenous polynomials vanishing on a
linear $d$-dimensional subspace $S\subseteq \C^D$. Find a linear generating set $\ls_1, \ldots, \ls_{D-d}$ for the
radical ideal
\begin{align*}
	(\langle f_1, \ldots, f_{n}\rangle : T_D) = \sqrt{\langle f_1, \ldots, f_n\rangle}=\Id (S).
\end{align*}
\end{Prob}

\section{Approximate Algebraic Computations}

In this section we present the Münchhausen algorithm which computes the homogenous saturation in Problem~\ref{prob:finalpaper} and thus computes the ideal regression in the marginalization problem. The algorithm for the general case is described in the appendix in section~\ref{sec:appsat}.

The efficiency of the algorithm stems largely from the fact that we operate with
linear representations for polynomials. That is, we first find enough elements in the
ideal $\lgen f_1, \ldots, f_n \rgen$ which we then represent in terms of coefficient
vectors. In this vector space we can then find the solution by means of linear
algebra. An illustration of this representation is shown in Figure~\ref{fig:alg_setting}.
Let us first introduce tools and notation.
\begin{Not}
We will write $R=\C[T_1,\dots, T_D]$ for the ring of polynomials over $\C$ in the variables $T_1,\dots, T_D.$ We will denote the ideal of the $d$-dimensional linear space $S\subseteq\C^D$ by $\idS = \Id(S).$
\end{Not}
In order to compactly write sub-vector spaces of certain degree, we introduce some notation.
\begin{Not}
Let $\calI$ be an ideal of $R$. Then we will denote the vector space of homogenous polynomials of degree $k$ in $\mathcal{I}$ by $\calI_k.$
\end{Not}
The dimension of these sets can be later written compactly in terms of simplex numbers, for which we introduce an abbreviating notation.
\begin{Not}
We denote the $b$-th $a$-simplex number by
$\Delta (a,b)={a+b-1 \choose a}$, which is defined to be zero for $a < 0$.
\end{Not}

Since the polynomials arise from estimated cumulants we need to carry all algebraic computations
out approximately. The crucial tool is to minimize distances in coefficient space using
the singular value decomposition (SVD).
\begin{Def}
Let $A\in \C^{m\times n}$ be a matrix, let $A=UDV^\top$ be its SVD. The approximate row span of $A$ of rank $k$ is the row span of the first $k$ rows of $V$; the approximate left null space of $A$ of rank $k$ is the row span of the last $k$ rows of $U$.
\end{Def}

These approximate spaces can be represented by matrices consisting of row vectors spanning them, the so-called approximate left null space matrix and approximate row span matrix.

The key to the problem is the fact that there exists
a degree $N$ such that the ideal generated by the $f_1,\dots, f_n$ contains all homogenous
polynomials of degree $N$ in $\fraks$. This allows us to increase the degree
until we arrive at a vector space where it suffices to operate linearly (see Figure~\ref{fig:alg_multup}).
\begin{Thm}\label{Prop:AlgProp}
Let $f_1,\dots, f_n\in \fraks$ be generic homogenous polynomials in $D$ variables of fixed degrees $d_1,\dots, d_n$ each, such that $n>D.$ Let $\calI=\langle f_1,\dots, f_n\rangle$. Then one has
$$(\calI:T_i)=\fraks$$
for any variable $T_i$. In particular, there exists an integer $N$ such that
$$\calI_N=\fraks_N.$$
$N$ is bounded from below by the unique index $M$ belonging to the first non-positive coefficient $a_M$ of the power series
$$\sum_{k=0}^\infty a_kt^k=\frac{\prod_{i=1}^n (1-t^{d_i})}{(1-t)^D}-\frac{1}{(1-t)^d}.$$
If we have $d_i\le 2$, and $D\le 11$, then equality holds.
\end{Thm}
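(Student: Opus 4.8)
The plan is to prove the three assertions in sequence: first the saturation identity $(\calI:T_i)=\fraks$, then extract from it the existence of $N$, then obtain the explicit lower bound via Hilbert series, and finally verify sharpness in the small-degree range by a finite computation. I will treat all $f_j$ as homogeneous of degrees $d_j$, and I will use that $S$ is a $d$-dimensional linear subspace so that $\fraks=\Id(S)$ is a prime ideal generated by $D-d$ independent linear forms, with Hilbert series $\sum_k \Delta(d, k)\, t^k = 1/(1-t)^d$ (after a linear change of coordinates, $\fraks=\langle T_{d+1},\dots,T_D\rangle$).

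First I would establish $(\calI:T_i)=\fraks$. The inclusion $\calI\subseteq\fraks$ is immediate since each $f_j$ vanishes on $S$, and $\fraks$ being prime with $T_i\notin\fraks$ gives $(\calI:T_i)\subseteq(\fraks:T_i)=\fraks$. For the reverse inclusion one must show that multiplication by $T_i$ sends every element of $\fraks$, in high enough degree, into $\calI$; equivalently that $\calI$ and $\fraks$ agree in all sufficiently large degrees, i.e. that $\calI$ is a \emph{reduction} of $\fraks$ up to saturation. This is where genericity enters: $n>D\ge D-d+1$ generic forms vanishing on $S$ should, by a dimension count, cut out $S$ set-theoretically with no extra components, and in fact their ideal should be $\fraks$-primary after saturating at the irrelevant ideal. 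I would argue this by semicontinuity — exhibit one explicit choice of $f_1,\dots,f_n\in\fraks$ (e.g. $T_{d+1},\dots,T_D$ together with $n-(D-d)$ further generic combinations) for which $(\calI:T_i)=\fraks$ holds, and conclude that the locus of coefficient tuples where it holds is open and dense, hence contains the generic point. The existence of $N$ with $\calI_N=\fraks_N$ is then a formal consequence: $(\calI:T_i^\infty)=\fraks$ means $T_i^m\cdot\fraks_{k}\subseteq\calI$ for $m\gg0$ uniformly (by Noetherianity the colon stabilizes), and a standard argument comparing this with the common Hilbert polynomial forces $\calI_k=\fraks_k$ for all $k\ge N$.

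For the explicit lower bound on $N$: the difference of Hilbert series in the statement is
$$\sum_k a_k t^k \;=\; \frac{\prod_{i=1}^n(1-t^{d_i})}{(1-t)^D}-\frac{1}{(1-t)^d}\;=\;H_{R/\calI_{\mathrm{gen}}}(t)-H_{R/\fraks}(t),$$
where the first term is the Hilbert series that a \emph{generic} complete-intersection-like quotient $R/\langle f_1,\dots,f_n\rangle$ has in each degree \emph{until} the forms become dependent — by the Fr\"oberg-type genericity (proved in the appendix, which I may invoke), $\dim(R/\calI)_k=\max\{0,\ [\,\text{coeff of }t^k\text{ in }\prod(1-t^{d_i})/(1-t)^D\,]\}$. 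Now $\calI_k=\fraks_k$ forces $\dim(R/\calI)_k=\dim(R/\fraks)_k=\Delta(d,k)$, which in turn forces $a_k\le 0$; so $N$ cannot be smaller than the first index $M$ at which $a_M\le 0$, giving $N\ge M$. The main subtlety here is being careful with the "$\max\{0,\cdot\}$" truncation: I must check that once $a_k$ first goes non-positive it cannot be that $\dim(R/\calI)_k$ has already dropped to $\Delta(d,k)$ at a strictly smaller degree — but that is precisely ruled out because for $k<M$ the generic Hilbert function is $\Delta(D,k)-\sum\Delta(D-d_i,k-d_i)+\cdots$ (strictly the full Fr\"oberg value) which strictly exceeds $\Delta(d,k)$ by $a_k>0$.

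Finally, the sharpness claim for $d_i\le 2$, $D\le 11$: here $M=N$, i.e. the lower bound is attained. My plan is to reduce this to a finite check. With all $d_i\le 2$ the relevant Hilbert series are rational functions with denominators $(1-t)^D$ and numerators products of $(1-t)$ and $(1-t^2)=(1-t)(1+t)$, so $a_k$ is (eventually) a polynomial in $k$ of controlled degree, and $M$ is bounded by an explicit function of $D$ — for $D\le 11$ this caps $M$, and hence $N$, by a concrete small integer. One then must show $\calI_M=\fraks_M$ for generic input, equivalently that the generic Hilbert function attains its minimum possible value $\Delta(d,M)$ already in degree $M$; by semicontinuity it is enough to verify this for a single explicit (e.g. integer-coefficient) choice of $f_1,\dots,f_n$ in each of the finitely many combinatorial types $(D,d,d_1,\dots,d_n)$ with $D\le11$, $d_i\le2$, $n>D$ — noting we may take $n=D+1$ since adding more generic forms only helps. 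The hard part is organizing this case analysis so that it is genuinely finite: a priori $n$ and the $d_i$ are unbounded, so I would first argue monotonicity in $n$ and a stabilization in the multiset of degrees (only the counts of $1$'s and $2$'s among the $d_i$ matter, and large numbers of degree-$1$ forms quickly force $S$ itself) to cut down to finitely many representative tuples, then certify each by exhibiting generators and computing a single rank of a coefficient matrix. I expect this bookkeeping — proving the reduction to finitely many cases and then that the bound genuinely fails to be sharp for some $(D,d_i)$ outside the stated range, so that the hypothesis "$D\le11$" is not merely an artifact — to be the main obstacle.
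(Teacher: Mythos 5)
There is a genuine gap in your first and central step, the proof that $(\calI:T_i)=\fraks$. Your plan is to exhibit one explicit tuple for which the identity holds and then conclude by semicontinuity, but neither half of this is in place. The proposed witness --- $T_{d+1},\dots,T_D$ ``together with $n-(D-d)$ further generic combinations'' --- does not respect the hypothesis that the $f_j$ have \emph{prescribed} degrees $d_1,\dots,d_n$ (if $d_j=2$ you may not take $f_j$ linear), and ``further generic combinations'' is circular: the witness is supposed to \emph{certify} genericity, so it must be explicit. Producing a valid explicit witness for an arbitrary degree tuple requires real care (e.g.\ $X^2,XY,Y^2,XZ\in\langle X,Y\rangle\subseteq\C[X,Y,Z]$ fails, since $YZ^k$ never lies in the monomial ideal they generate), and doing it uniformly in $(D,d,d_1,\dots,d_n)$ is essentially the content of the theorem. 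Moreover, even granted a witness, the locus where $(\calI:T_i)=\fraks$ is not obviously open; what is open is, for each fixed $k$, the rank condition $\dim\calI_k\ge\dim\fraks_k$, so you must first reduce the saturation identity to such a condition in a single degree (here one can use that $\fraks=\Id(S)$ is generated in degree one, so $\calI_N=\fraks_N$ propagates to all $k\ge N$ and thence to the saturation) --- a reduction you do not make. The paper avoids the witness problem altogether: it writes $f_i=\sum_k g_kh_{ki}$ with $g_k$ the linear generators of $\fraks$ and $h_{ki}$ generic, proves a generic version of Krull's height theorem giving $\langle h_{11},\dots,h_{1n}\rangle=\langle 1\rangle$ once $n\ge D+1$, and extracts the $g_k$ one after another; the saturation statement is then the homogeneous reformulation obtained by dehomogenizing and rehomogenizing. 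That height-theorem mechanism is the missing idea.

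The remaining parts are close to the paper's route, with two corrections. Deducing the existence of $N$ from saturation with respect to a \emph{single} variable plus ``the common Hilbert polynomial'' is not valid as stated: $\langle XY\rangle$ saturates to $\langle X\rangle$ with respect to $Y$ yet never agrees with it degreewise. You need saturation with respect to \emph{all} variables, so that $\fraks/\calI$ has finite length; the paper does this with a pigeonhole argument over monomials of degree $D(q-1)+1$. For the lower bound $N\ge M$ you only need the inequality $b_k\ge a_k$ of the Fr\"oberg-type theorem, which holds for arbitrary forms and is what the appendix actually proves; the degreewise \emph{equality} of the generic Hilbert function with the truncated series, which your paragraph invokes, is only conjectural in general, though your argument survives on the inequality alone. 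Your treatment of the range $d_i\le 2$, $D\le 11$ coincides with the paper's computer-assisted check (one exact-rank witness per case suffices because the rank condition is open), and your concern about reducing to finitely many tuples $(n,d_1,\dots,d_n)$ is legitimate --- the paper is terse on exactly that point.
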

This summarizes Proposition~\ref{Prop:multterm}, Corollary~\ref{Cor:Nbound} and Theorem~\ref{Thm:Frocheck} from the supplemental material for the case $\fraks=\Id (S)$, the proof can be found there.
In the appendix, we conjecture that the statement on $N$ is also valid for general $d_i$ and $D$ - this generalizes Fröberg's conjecture on Hilbert series of semi-regular sequences \cite{Fro94}. In the supplemental material, we give an algorithm which can be used to prove the conjecture for fixed $d_i$ and $D$ and thus give an exact bound for $N$ in those cases. Theorem~\ref{Prop:AlgProp} guarantees that given our input polynomials, we can easily obtain
a basis for the vector spaces of homogenous polynomials $\fraks_k$ with $k\ge N$.
In this vector space, we are interested in the polynomials divisible by a fixed monomial $T_i$; they form the vector space $\fraks_k \cap \langle T_i \rangle = \left(\fraks\cap\langle T_i\rangle \right)_k.$
By dividing out the monomials $T_i$, we can then obtain the vector space of homogenous polynomials $\fraks_{k-1}$ of degree one less.

\begin{algorithm}[h]
\caption{\label{Alg:rad-ssa-approx} The ideal regression estimator.
\textit{Input:} Generic homogenous polynomials $f_1,\dots, f_n\in\fraks, n\ge D$; the dimension $d$ of $S$.
\textit{Output:} An approximate linear generating set $\ls_1, \ldots, \ls_{D-d}$ for the ideal $\idS$. }
\begin{algorithmic}[1]
    \State Determine some necessary degree $N$ according to Theorem~\ref{Prop:AlgProp}
          (e.g. with Algorithm~\ref{Alg:FroN}, or via Conjecture~\ref{Conj:Frogen}/Algorithm~\ref{Alg:Frogen})
			 \label{alg:rad_line1}

    \State Initialize $Q \gets [\,]$ with the empty matrix.

    \For{$i=1\dots n$}
       \For{all monomials $M$ of degree $N-\deg f_i$}
            \State \label{alg:rad_line2}

            	Add a row vector of coefficients, $Q \gets \begin{bmatrix}  Q \\ f_i M \end{bmatrix}$

       \EndFor
    \EndFor

	\For{$k=N\dots 2$} \label{alg:rad_line3}
		\State \label{alg:rad_line4} Set $Q \gets \mbox{ReduceDegree(}Q\mbox{)}$
	\EndFor \label{alg:rad_line5}

    \State Compute the approximate row span matrix
	 	$\mbox{$A \gets \begin{bmatrix} a_1  \cdots  a_{D-d} \end{bmatrix}^\top$ of $Q$ of rank $D-d$ \label{alg:rad_line6}}$

	\State 	Let $\ls_i \gets \begin{bmatrix} T_1 & \cdots & T_D \end{bmatrix} a_i $ for all $1 \leq i \leq D-d$   	  	
\end{algorithmic}
\end{algorithm}

We explain how to proceed in the case where $\fraks=\Id (S)$ is linear, and the $f_i$ are generic.
The case of general $\fraks$ can be found in the appendix.
So suppose that we have enough quadratic polynomials. Then we can determine an approximate basis
for the vector space of homogenous degree $2$ polynomials vanishing on $S$ which is $\fraks_2$, because our input polynomials
lie approximately on that subspace. From this we can obtain the linear homogenous polynomials $\fraks_1$ by dividing out
any monomial $T_i$; a basis of $\fraks_1$ can be directly used to obtain a basis of $S$.
More generally, if we know a basis of the vector space of homogenous degree $k$ polynomials vanishing on $S$ which is $\fraks_{k},$
we can obtain from it a basis of $\fraks_{k-1}$
in a similar way; by repeating this stepwise, we eventually arrive at $\fraks_1$.
This degree reducing procedure is approximatively applied in Algorithm~\ref{Alg:ReduceDeg}.

We now describe the M\"unchhausen Algorithm~\ref{Alg:rad-ssa-approx} in detail. In Step~\ref{alg:rad_line1}, we calculate a degree $N$ up to which we need to multiply our input polynomials so that we have enough
elements to approximately span the whole space $\fraks_N$ of polynomials
vanishing on $S$. In the Steps~2~to~7 we multiply the
input polynomials up to the necessary degree $N$ and arrange them as coefficient vectors in the matrix $Q$.
This process is illustrated in the right panel of Figure~\ref{fig:alg_multup}.
The rows of $Q$ approximately span the space of polynomials vanishing on $S$, i.e.~space (a) in the
left panel of Figure~\ref{fig:alg_multup}. In Steps~8~to~10 we invoke Algorithm~\ref{Alg:ReduceDeg} to reduce
the degree of the polynomials in $Q$ until we have reached an approximate linear representation
$\fraks_1$. Finally, in Step~11 we reduce the set of linear generators in $Q$ to the principal $D-d$ ones.

\begin{algorithm}[h]
\caption{\label{Alg:ReduceDeg} ReduceDegree ($Q$).
\textit{Input:} Approximate basis for the vector space $\fraks_k,$
given as the rows of the ($n\times \Delta (n,D)$)-matrix $Q$; the dimension $d$ of $S$.
\textit{Output:} Approximate basis for
the vector space $\fraks_{k-1},$
given as the rows of the ($n'\times \Delta (n-1,D)$)-matrix $A$}
\begin{algorithmic}[1]		
	\For{$i=1\dots D$} \label{alg:red_line1}
		\State \label{alg:red_line2}
		\begin{minipage}[t]{11cm}
			Let $Q_i$ $\gets$ the submatrix of $Q$ obtained by\\ removing all
    		columns corresponding to\\ monomials divisible by $T_i$
		\end{minipage}
		
		\State \label{alg:red_line3}
		\begin{minipage}[t]{11cm}
			Compute $L_i$ $\gets$ the\\ approximate left null space matrix of $Q_i$\\
            of rank
            $m-\Delta(k,D)+\Delta(k,d)$\\ \phantom{of rank} $+\Delta(k-1,D)-\Delta(k-1,d)$
		\end{minipage}
				
		\State \label{alg:red_line4}
		\begin{minipage}[t]{11cm}
            Compute $L'_i \gets$ the\\ approximate row span matrix of $L_i Q$\\ of rank $\Delta(k-1,D)-\Delta(k-1,d)$	
        \end{minipage}
		
		\State \label{alg:red_line5}
		\begin{minipage}[t]{11cm}
		Let $L''_i \gets $ the matrix obtained from $L'_i$ by\\ removing all columns corresponding to\\
				monomials not divisible by $T_i$
		\end{minipage}
	\EndFor \label{alg:red_line6}

    \State \label{alg:red_line7} Let $L \gets$ the matrix obtained by vertical concatenation of all $L''_i$

    \State \label{alg:red_line8}
    	\begin{minipage}[t]{11cm}
           Compute $A \gets $ the\\ approximate row span matrix of
	 	   $L$ of rank \\
           $n'=\min (n,D(\Delta(k-1,D)-\Delta(k-1,d)))$
       \end{minipage}
  	  	
\end{algorithmic}
\end{algorithm}

\begin{figure*}[htb]
  \begin{center}
    \includegraphics{./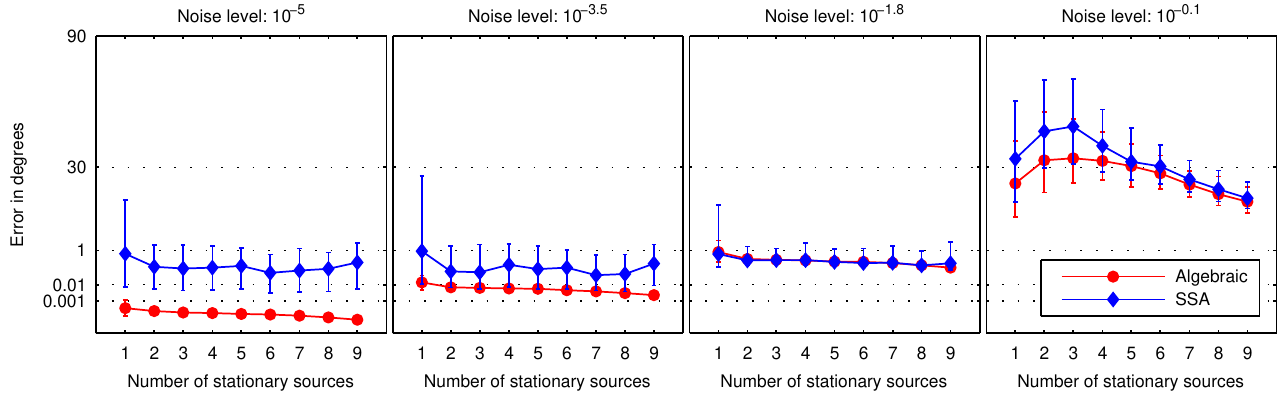}
  \caption{
        \label{fig:alg_nips}
        Comparison of the ideal regression algorithm and the SSA algorithm. Each panel shows
        the median error of the two algorithms (vertical axis) for varying numbers of
		stationary sources in ten dimensions (horizontal axis). 
		The noise level increases
		from the left to the right panel; the error bars extend from the 25\% to the 75\%
		quantile. 
       }
  \end{center}
\end{figure*}

Given a set of polynomials of degree $k$ that vanish approximately $S$, Algorithm~\ref{Alg:ReduceDeg}
computes another set polynomials of degree $k-1$ with the same property. This is achieved
by dividing out variables \textit{approximately}, in a way that utilizes as much information as
possible to reduce the influence of estimation errors in the coefficients of the input polynomials.
Approximate division by a variable $T_i$ means that we find linear combinations of our input polynomials
such the coefficients of all monomials not divisible by $T_i$ are as small as possible. Given a matrix
of coefficient row vectors $Q$ of degree $k$, for each variable that we divide out the result is
a matrix $L''_i$ of polynomials of degree $k-1$ that also vanish approximately on the set of
solutions $S$. We iterate over all variables in the for-loop and combine the results
$L''_1, \ldots, L''_D$ in the final Steps~7 and 8. In order to find $L''_i$ for each variable $T_i$, we first determine a matrix $L_i$ that minimizes
the coefficients for all monomials in $Q$ that are not divisible by $T_i$. To that end, in Step~2
we remove all monomials not divisible by $T_i$ to get a matrix $Q_i$ for which we then compute
the left null space matrix $L_i$ in Step~3. The product $L_i Q_i$ is then a set of
polynomials of degree $k$ with minimal coefficients for all monomials not divisible by $T_i$.
These polynomials lie approximately on the span of polynomials vanishing on $S$. In the
next Step~4, we compute an approximate basis $L'$ for this space and in Step~5 we reduce
the degree by removing all monomials not divisible by $T_i$. Finally, in the last Steps~7 and 8 we combine all found solutions $L''_1, \ldots, L''_D$ using PCA.
Note that both algorithm are deterministic and consistent estimators.

\section{Simulations}

We investigate the influence of the noise level and the number of
dimensions on the accuracy and the runtime of the ideal regression algorithm in the special case of covariance matrices and compare it to the standard method for this case, the SSA algorithm \cite{PRL:SSA:2009}.
We measure the accuracy using the subspace angle between the true and the estimated space of projections.
The setup of the synthetic data is as follows: we fix the total number of dimensions to $D=10$ and
vary the dimension $d$ of the subspace with equal probability distribution from
one to nine. We also fix the number of random variables to $m=26,$ yielding $n=25$ quadratic polynomials. For each trial of the simulation, we choose a random basis for the subspace $S$ and random covariance
matrices to which we add a disturbance matrix parametrized by the noise level
$\sigma$.

The results are shown in Figure~\ref{fig:alg_nips}. With increasing noise levels
 both algorithms become worse. For all noise levels, the algebraic method
yields significantly better results than the standard optimization-based approach, over all dimensionalities.
In Figure~\ref{fig:alg_nips}, we see that the error level of the ideal regression algorithm decreases with the noise level, converging to the exact solution when the noise tends to zero. In contrast, the error of original SSA decreases with noise level, reaching a minimum error baseline which it cannot fall below. In particular, the algebraic method significantly outperforms SSA for low noise levels, approaching machine precision. At high noise levels, the algebraic method outperforms SSA on average, having lower error variance than SSA.

\section{Conclusion}  

In this paper, we have presented the framework of ideal regression and an estimator which uses approximate computational algebra. Moreover, we have worked through a specific example: we have shown that the problem of finding common projections of marginals can be reformulated in terms of ideal regression and we have derived
a practical algorithm, that we have evaluated in numerical simulations. Also, due to the algebraic formulation of the problem, we were able to derive previously unapproachable theoretical results on the estimation problem.
We argue for a cross-fertilization of machine learning and approximate computational algebra: the former can benefit from the wealth of techniques for dealing with uncertainty and noisy data; the machine learning community may find in ideal regression a novel framework for representing learning problems and
powerful proof techniques.

\subsubsection*{Acknowledgments}

We thank Gert-Martin Greuel and Bernd Sturmfels for insightful discussion and we gratefully
acknowledge support by the Mathematisches Forschungsinstitut Oberwolfach (MFO).

\appendix
\section*{APPENDIX -- SUPPLEMENTARY MATERIAL}

This Appendix contains the theoretical background for a treatment of the ideal regression problem. In Section~\ref{sec:eqs-idr} we explain why ideal regression is the correct framework to estimate parametric systems of equations. In Section~\ref{app-generic}, we present the concept of genericity, which is essential for formulating a generative model of ideal regression. Moreover, we prove some theoretical results related to genericity. These results are then applied in Section~\ref{sec:appidr} to obtain identifiability results and an estimator for ideal regression, and - as a special application - for the common marginals problem.

\section{From estimating sets of equations to ideal regression}
\label{sec:eqs-idr}
In this section, we will explain why ideal regression is the natural formulation for estimating sets of equations. We will provide some examples leading to the conclusion that ideals in rings are the canonical objects which capture the ambiguities of sets of polynomial equations. The reader may find some knowledge on ring theory, in particular on ideals and Hilbert's Nullstellensatz helpful, as presented for example in \cite{Cox}, but not necessary to understand the phenomena presented in this section.

As already stated in the main corpus of the paper, we want to estimate a system of polynomial equations with specific structure, given some arbitrary system of polynomial equations:
\begin{Prob}\label{Prob:eqsreg}
Given input polynomials $q_1,\dots, q_n$, estimate a regression parameter $\theta$ such that the parametric system $p^{(1)}_\theta,\dots, p^{(m)}_\theta,$ is ``close'' to the inputs $q_1,\dots, q_n.$
\end{Prob}
Of course, Problem~\ref{Prob:eqsreg} is still an informal problem description: it remains to state how $\theta$ parameterizes the system of equations, and it has yet to be stated what ``close'' should mean. Intuitively, ``close'' should mean that the set of solutions defined by the $p^{(1)}_\theta,\dots, p^{(m)}_\theta$ is close to the set or sets of solutions defined by the $q_1,\dots, q_n$, i.e.~formally
\begin{align*}
V_\theta &=\VS (p^{(1)}_\theta,\dots, p^{(m)}_\theta)\\
& =\{x\in\mathbb{C}^D\;;\;p^{(1)}_\theta(x)=\dots = p^{(m)}_\theta(x)=0\}.
\end{align*}
Before we continue, we first show a basic example for ideal regression:

\begin{Ex}\rm
In ordinary regression, we are given points $(x^{(1)},y^{(1)}),\dots, (x^{(N)},y^{(N)})\in\mathbb{R}^k\times \mathbb{R}.$ We want to estimate a linear polynomial
$$p_\theta(X_1,\dots, X_k,Y) = \beta_1 X_1 +\dots+\beta_k X_k + \alpha - Y$$
with parameter $\theta = (\beta_1,\dots, \beta_k,\alpha)\in\mathbb{R}^{k+1}$
such that $p_\theta(x^{(i)},y^{(i)})$ is small for all $i$. For example, in least squares regression, the optimal $\theta$ is obtained by minimizing the sum of the squares of the $p_\theta(x^{(i)},y^{(i)})$.

Now each point $(x^{(i)},y^{(i)})$ is the unique solution to the set of $k+1$ equations
\begin{align*}
q_{i0}(X_1,\dots, X_k,Y)&= Y_{\phantom{0}} - y^{(i)}=0\\
q_{i1}(X_1,\dots, X_k,Y)&= X_1 - x^{(i)}_1=0\\
q_{i2}(X_1,\dots, X_k,Y)&= X_2 - x^{(i)}_2=0\\
&\vdots\\
q_{ik}(X_1,\dots, X_k,Y)&= X_k - x^{(i)}_k=0
\end{align*}
For one point $(x^{(i)},y^{(i)})$, being close to the regression hyperplane $\VS(p_\theta)$ means that $p_\theta$ is a principal vector of the $q_{ij}$ with $i$ fixed (w.r.t certain error measures) considered as elements in the vector space of linear polynomials. So, being a good approximation to all points means that $p_\theta$ is a principal vector for the data given by all $q_{ij}.$
\end{Ex}
The following examples will show the central ambiguities which occur when considering more than one equation:
\begin{Ex}\rm
Let us imagine we want to regress two equations instead of a single one, i.e. we want to determine two regressor polynomials
\begin{align*}
p^{(1)}_\theta(X_1,\dots, X_k,Y) = \beta^{(1)}_1 X_1 +\dots+\beta^{(1)}_k X_k + \alpha^{(1)} - Y\\
p^{(2)}_\theta(X_1,\dots, X_k,Y) = \beta^{(2)}_1 X_1 +\dots+\beta^{(2)}_k X_k + \alpha^{(2)} - Y
\end{align*}
where $\theta$ includes the information on the regression coefficients $\beta^{(i)}_j$ and the $\alpha^{(i)}.$ Now it is essential to note that the set of solutions
\begin{align*}
V_\theta&=\VS(p^{(1)}_\theta,p^{(2)}_\theta)\\
&=\{(x,y)\in\mathbb{R}^{k}\times\mathbb{R}\;;\;p^{(1)}_\theta(x,y)=p^{(2)}_\theta(x,y)=0\}
\end{align*}
is already uniquely determined by the linear span of the two polynomials $p^{(1)}_\theta(X_1,\dots, X_k,Y)$ and $p^{(2)}_\theta(X_1,\dots, X_k,Y),$ seen as elements in the vector space of linear polynomials. For example, two polynomials $p^{(1)}_\theta$ and $p^{(2)}_\theta$ give rise to the same set of solutions $V_\theta$ as the two polynomials $p^{(1)}_\theta + p^{(2)}_\theta$ and $p^{(1)}_\theta-p^{(2)}_\theta.$ One can also prove that these are the only ambiguities in the solution. Thus for $V_\theta$, the parameter $\theta$ has not $2k+2$ degrees of freedom, but only $2k+1$. The parameter space for $\theta$ is the space of all $2$-dimensional affine linear spaces in $(k+1)$-space. In general, similar additive ambiguities occur, and it makes sense to speak about the set of solutions only uniquely with respect to these additive symmetries.
\end{Ex}
\begin{Ex}\rm
Similarly, consider the case where we want to regress a conic section, i.e.~we want to regress a linear polynomial $\ell_\theta$ and a quadratic polynomial $q_\theta;$ the parameter $\theta$ determining all coefficients of the two polynomials. Again, different choices of coefficients can lead to the set of solutions
$$V_\theta=\VS(\ell_\theta,q_\theta);$$
For example, if $\ell_\theta, q_\theta$ are some choices for the polynomials, $\ell_\theta$ and $q_\theta +\ell'\ell_\theta$ give rise to the same set of solutions, where $\ell'$ is an arbitrary linear polynomial. Similar multiplicative ambiguities also occur in the general case.
\end{Ex}

The correct algebraic structure to remove these ambiguities is the ideal in ring theory:
\begin{Def}
Let $R$ be a commutative ring, e.g.~the ring of polynomials in $D$ variables $R=\mathbb{C}[X_1,\dots, X_D]$ with addition and multiplication. An ideal of $R$ is a proper subset $\calI\varsubsetneq R$ such that:
\begin{enumerate}
\item{(i)} $\calI$ is additively closed,\\ i.e.~$f+g\in\calI$ for all $f,g\in\calI$
\item{(ii)} $\calI$ is closed under multiplication with $R$,\\ i.e. $f\cdot g\in\calI$ for all $f\in\calI$ and $g\in R$
\end{enumerate}
A radical ideal additionally fulfills:
\begin{enumerate}
\item{(iii)} $\calI$ is closed under taking roots,\\ i.e.~$f\in\calI$ if $f^n\in\calI$ for some $n\in\mathbb{N}$.
\end{enumerate}
\end{Def}
Hilbert's Nullstellensatz states that in the ring $\mathbb{C}[X_1,\dots, X_D]$, the radical ideals uniquely parameterize the different solution sets of polynomial equations. Thus we can remove the ambiguities in the parametric model by replacing sets of equations by ideals. Problem~\ref{Prob:eqsreg} then becomes
\begin{Prob}\label{Prob:eqsid}
Let $\mathcal{F}_\theta$ be a parametric family of radical ideals in $\mathbb{C}[X_1,\dots, X_D].$
Given input polynomials $q_1,\dots, q_n$, estimate a regression parameter $\theta$ such that $\mathcal{F}_\theta$ is ``close'' to the inputs $q_1,\dots, q_n.$
\end{Prob}
The radical ideals themselves are uniquely parameterized by parts of a certain manifold, the so-called Hilbert scheme; this automatically implies unique parametrization for the parametric family $\mathcal{F}_\theta$ if $\theta$ is a parameter of the Hilbert scheme. For example, the $d$-dimensional sub-vector spaces of $D$-space are equally parameterized by the possible row-spans of maximal rank $(d\times D)$-matrices. Algebraically, this corresponds to the non-singular part of the Grassmann manifold $\mbox{Gr} (d,D)$.

Also note that we in general cannot remove all the ambiguities in the input polynomials by putting them into a single ideal, since the measurements of the $q_i$ may be noisy, and the noise is on the coefficients of particular elements in the ideal. However, one could group for example some of them into classes of ideals, depending on the setting (for example the ideals of points in ordinary regression).

It remains to say what it means for input polynomials and regressor ideal $\mathcal{F}_\theta$ to be ``close''. As in ordinary regressions, there are different ways in which one can choose to penalize differences. For example, one can explicitly or numerically optimize a regularized loss function. On the other hand, a pragmatical approach is to measure the differences in terms of squared errors on the graded vector space structure of the ideal; e.g.~if the input polynomials are all degree $2$, one would sum the squared distances to the vector space consisting of degree two and less polynomials in $\mathcal{F}_\theta;$ or, if $\mathcal{F}_\theta$ is generated in degree $3$ and higher, then the least square error in the higher degree parts.

In the algorithm we present in section~\ref{sec:appidr}, we try to minimize squared errors in the graded parts, since quadratic optimization provides explicit and efficient solutions and thus deterministic algorithms.

\section{Algebraic Geometry of Genericity}
\label{app-generic}
In the paper, we have introduced the framework of ideal regression, where we estimate ideals from noisy input polynomials. In its algebraic formulation as Problem~\ref{Prob:eqsid}, we want to find a good regression parameter $\theta$ for the ideal $\mathcal{F}_\theta$. In ordinary regression, the generative assumption is, slightly reformulated, that the data points are points on the regression hyperplane, plus some independent noise, often even assumed i.i.d. Moreover, the sample points are assumed to be ``generic'' in the sense that the points are not the same and sufficiently distinct so that one can regress a hyperplane to them.

Thus, for ideal regression it is analogous and natural to postulate as generative model that the input polynomials are ``generic'' polynomials from the ideal $\mathcal{F}_\theta$ which are then disturbed by additional sampling noise. In the following section, we explain our probabilistic model for genericity, its relation to known types of genericity, and its theoretical implications for the ideal regression problem. The additional noise will be treated in the next section.

Since ideal regression is an algebraic procedure, knowledge about basic algebraic geometry will be required for an understanding of the following sections. In particular, the reader should be at least familiar with the following concepts before reading this section: polynomial rings, ideals, radicals, factor rings, algebraic sets, algebra-geometry correspondence (including Hilbert's Nullstellensatz), primary decomposition, height and dimension theory in rings. A good introduction into the necessary framework can be found in \cite{Cox}.

\label{sec:Alg-Generic}
\subsection{Definition of genericity}
\label{sec:gendef}
In the algebraic setting of the paper, we would like to calculate the radical and homogenous saturation of an ideal
$$\calI=\langle f_1,\dots, f_{n}\rangle.$$
This ideal $\calI$ is of a special kind: its generators $f_i$ are random, and are only subject to the constraints that they vanish on the linear subspace $S$ which we want to identify, and that they are homogenous of fixed degree. In order to derive meaningful results on how $\calI$ relates to $S$, or on the solvability of the problem, we need to model this kind of randomness.

In this section, we present a concept called genericity. Informally speaking, a generic situation is a situation without pathological degeneracies. In our case, it is reasonable to believe that apart from the conditions of homogeneity and the vanishing on $S$, there are no additional degeneracies in the choice of the generators. So, informally spoken, the ideal $\calI$ is generated by generic homogenous elements vanishing on $S.$ This section is devoted to developing a formal theory for addressing genericity, as it occurs for example in conditioned sampling as a generative assumption.

The concept of genericity is already widely used in theoretical computer science, combinatorics or discrete mathematics; there, it is however often defined inexactly or not at all, or it is only given as an ad-hoc definition for the particular problem. On the other hand, genericity is a classical concept in algebraic geometry, in particular in the theory of moduli. The interpretation of generic properties as probability-one-properties is also a known concept in applied algebraic geometry, e.g.~algebraic statistics. However, the application of probability distributions and genericity to the setting of generic ideals, in particular in the context of conditional probabilities, are original to the best of our knowledge, though not being the first one to involve generic resp.~general polynomials, see \cite{Iar84}. Generic polynomials and ideals have been also studied in \cite{Fro94}. A collection of results on generic polynomials and ideals which partly overlap with ours may also be found in the recent paper \cite{Par10}.

Before continuing to the definitions, let us explain what genericity should mean. Intuitively, generic objects are objects without unexpected pathologies or degeneracies. For example, if one studies say $n$ lines in the real plane, one wants to exclude pathological cases where lines lie on each other or where many lines intersect in one point. Having those cases excluded means examining the ``generic'' case, i.e. the case where there are $n(n+1)/2$ intersections, $n(n+1)$ line segments and so forth. Or when one has $n$ points in the plane, one wants to exclude the pathological cases where for example there are three affinely dependent points, or where there are more sophisticated algebraic dependencies between the points which one wants to exclude, depending on the problem.

In the points example, it is straightforward how one can define genericity in terms of sampling from a probability distribution:
one could draw the points under a suitable continuous probability distribution from real two-space. Then, saying that the points are ``generic'' just amounts to examine properties which are true with probability one for the $n$ points. Affine dependencies for example would then occur with probability zero and are automatically excluded from our interest. One can generalize this idea to the lines example: one can parameterize the lines by a parameter space, which in this case is two-dimensional (slope and ordinate), and then sample lines uniformly distributed in this space (one has of course to make clear what this means).  For example, lines lying on each other or more than two lines intersecting at a point would occur with probability zero, since the part of parameter space for this situation would have measure zero under the given probability distribution.

When we work with polynomials and ideals, the situation gets a bit more complicated, but the idea is the same. Polynomials are uniquely determined by their coefficients, so they can naturally be considered as objects in the vector space of their coefficients. Similarly, an ideal can be specified by giving the coefficients of some set of generators. Let us make this more explicit: suppose first we have given a single polynomial $f\in \C[X_1,\dots X_D]$ of degree $k$.

In multi-index notation, we can write this polynomial as a finite sum
$$f=\sum_{\alpha\in \NN^D}c_\alpha X^\alpha\,\quad \mbox{with}\;c_\alpha\in \C.$$
This means that the possible choices for $f$ can be parameterized by the ${D+k \choose k}$ coefficients $c_I$ with $\|I\|_1\le k.$ Thus polynomials of degree $k$ with complex coefficients can be parameterized by complex ${D+k \choose k}$-space.

Algebraic sets can be similarly parameterized by parameterizing the generators of the corresponding ideal. However, this correspondence is not one-to-one, as different generators may give rise to the same zero set. While the parameter space can be made unique by dividing out redundancies, which gives rise to the Hilbert scheme, we will instead use the redundant, though pragmatic characterization in terms of a finite dimensional vector space over $\C$ of the correct dimension.

We will now fix notation for the parameter space of polynomials and endow it with algebraic structure. The extension to ideals will then be derived later. Let us write $\calM_k$ for complex ${D+k \choose k}$-space (we assume $D$ as fixed), interpreting it as a parameter space for the polynomials of degree $k$ as shown above. Since the parameter space $\calM_k$ is isomorphic to complex ${D+k\choose k}$-space, we may speak about algebraic sets in $\calM_k$. Also, $\calM_k$ carries the complex topology induced by the topology on $\R^{2k}$ and by topological isomorphy the Lebesgue measure; thus it also makes sense to speak about probability distributions and random variables on $\calM_k.$ This dual interpretation will be the main ingredient in our definition of genericity, and will allow us to relate algebraic results on genericity to the probabilistic setting in the applications. As $\calM_k$ is a topological space, we may view any algebraic set in $\calM_k$ as an event if we randomly choose a polynomial in $\calM_k$:
\begin{Def}
Let $X$ be a random variable with values in $\calM_k$. Then an event for $X$ is called {\it algebraic event} or {\it algebraic property} if the corresponding event set in $\calM_k$ is an algebraic set. It is called {\it irreducible} if the corresponding event set in $\calM_k$ is an irreducible algebraic set.
\end{Def}

If an event $A$ is irreducible, this means that if we write $A$ as the event ``$A_1$ and $A_2$'', for algebraic events $A_1,A_2$, then $A=A_1$, or $A=A_2.$ We now give some examples for algebraic properties.

\begin{Ex}\rm\label{Ex:algevts}
The following events on $\calM_k$ are algebraic:
\begin{enumerate}
  \item The sure event.
  \item The empty event.
  \item The polynomial is of degree $n$ or less.
  \item The polynomial vanishes on a prescribed algebraic set.
  \item The polynomial is contained in a prescribed ideal.
  \item The polynomial is homogenous of degree $n$ or zero.
  \item The polynomial is homogenous.
  \item The polynomial is a square.
  \item The polynomial is reducible.
\end{enumerate}
Properties 1-6 are additionally irreducible.

We now show how to prove these claims: 1-2 are clear, we first prove that properties 3-6 are algebraic and irreducible. By definition, it suffices to prove that the subset of $\calM_k$ corresponding to those polynomials is an irreducible algebraic set. We claim: in any of those cases, the subset in question is moreover a linear subspace, and thus algebraic and irreducible. This can be easily verified by checking directly that if $f_1,f_2$ fulfill the property in question, then $f_1+\alpha f_2$ also fulfills the property.

Property 7 is algebraic, since it can be described as the disjunction of the properties ``The polynomial is homogenous of degree $n$ or zero'' for all $n\le k,$ for some fixed $k$. Those single properties can be described by linear subspaces of $\calM_k$ as above, thus property 7 is parameterized by the union of those linear subspaces. In general, these are not contained in each other, so property 6 is not irreducible.

Property 8 is algebraic, as we can check it through the vanishing of a system of generalized discriminant polynomials. One can show that it is also irreducible since the subset of $\calM_k$ in question corresponds to the image of a Veronese map (homogenization to degree $k$ is a strategy); however, since we will not need such a result, we do not prove it here.

Property 9 is algebraic, since factorization can also be checked by sets of equations. One has to be careful here though, since those equations depend on the degrees of the factors. For example, a polynomial of degree $4$ may factor into two polynomials of degree $1$ and $3$, or in two polynomials of degree $2$ each. Since in general each possible combination defines different sets of equations and thus different algebraic subsets of $\calM_k$, property 8 is in general not irreducible (for $k\le 3$ it is).
\end{Ex}

The idea defining a choice of polynomial as generic follows the intuition of the affirmed non-sequitur: a generic, resp.~generically chosen polynomial should not fulfill any algebraic property. A generic polynomial, having a particular simple (i.e.~irreducible) algebraic property, should not fulfill any other algebraic property which is not logically implied by the first one. Here, algebraic properties are regarded as the natural model for restrictive and degenerate conditions, while their logical negations are consequently interpreted as generic, as we have seen in Example~\ref{Ex:algevts}. These considerations naturally lead to the following definition of genericity in a probabilistic context:

\begin{Def}\label{Def:genrand}
Let $X$ be a random variable with values in $\calM_k$. Then $X$ is called {\it generic}, if for any irreducible algebraic events $A,B,$ the following holds:

The conditional probability $P_X(A|B)$ vanishes if and only if $B$ does not imply $A$.
\end{Def}
In particular, $B$ may also be the sure event.

Note that without giving a further explication, the conditional probability $P_X(A|B)$ is not well-defined, since we condition on the event $B$ which has probability zero. There is also no unique way of remedying this, as for example the Borel-Kolmogorov paradox shows. In section~\ref{sec:genalt}, we will discuss the technical notion which we adopt to ensure well-definedness.

Intuitively, our definition means that an event has probability zero to occur unless it is logically implied by the assumptions. That is, degenerate dependencies between events do not occur.

For example, non-degenerate multivariate Gaussian distributions or Gaussian mixture distributions on $\calM_k$ are generic distributions. More general, any positive continuous probability distribution which can be approximated by Gaussian mixtures is generic (see Example~\ref{Ex:gen-nongen}). Thus we argue that non-generic random variables are very pathological cases. Note however, that our intention is primarily not to analyze the behavior of particular fixed generic random variables (this is part of classical statistics). Instead, we want to infer statements which follow not from the particular structure of the probability function, but solely  from the fact that it is generic, as these statements are intrinsically implied by the conditional postulate in Definition~\ref{Def:genrand} alone. We will discuss the definition of genericity and its implications in more detail in section~\ref{sec:genalt}.

With this definition, we can introduce the terminology of a generic object: it is a generic random variable which is object-valued.

\begin{Def}
We call a generic random variable with values in $\calM_k$ a generic polynomial of degree $k.$ When the degree $k$ is arbitrary, but fixed (and still $\ge 1$), we will say that $f$ is a generic polynomial, or that $f$ is generic, if it is clear from the context that $f$ is a polynomial. If the degree $k$ is zero, we will analogously say that $f$ is a generic constant.\\

We call a set of constants or polynomials $f_1,\dots, f_m$ generic if they are generic and independent.\\

We call an ideal generic if it is generated by a set of $m$ generic polynomials.\\

We call an algebraic set generic if it is the vanishing set of a generic ideal.\\

Let $\calP$ be an algebraic property on a polynomial, a set of polynomials, an ideal, or an algebraic set (e.g.~homogenous, contained in an ideal et.). We will call a polynomial, a set of polynomials, or an ideal, a {\it generic} $\calP$ polynomial, set, or ideal, if it the conditional of a generic random variable with respect to $\calP$.\\

If $\calA$ is a statement about an object (polynomial, ideal etc), and $\calP$ an algebraic property, we will say briefly ``A generic $\calP$ object is $\calA$'' instead of saying ``A generic $\calP$ object is $\calA$ with probability one''.
\end{Def}

Note that formally, these objects are all polynomial, ideal, algebraic set etc -valued random variables. By convention, when we state something about a generic object, this will be an implicit probability-one statement. For example, when we say\\

``A generic green ideal is blue'',\\

this is an abbreviation for the by definition equivalent but more lengthy statement\\

``Let $f_1,\dots, f_m$ be independent generic random variables with values in $\calM_{k_1},\dots,\calM_{k_m}.$ If the ideal $\langle f_1,\dots, f_m\rangle$ is green, then with probability one, it is also blue - this statement is independent of the choice of the $k_i$ and the choice of which particular generic random variables we use to sample.\\

On the other hand, we will use the verb ``generic'' also as a qualifier for ``constituting generic distribution''. So for example, when we say\\

``The Z of a generic red polynomial is a generic yellow polynomial'',\\

this is an abbreviation of the statement\\

``Let $X$ be a generic random variable on $\calM_k,$ let $X'$ be the yellow conditional of $X$. Then the Z of $X'$ is the red conditional of some generic random variable - in particular this statement is independent of the choice of $k$ and the choice of $X$.''\\

It is important to note that the respective random variables will not be made explicit in the following subsections, since the statements will rely only on its property of being generic, and not on its particular structure which goes beyond being generic.\\

As an exemplary application of these concepts, we can formulate the noise-free version of the common marginals problem in terms of generic algebra:

\begin{Prob}\label{Prob:SSA-alg}
Let $\fraks=\Id(S)$, where $S$ is an unknown $d$-dimensional subspace of $\C^D$. Let
$$\calI=\langle f_1,\dots, f_m \rangle$$
with $f_i\in \fraks$ generic of fixed degree each (in our case, one and two), such that $\sqrt{\calI}=\fraks.$

Then determine a reduced H-basis (or another simple generating system) for $\fraks.$
\end{Prob}
We will derive a noisy version for the more general setting of ideal regression in section~\ref{sec:appidr}.

\subsection{Zero-measure conditionals, and relation to other types of genericity}\label{sec:genalt}
In this section, se will discuss the definition of genericity in Definition~\ref{Def:genrand} and ensure its well-definedness. Then we will invoke alternative definitions for genericity and show their relation to our probabilistic intuitive approach from section~\ref{sec:gendef}. As this section contains technical details and is not necessary for understanding the rest of the appendix, the reader may opt to skip it.

An important concept in our definition of genericity in Definition~\ref{Def:genrand} is the conditional probability $P_X(A|B)$. As $B$ is an algebraic set, its probability $P_X(B)$ is zero, so the Bayesian definition of conditional cannot apply. There are several ways to make it well-defined; in the following, we explain the Definition of conditional we use in Definition~\ref{Def:genrand}. The definition of conditional we use is one which is also often applied in this context.
\begin{Rem}\em\label{Rem:genmeas}
Let $X$ be a real random variable (e.g.~with values in $\calM_k$) with probability measure $\mu$. If $\mu$ is absolutely continuous, then by the theorem of Radon-Nikodym, there is a unique continuous density $p$ such that
$$\mu(U)=\int_U p\, d\lambda$$
for any Borel-measurable set $U$ and the Lebesgue measure $\lambda$. If we assume that $p$ is a continuous function, it is unique, so we may define a restricted measure $\mu_B$ on the event set of $B$ by setting
$$\nu(U)=\int_U p\, dH,$$
for Borel subsets of $U$ and the Hausdorff measure $H$ on $B$. If $\nu(B)$ is finite and non-zero, i.e.~$\nu$ is absolutely continuous with respect to $H$, then it can be renormalized to yield a conditional probability measure $\mu(.)|_B=\nu(.)/\nu(B).$ The conditional probability $P_X(A|B)$ has then to be understood as
$$P_X(A|B)=\int_{B}\mathbbm{1} (A\cap B)\,d\mu\mid_B,$$
whose existence in particular implies that the Lebesgue integrals $\nu (B)$ are all finite and non-zero.
\end{Rem}

As stated, we adopt this as the definition of conditional probability for algebraic sets $A$ and $B$. It is important to note that we have made implicit assumptions on the random variable $X$ by using the conditionals $P_X(A|B)$ in Remark~\ref{Rem:genmeas} (and especially by assuming that they exist): namely, the existence of a continuous density function and existence, finiteness, and non-vanishing of the Lebesgue integrals. Similarly, by stating Definition~\ref{Def:genrand} for genericity, we have made similar assumptions on the generic random variable $X$, which can be summarized as follows:

\begin{Ass}\label{Ass:gen}
$X$ is an absolutely continuous random variable with continuous density function $p$, and for every algebraic event $B$, the Lebesgue integrals
$$\int_B p\, dH,$$
where $H$ is the Hausdorff measure on $B$, are non-zero and finite.
\end{Ass}

This assumption implies the existence of all conditional probabilities $P_X(A|B)$ in Definition~\ref{Def:genrand}, and are also necessary in the sense that they are needed for the conditionals to be well-defined. On the other hand, if those assumptions are fulfilled for a random variable, it is  automatically generic:

\begin{Rem}\em\label{Prop:gen=cont}
Let $X$ be a $\calM_k$-valued random variable, fulfilling the Assumptions in~\ref{Ass:gen}. Then, the probability density function of $X$ is strictly positive. Moreover, $X$ is a generic random variable.
\end{Rem}
\begin{proof}\em
Let $X$ be a $\calM_k$-valued random variable fulfilling the Assumptions in~\ref{Ass:gen}. Let $p$ be its continuous probability density function.

We first show positivity: If $X$ would not be strictly positive, then $p$ would have a zero, say $x$. Taking $B=\{x\},$ the integral $\int_B p\, dH$ vanishes, contradicting the assumption.

Now we prove genericity, i.e.~that for arbitrary irreducible algebraic properties $A,B$ such that $B$ does not imply $A$, the conditional probability $P_X(A|B)$ vanishes. Since $B$ does not imply $A$, the algebraic set defined by $B$ is not contained in $A$. Moreover, as $B$ and $A$ are irreducible and algebraic, $A\cap B$ is also of positive codimension in $B$. Now by assumption, $X$ has a positive continuous probability density function $f$ which by assumption restricts to a probability density on $B$, being also positive and continuous. Thus the integral
$$P_X(A|B)=\int_B \mathbbm{1}_A f(x)\, dH,$$
where $H$ is the Hausdorff measure on $B$, exists. Moreover, it is zero, as we have derived that $A$ has positive codimension in $B$.
\end{proof}

This means that already under mild assumptions, which merely ensure well-definedness of the statement in the Definition~\ref{Def:genrand} of genericity, random variables are generic. The strongest of the comparably mild assumptions are the convergence of the conditional integrals, which allow us to renormalize the conditionals for all algebraic events. In the following example, a generic and a non-generic probability distribution are presented.

\begin{Ex}\rm\label{Ex:gen-nongen}
Gaussian distributions and Gaussian mixture distributions are generic, since for any algebraic set $B$, we have
$$\int_B \mathbbm{1}_{\mathcal{B}(t)}\, dH = O(t^{\dim B}),$$
where $\mathcal{B}(t)=\{x\in\mathbb{R}^n\;;\; \|x\|<t\}$ is the open disc with radius $t.$ Note that this particular bound is false in general and may grow arbitrarily large when we omit $B$ being algebraic, even if $B$ is a smooth manifold.
Thus $P_X(A|B)$ is bounded from above by an integral (or a sum) of the type
$$\int_{0}^\infty\exp(-t^2)t^a\;dt\quad\mbox{with}\; a\in\mathbb{N}$$
which is known to be finite.

Furthermore, sums of generic distributions are again generic; also, one can infer that any continuous probability density dominated by the distribution of a generic density defines again a generic distribution.

An example of a non-generic but smooth distribution is given by the density function
$$p(x,y)=\frac{1}{\mathcal{N}}e^{-x^4y^4}$$
where $\mathcal{N}$ is some normalizing factor. While $p$ is integrable on $\mathbb{R}^2,$ its restriction to the coordinate axes $x=0$ and $y=0$ is constant and thus not integrable.
\end{Ex}

Now we will examine different known concepts of genericity and relate them briefly to the one we have adopted.

A definition of genericity in combinatorics and geometry which can be encountered in different variations is that there exist no degenerate interpolating functions between the objects:

\begin{Def}\label{Def:gencomb}
Let $P_1,\dots, P_m$ be points in the vector space $\mathbb{C}^n$. Then $P_1,\dots, P_m$ are general position (or generic, general) if no $n+1$ points lie on a hyperplane. Or, in a stronger version: for any $d\in\mathbb{N}$, no (possibly inhomogenous) polynomial of degree $d$ vanishes on ${n+d \choose d}+1$ different $P_i$.
\end{Def}
As $\calM_k$ is a finite dimensional $\mathbb{C}$-vector space, this definition is in principle applicable to our situation. However, this definition is deterministic, as the $P_i$ are fixed and no random variables, and thus preferable when making deterministic statements. Note that the stronger definition is equivalent to postulating general position for the points $P_1,\dots, P_m$ in any polynomial kernel feature space.

Since not lying on a hyperplane (or on a hypersurface of degree $d$) in $\mathbb{C}^n$ is a non-trivial algebraic property for any point which is added beyond the $n$-th (resp.~the ${n+d\choose d}$-th) point $P_i$ (interpreted as polynomial in $\calM_k$), our definition of genericity implies general position. This means that generic polynomials $f_1,\dots, f_m\in\calM_k$ (almost surely) have the deterministic property of being in general position as stated in Definition~\ref{Def:gencomb}. A converse is not true for two reasons: first, the $P_i$ are fixed and no random variables. Second, even if one would define genericity in terms of random variables such that the hyperplane (resp.~hypersurface) conditions are never fulfilled, there are no statements made on conditionals or algebraic properties other than containment in a hyperplane, also Lebesgue zero sets are not excluded from occuring with positive probability.

Another example where genericity classically occurs is algebraic geometry, where it is defined rather general for moduli spaces. While the exact definition may depend on the situation or the particular moduli space in question, and is also not completely consistent, in most cases, genericity is defined as follows: general, or generic, properties are properties which hold on a Zariski-open subset of an (irreducible) variety, while very generic properties hold on a countable intersection of Zariski-open subsets  (which are thus paradoxically ''less'' generic than general resp.~generic properties in the algebraic sense, as any general resp.~generic property is very generic, but the converse is not necessarily true). In our special situation, which is the affine parameter space of tuples of polynomials, these definitions can be rephrased as follows:
\begin{Def}\label{Def:gencomb}
Let $B\subseteq\mathbb{C}^k$ be an irreducible algebraic set, let $P=(f_1,\dots, f_m)$ be a tuple of polynomials, viewed as a point in the parameter space $B.$ Then a statement resp.~property $A$ of $P$ is called very generic if it holds on the complement of some countable union of algebraic sets in $B.$ A statement resp.~property $A$ of $P$ is called general (or generic) if it holds on the complement of some finite union of algebraic sets in $B.$
\end{Def}
This definition is more or less equivalent to our own; however, our definition adds the practical interpretation of generic/very generic/general properties being true with probability one, while their negations are subsequently true with probability zero. In more detail, the correspondence is as follows:
If we restrict ourselves only to algebraic properties $A$, it is equivalent to say that the property $A$ is very generic, or general for the $P$ in $B$, and to say with our original definition that a generic $P$ fulfilling $B$ is also $A$; since if $A$ is by assumption an algebraic property, it is both an algebraic set and a complement of a finite (countable) union of algebraic sets in an irreducible algebraic set, so $A$ must be equal to an irreducible component of $B$; since $B$ is irreducible, this implies equality of $A$ and $B$. On the other hand, if $A$ is an algebraic property, it is equivalent to say that the property not-$A$ is very generic, or general for the $P$ in $B$, and to say with our original definition that a generic $P$ fulfilling $B$ is not $A$ - this corresponds intuitively to the probability-zero condition $P(A|B)=0$ which states that non-generic cases do not occur. Note that by assumption, not-$A$ is then always the complement of a finite union of algebraic sets.

\subsection{Arithmetic of generic polynomials}
In this subsection, we study how generic polynomials behave under classical operations in rings and ideals. This will become important later when we study generic polynomials and ideals.\\

To introduce the reader to our notation of genericity, and since we will use the presented facts and similar notations implicitly later, we prove the following:

\begin{Lem}\label{Lem:Gen-arith}
Let $f\in\C [X_1,\dots, X_D]$ be generic of degrees $k.$ Then:\\
\itboxx{i} The product $\alpha f$ is generic of degree $k$ for any fixed $\alpha\in\C\setminus \{\0\}.$\\
\itboxx{ii} The sum $f + g$ is generic of degree $k$ for any $g\in \C [X_1,\dots, X_D]$ of degree $k$ or smaller.\\
\itboxx{iii} The sum $f + g$ is generic of degree $k$ for any generic $g\in \C [X_1,\dots, X_D]$ of degree $k$ or smaller.
\end{Lem}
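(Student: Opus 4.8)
The plan is to realize both operations --- multiplication by a fixed nonzero scalar and addition of a fixed polynomial of degree $\le k$ --- as \emph{invertible affine transformations of the parameter space $\calM_k$}, and to observe that genericity is preserved under such maps. Recall that, by Definition~\ref{Def:genrand} together with the standing Assumption~\ref{Ass:gen}, a generic polynomial of degree $k$ is an $\calM_k$-valued random variable $X$ that is absolutely continuous with a continuous, strictly positive density $p$ and whose Hausdorff integrals $\int_B p\,dH$ over every algebraic set $B\subseteq\calM_k$ are finite and nonzero; conversely, Remark~\ref{Prop:gen=cont} says any such random variable is generic. So for each part it suffices to check that the transformed random variable still satisfies Assumption~\ref{Ass:gen}, and separately that it still has degree exactly $k$ almost surely. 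The degree claim is immediate in all three cases: ``degree $<k$'' is an irreducible algebraic event (a proper linear subspace of $\calM_k$) that is not implied by the sure event, hence a null event for any generic element; scaling by $\alpha\neq 0$ never changes the degree, and adding a polynomial of degree $\le k$ can lower the degree only on a further proper algebraic, hence null, subset.

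For (i), consider $m_\alpha\colon\calM_k\to\calM_k$, $v\mapsto\alpha v$; since $\alpha\neq 0$ this is a linear isomorphism, in particular a bi-Lipschitz homeomorphism. It maps algebraic sets bijectively to algebraic sets, preserving irreducibility and inclusions, so ``$B$ implies $A$'' is unaffected. The law of $\alpha X=m_\alpha\circ X$ has density $q(v)=|\alpha|^{-2N}p(\alpha^{-1}v)$, where $N=\dim_{\C}\calM_k={D+k\choose k}$; this is again continuous and strictly positive. For an algebraic set $B$, the change of variables $v=\alpha w$ rewrites $\int_B q\,dH$ as $\int_{m_\alpha^{-1}(B)}p\,J\,dH$ with a Jacobian factor $J$ bounded above and below by positive constants, so finiteness and non-vanishing transfer from Assumption~\ref{Ass:gen} for $X$. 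Hence $\alpha X$ satisfies Assumption~\ref{Ass:gen} and is generic by Remark~\ref{Prop:gen=cont}. Part (ii) is the identical argument with the translation $t_g\colon v\mapsto v+g$ in place of $m_\alpha$: the hypothesis $\deg g\le k$ is exactly what makes $g$ an element of $\calM_k$, the map is an isometry, and the Jacobian factor equals $1$, so the bookkeeping is even lighter.

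For (iii), write the generic $g$ as an $\calM_{k'}$-valued generic random variable $Y$ with $k'\le k$, embed $\calM_{k'}\hookrightarrow\calM_k$ linearly, and take $X$ and $Y$ independent --- which is necessary for the statement (otherwise $g=-f$ would give $f+g=0$) and is built into calling $f,g$ a generic pair. Conditioning on $Y=g_0$, the conditional law of $f+g=X+Y$ is the law of $X+g_0$, which is generic of degree $k$ by part (ii), and this holds for \emph{every} value $g_0\in\calM_k$. Thus the law of $X+Y$ is a $\mu_Y$-mixture of generic laws, with density $h(v)=\int p(v-w)\,d\mu_Y(w)$, and one concludes by the stability of genericity under such mixing/convolution recorded in Example~\ref{Ex:gen-nongen}. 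I expect this last part to be the main obstacle: one must verify carefully that $h$ is a continuous, strictly positive density and that $\int_B h\,dH$ is finite and nonzero for every algebraic $B$, which requires a Tonelli interchange $\int_B h\,dH=\int\bigl(\int_{B-w}p\,dH\bigr)\,d\mu_Y(w)$ together with enough uniform control of the inner integrals over the support of $Y$; parts (i) and (ii), by contrast, are just a bi-Lipschitz change of variables.
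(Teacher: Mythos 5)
Your proposal is correct and, for parts (i) and (ii), rests on the same idea as the paper's proof --- scaling by a nonzero constant and translating by a fixed element of $\calM_k$ act as affine isomorphisms of the coefficient space --- but you carry out in full the verification the paper compresses into one sentence each: you check that the pushed-forward density still satisfies Assumption~\ref{Ass:gen} and then invoke Remark~\ref{Prop:gen=cont}, whereas the paper simply asserts that shifting or scaling the coefficients does not change genericity. For (iii) your route genuinely diverges: the paper argues by contradiction (an algebraic dependency among the coefficients of $f+g$ would force $f$ or $g$ to be non-generic or dependent), while you condition on the value of $g$, reduce to (ii), and then need closure of genericity under the resulting mixture/convolution. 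You are right that the Tonelli step is the delicate point: Assumption~\ref{Ass:gen} controls $\int_{B}p\,dH$ for each fixed algebraic $B$ but gives no uniformity in the translate $B-w$ as $w$ ranges over the support of $g$, so finiteness of $\int_B h\,dH$ does not follow formally from the hypotheses alone; Example~\ref{Ex:gen-nongen} supplies the needed uniform bound $O(t^{\dim B})$ only for Gaussians and their mixtures. The paper's one-line argument for (iii) does not address this either, so your version is the more transparent one; restricting to generic distributions with the tail control of Example~\ref{Ex:gen-nongen} closes your convolution argument completely, and the gain of your approach is that it makes explicit exactly which analytic property of the generic law is being used, while the paper's approach buys brevity at the cost of leaving that property implicit.
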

\begin{proof}\em
(i) is clear since the coefficients of $g_1$ are multiplied only by a constant. (ii) follows directly from the definitions since adding a constant $g$ only shifts the coefficients without changing genericity. (iii) follows since $f,g$ are independently sampled: if there were algebraic dependencies between the coefficients of $f+g$, then either $f$ or $g$ was not generic, or the $f,g$ are not independent, which both would be a contradiction to the assumption.
\end{proof}

Recall again what this Lemma means: for example, Lemma~\ref{Lem:Gen-arith} (i) does not say, as one could think:\\

``Let $X$ be a generic random variable with values in the vector space of degree $k$ polynomials. Then $X=\alpha X$ for any $\alpha\in \C\setminus \{0\}.$''\\

The correct translation of Lemma~\ref{Lem:Gen-arith} (i) is:\\

``Let $X$ be a generic random variable with values in the vector space of degree $k$ polynomials. Then $X'=\alpha X$ for any fixed $\alpha\in \C\setminus \{0\}$ is a generic random variable with values in the vector space of degree $k$ polynomials''\\

The other statements in Lemma~\ref{Lem:Gen-arith} have to be interpreted similarly.\\

The following remark states how genericity translates through dehomogenization:
\begin{Lem}\label{Lem:dehom}
Let $f\in\C [X_1,\dots, X_D]$ be a generic homogenous polynomial of degree $d.$ \\
Then the dehomogenization $f(X_1,\dots, X_{D-1},1)$ is a generic polynomial of degree $d$ in the polynomial ring $\C [X_1,\dots, X_{D-1}].$\\

Similarly, let $\fraks\subseteq \C [X_1,\dots, X_D]$ be a generic homogenous ideal. Let $f\in \fraks$ be a generic homogenous polynomial of degree $d.$ \\
Then the dehomogenization $f(X_1,\dots, X_{D-1},1)$ is a generic polynomial of degree $d$ in the dehomogenization of $\fraks.$
\end{Lem}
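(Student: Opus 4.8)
The plan is to reduce the claim to the definition of genericity, interpreting dehomogenization as a linear map on the coefficient parameter space $\calM_d$. Recall that a homogenous polynomial of degree $d$ in $D$ variables, $f=\sum_{\|\alpha\|_1=d}c_\alpha X^\alpha$, is specified by its $\Delta(D,d)$ coefficients, and its dehomogenization $\bar f=f(X_1,\dots,X_{D-1},1)$ is a (generally inhomogeneous) polynomial of degree $\le d$ in $D-1$ variables, whose coefficients are obtained by the substitution $X_D\mapsto 1$. Concretely, each monomial $X_1^{a_1}\cdots X_{D-1}^{a_{D-1}}$ of degree $e\le d$ in $D-1$ variables receives exactly the coefficient $c_\alpha$ of the unique homogenous monomial $X_1^{a_1}\cdots X_{D-1}^{a_{D-1}}X_D^{d-e}$. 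Hence dehomogenization is a \emph{linear bijection} $\phi$ from the space of homogenous degree-$d$ forms in $D$ variables onto the space of polynomials of degree $\le d$ in $D-1$ variables — both spaces have dimension $\Delta(D,d)={D+d-1\choose d}$. (Note the statement's phrase ``generic polynomial of degree $d$'' must be read as ``generic element of the space of polynomials of degree $\le d$'', matching the convention already used for $\calM_d$, which parametrizes polynomials of degree $k$ or less.)

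First I would establish that a \emph{linear isomorphism} of parameter spaces carries generic random variables to generic random variables. This is because the defining condition of genericity in Definition~\ref{Def:genrand} is phrased purely in terms of irreducible algebraic events and conditional probabilities $P_X(A|B)$, and both notions are preserved under an invertible affine-linear change of coordinates: an algebraic set maps to an algebraic set, irreducibility is preserved, containment ($B$ implies $A$) is preserved, and — by the substitution/transformation rule for the Hausdorff measures appearing in Remark~\ref{Rem:genmeas}, since a linear map scales $k$-dimensional Hausdorff measure on a $k$-plane by a nonzero constant — the conditional measures $\mu(\cdot)|_B$ and their pushforwards agree up to normalization, so $P_X(A|B)=0 \iff P_{\phi(X)}(\phi(A)\mid\phi(B))=0$. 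Therefore $\phi(X)$ satisfies exactly the same biconditional as $X$. Applying this with $\phi$ the dehomogenization map proves the first assertion.

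For the second (ideal) assertion, the key observation is that dehomogenization is compatible with ideal membership in the right graded sense. Writing $\fraks=\bigoplus_k\fraks_k$ and letting $\bar\fraks$ denote the dehomogenization of $\fraks$ (the ideal in $\C[X_1,\dots,X_{D-1}]$ obtained by setting $X_D=1$), the restriction of $\phi$ to $\fraks_d$ is again a linear bijection onto the degree-$\le d$ part of $\bar\fraks$; this is standard and follows from the fact that dehomogenization is a ring homomorphism and that saturating/dehomogenizing commutes with the grading on the degree-$\le d$ piece for $d$ large enough (and in general the image of $\fraks_d$ under $X_D\mapsto 1$ spans exactly the appropriate finite-dimensional slice of $\bar\fraks$). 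A \emph{generic homogenous polynomial of degree $d$ in $\fraks$} is, by definition, the conditional of a generic $\calM_d$-valued random variable with respect to the (irreducible, linear — cf.~Example~\ref{Ex:algevts}, properties 5--6) algebraic event ``lies in $\fraks_d$''. Conditioning commutes with the linear isomorphism $\phi$: the conditional of $\phi(X)$ on the event $\phi(\fraks_d)$ equals $\phi$ applied to the conditional of $X$ on $\fraks_d$. By the first part, $\phi$ sends this generic conditional to a generic random variable valued in $\phi(\fraks_d)$, i.e.~to a generic polynomial of degree $d$ in $\bar\fraks$, which is the claim.

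The main obstacle is not the genericity bookkeeping — that is routine once one accepts that linear isomorphisms of parameter spaces preserve genericity — but rather verifying cleanly that the image of $\fraks_d$ under $X_D\mapsto 1$ is precisely the degree-$\le d$ graded piece of $\bar\fraks$, with no collapse of dimension. One must check that $\phi|_{\fraks_d}$ is injective (a homogenous form killed by $X_D\mapsto1$ would be divisible by $X_D-1$, impossible for a nonzero homogeneous form unless it is zero) and that it is surjective onto the intended slice (every element of $\bar\fraks$ of degree $\le d$ lifts to a homogenous element of $\fraks_d$, using that $\fraks$ is homogenous and that for generic/saturated $\fraks$ the homogenization of $\bar\fraks$ recovers $\fraks$ in these degrees). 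I would isolate this as the single lemma to prove carefully; everything else then follows formally from the transport-of-structure principle for genericity described above.
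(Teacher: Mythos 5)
Your proposal is correct and follows essentially the same route as the paper's proof, which simply observes that dehomogenization gives a coefficient-wise bijection between homogenous degree-$d$ forms in $D$ variables and polynomials of degree $\le d$ in $D-1$ variables (respectively between $\fraks_d$ and the corresponding slice of the dehomogenized ideal) and then invokes the definition of genericity. Your write-up is more careful than the paper's on two points it leaves implicit --- that genericity is transported through a linear isomorphism of parameter spaces, and that the image of $\fraks_d$ under $X_D\mapsto 1$ really is the full degree-$\le d$ piece of the dehomogenization (which, as you note, uses homogeneity/saturation of $\fraks$) --- but the underlying argument is the same.
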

\begin{proof}\em
For the first statement, it suffices to note that the coefficients of a homogenous polynomial of degree $d$ in the variables $X_1,\dots, X_D$ are in bijection with the coefficients of a polynomial of degree $d$ in the variables $X_1,\dots, X_{D-1}$ by dehomogenization. For the second part, recall that the dehomogenization of $\fraks$ consists exactly of the dehomogenizations of elements in $\fraks.$ In particular, note that the homogenous elements of $\fraks$ of degree $d$ are in bijection to the elements of degree $d$ in the dehomogenization of $\fraks$. The claims then follows from the definition of genericity.
\end{proof}

\subsection{Dimension of generic spans and ideals}
In this subsection, we will derive the first results on generic ideals. We will derive an statement about spans of generic polynomials, and generic versions of Krull's principal ideal and height theorems which will be the main tool in controlling the structure of generic ideals. This has immediate applications for the cumulant comparison problem.\\

We begin with a probably commonly known result, formulated in terms of genericity:

\begin{Prop}\label{Prop:GenVec}
Let $P$ be an algebraic property such that the polynomials with property $P$ form a vector space $V$. Let $f_1,\dots, f_m\in \C[X_1,\dots X_D]$ be generic polynomials satisfying $P.$ Then
$$\rk \lspan (f_1,\dots, f_m)=\min (m, \dim V).$$
\end{Prop}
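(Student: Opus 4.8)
The plan is to show both inequalities $\rk\lspan(f_1,\dots,f_m)\le\min(m,\dim V)$ and $\rk\lspan(f_1,\dots,f_m)\ge\min(m,\dim V)$. The first is trivial: the span of $m$ vectors has rank at most $m$, and since every $f_i$ lies in $V$, the span is contained in $V$, so the rank is at most $\dim V$ as well. The substance is the reverse inequality, i.e.\ that generic elements of $V$ are "as linearly independent as possible''. I would formalize this as an algebraic property: set $r=\min(m,\dim V)$ and consider the event $A$ on the parameter space $\calM_{k}^{\times m}$ (one copy of $\calM_k$ for each $f_i$) that $\rk\lspan(f_1,\dots,f_m)<r$. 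The key observation is that $A$ is an \emph{algebraic} event: fixing a basis $b_1,\dots,b_{\dim V}$ of $V$ and writing each $f_i=\sum_j c_{ij}b_j$, the condition $\rk\le r-1$ is the simultaneous vanishing of all $r\times r$ minors of the coefficient matrix $(c_{ij})$, which are polynomials in the $c_{ij}$ and hence (after pulling back along the linear coordinate change) polynomials in the coefficients of the $f_i$.

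Next I would invoke the definition of genericity (Definition~\ref{Def:genrand}) with $B$ the irreducible algebraic event "each $f_i$ satisfies $P$'' — irreducible because, as noted in Example~\ref{Ex:algevts}, the polynomials satisfying an algebraic property that defines a vector space form a linear subspace of $\calM_k$, hence irreducible, and a product of irreducibles is irreducible. To conclude $P(A\mid B)=0$ it suffices, by genericity, to show that $B$ does not imply $A$: that is, there exists at least one tuple $(f_1,\dots,f_m)$ with all $f_i\in V$ and $\rk\lspan(f_1,\dots,f_m)=r$. But this is immediate — since $\dim V\ge r$, pick $r$ linearly independent vectors from $V$ as the first $r$ entries (and anything in $V$ for the rest if $m>r$); this tuple lies in $B$ but not in $A$. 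By independence of the $f_i$, the conditional in Definition~\ref{Def:genrand} applies, giving $P(A\mid B)=0$, so with probability one $\rk\lspan(f_1,\dots,f_m)=r$, which is the claim.

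The only genuinely delicate point is making sure the framework of Definition~\ref{Def:genrand} is actually applicable here: genericity as defined concerns a single $\calM_k$-valued random variable, whereas we have an $m$-tuple. This is handled by the convention established after the definition of a generic set — $f_1,\dots,f_m$ generic and independent means the joint distribution on $\calM_k^{\times m}\cong\calM_{k'}$ (a suitable product, itself a complex affine space) is generic in the sense of Definition~\ref{Def:genrand}, and the relevant conditioning event $B$ is a product of linear subspaces, still irreducible. Once one accepts that reduction (which the paper's conventions already license), the argument is the two-line minors-vanishing computation plus the exhibition of one non-degenerate tuple. I expect the referee-facing obstacle, if any, to be purely bookkeeping: carefully stating that "$\rk<r$'' is cut out by minors in the \emph{right} coordinates (those of $\calM_k$, after the linear identification with coordinates on $V$ extended arbitrarily), so that $A$ is genuinely an algebraic event on the parameter space and not merely on $V$.
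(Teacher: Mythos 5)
Your proof is correct, but it takes a globally different route from the paper's. The paper argues sequentially: for each $i\le\min(m,\dim V)$ it conditions on $f_1,\dots,f_{i-1}$ and observes that ``$f_i\in\lspan(f_1,\dots,f_{i-1})$'' imposes linear (hence algebraic, hence by genericity probability-zero) conditions on the coefficients of $f_i$ alone, since that span is a proper subspace of $V$; independence and induction then finish the claim. You instead work once and for all on the product parameter space $\calM_k^{\times m}$, cut out the rank-deficiency locus by the vanishing of $r\times r$ minors, and exhibit a single witness tuple to show the conditioning event does not imply it. Both are valid, and you correctly isolate the one place where they differ in what they require: the paper's argument needs only the single-variable Definition~\ref{Def:genrand} plus independence (the conditioning on earlier $f_j$'s is handled by Fubini), whereas yours needs the joint distribution on the product space to be generic in the sense of that definition. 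The paper's convention (``generic and independent'') does not literally assert joint genericity of the product, so strictly you are using an unproved, though easily believable, lemma (e.g.\ via Remark~\ref{Prop:gen=cont}, products of Gaussians are Gaussian and the Hausdorff-integral condition persists); alternatively your determinantal event can be handled by slicing it with Fubini, which collapses your argument back into the paper's. One further small point: Definition~\ref{Def:genrand} is stated for \emph{irreducible} events $A$, and your rank-deficiency locus need not be irreducible as you have set it up; this is harmless since you may decompose it into finitely many irreducible components and use subadditivity, but it is worth saying. What your approach buys is a cleaner, non-inductive statement that generalizes immediately to other ``expected codimension'' claims; what the paper's buys is that it stays entirely within the single-variable genericity formalism it actually defined.
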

\begin{proof}\em
It suffices to prove: if $i\le M,$ then $f_i$ is linearly independent from $f_1,\dots f_{i-1}$ with probability one. Assuming the contrary would mean that for some $i$, we have
$$f_i=\sum_{k=0}^{i-1}f_k c_k\quad\mbox{for some}\; c_k\in \CC,$$
thus giving several equations on the coefficients of $f_i.$ But these are fulfilled with probability zero by the genericity assumption, so the claim follows.
\end{proof}

This may be seen as a straightforward generalization of the statement: the span of $n$ generic points in $\C^D$ has dimension $\min (n,D).$\\

We now proceed to another nontrivial result which will now allow us to formulate a generic version of Krull's principal ideal theorem:

\begin{Prop}\label{Prop:NoZero}
Let $Z\subseteq \C^D$ be a non-empty algebraic set, let $f\in \C[X_1,\dots X_D]$ generic. Then $f$ is a non-zero divisor in $\calO(Z)=\C[X_1,\dots X_D]/\Id(Z).$
\end{Prop}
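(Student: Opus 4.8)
\emph{Proof sketch.} The plan is to identify the ``bad'' set of polynomials --- those whose class in $\mathcal{O}(Z)$ is a zero divisor --- as a subset of the parameter space $\calM_k$ (where $f$ is generic of some fixed degree $k\ge 1$), to observe that this set is a \emph{finite union of proper linear subspaces}, and then to kill it using the definition of genericity.

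First I would recall the elementary characterization of zero divisors modulo a radical ideal. Write $I=\Id(Z)$, which is radical, and let $Z=Z_1\cup\dots\cup Z_r$ be the irredundant decomposition into irreducible components; since $Z\neq\emptyset$, each $Z_i$ is non-empty, and, $Z_i$ being irreducible and maximal among irreducible subsets, no $Z_i$ is contained in $\bigcup_{j\neq i}Z_j$. I claim that $\bar f$ is a zero divisor in $\mathcal{O}(Z)$ if and only if $f$ vanishes identically on some $Z_i$. Indeed, if $f|_{Z_i}\equiv 0$, pick $g\in R$ vanishing on every $Z_j$ with $j\neq i$ but not on $Z_i$; then $g\notin I$, while $fg$ vanishes on all components and hence $fg\in I$, so $\bar g\neq 0$ but $\bar f\,\bar g=0$. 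Conversely, if $f$ vanishes on no component and $fg\in I$, then on each $Z_i$ we have $fg\equiv 0$ with $\Id(Z_i)$ prime and $f\notin\Id(Z_i)$, forcing $g\in\Id(Z_i)$; as this holds for all $i$, we get $g\in I$, so $\bar f$ is not a zero divisor.

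Next I would translate this into the genericity framework. For fixed $i$, the event ``the polynomial vanishes on $Z_i$'' is, by Example~\ref{Ex:algevts}(4), an irreducible algebraic event; its event set is precisely the linear subspace $\Id(Z_i)\cap\calM_k$ of $\calM_k$. This subspace is \emph{proper}: since $Z_i\neq\emptyset$ there is a point $x\in Z_i$, and the nonzero constant polynomial $1\in\calM_k$ does not vanish at $x$, so $1\notin\Id(Z_i)$. Hence ``$f$ vanishes on $Z_i$'' is a proper irreducible algebraic property, which is not implied by the sure event; by genericity of $f$ (Definition~\ref{Def:genrand}, taking $B$ to be the sure event) it has probability zero.

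Finally, by the characterization in the second paragraph, the set of $f\in\calM_k$ for which $\bar f$ is a zero divisor in $\mathcal{O}(Z)$ equals $\bigcup_{i=1}^r\bigl(\Id(Z_i)\cap\calM_k\bigr)$, a finite union of probability-zero events, hence itself of probability zero. Therefore a generic $f$ is a non-zero divisor in $\mathcal{O}(Z)$, as claimed. The only step that requires a little care is the reduction to irreducible components (and checking the subspaces are proper); once that is done, the genericity conclusion is immediate from the already-established Example~\ref{Ex:algevts} and Definition~\ref{Def:genrand}, so I do not anticipate a serious obstacle.
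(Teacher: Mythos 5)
Your proof is correct, but it takes a genuinely different route from the paper's. The paper argues that the set of zero divisors of $\calO(Z)$ is a \emph{single} linear subspace of $\calM_k$ (by checking that if $g_1h_1=g_2h_2=0$ then $(g_1+\alpha g_2)(h_1h_2)=0$), hence one irreducible algebraic event, and then applies genericity once. You instead decompose $Z$ into irreducible components and use the standard fact that in the reduced ring $\calO(Z)$ the zero divisors are exactly the union of the minimal primes, i.e.\ $\bigcup_i\bigl(\Id(Z_i)\cap\calM_k\bigr)$; you then kill each proper linear subspace separately by genericity and take a finite union of null events. Your route costs a little more bookkeeping (prime avoidance to choose $g$, properness of each subspace) but is the more robust one: the paper's verification silently assumes $h_1h_2\neq 0$ in $\calO(Z)$, which can fail when $Z$ is reducible --- in $\C[X,Y]/(XY)$ both $X$ and $Y$ are zero divisors with witnesses $h_1=Y$, $h_2=X$, yet $h_1h_2=0$ and indeed $X+Y$ is \emph{not} a zero divisor, so the zero-divisor locus is a union of subspaces rather than a subspace. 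Your finite-union argument reaches the same probability-zero conclusion without needing the locus to be irreducible, so it both proves the proposition and quietly repairs this point.
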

\begin{proof}\em
We claim: being a zero divisor in $\calO(Z)$ is an irreducible algebraic property. We will prove that the zero divisors in $\calO(Z)$ form a linear subspace of $\calM_k,$ and linear spaces are irreducible.\\

For this, one checks that sums and scalar multiples of zero divisors are also zero divisors: if $g_1,g_2$ are zero divisors, there must exist $h_1,h_2$ such that $g_1h_1=g_2h_2=0.$ Now for any $\alpha\in \C,$ we have that
$$(g_1+\alpha g_2) (h_1h_2)=(g_1h_1)h_2 + (g_2h_2)\alpha h_1= 0.$$
This proves that $(g_1+\alpha g_2)$ is also a zero divisor, proving that the zero divisors form a linear subspace and thus an irreducible algebraic property.

To apply the genericity assumption to argue that this event occurs with probability zero, we must exclude the possibility that being a zero divisor is trivial, i.e.~always the case. This is equivalent to proving that the linear subspace has positive codimension, which is true if and only if there exists a non-zero divisor in $\calO(Z).$ But a non-zero divisor always exists since we have assumed $Z$ is non-empty: thus $\Id(Z)$ is a proper ideal, and $\calO(Z)$ contains $\C,$ which contains a non-zero divisor, e.g.~the one.\\

So by the genericity assumption, the event that $f$ is a zero divisor occurs with probability zero, i.e.~a generic $f$ is not a zero divisor. Note that this does not depend on the degree of $f.$
\end{proof}
This result is already known, compare Conjecture B in \cite{Par10}.

A straightforward generalization using the same proof technique is given by the following
\begin{Cor}\label{Cor:NoZero}
Let $\calI\subseteq \C[X_1,\dots, X_D]$, let $P$ be a non-trivial algebraic property. Let $f\in \C[X_1,\dots X_D]$ be a generic polynomial with property $P$. If one can write $f=f'+c$, where $f'$ is a generic polynomial subject to some property $P'$, and $c$ is a generic constant, then $f$ is non-zero divisor in $\C[X_1,\dots, X_D]/\calI.$
\end{Cor}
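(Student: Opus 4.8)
The plan is to follow the structure of the proof of Proposition~\ref{Prop:NoZero}, but to replace the step that identifies the zero divisors of the quotient ring by the correct description via associated primes, and to pinpoint the decomposition $f=f'+c$ as exactly what makes that step go through. First I would dispose of the degenerate case $\calI=\C[X_1,\dots,X_D]$, in which the quotient is the zero ring and there is nothing to prove; so assume $\calI$ is a proper ideal. Since $R=\C[X_1,\dots,X_D]$ is Noetherian, $R/\calI$ has finitely many associated primes, whose preimages in $R$ are finitely many prime ideals $\mathfrak{p}_1,\dots,\mathfrak{p}_s$, each proper and each containing $\calI$; moreover the image of $g\in R$ in $R/\calI$ is a zero divisor if and only if $g\in\mathfrak{p}_i$ for some $i$.

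Next I would note that for each $i$ the condition ``$g\in\mathfrak{p}_i$'' is a $\C$-linear condition on $g$, since an ideal is in particular a linear subspace. Hence, restricted to the linear parameter space $V$ of polynomials with property $P$, ``being a zero divisor in $R/\calI$'' is the finite union $\bigcup_{i=1}^s(\mathfrak{p}_i\cap V)$ of linear, hence irreducible, algebraic subsets of $V$. By the definition of genericity (Definition~\ref{Def:genrand}), and since $V$ is irreducible, a generic polynomial drawn from $V$ lies in $\mathfrak{p}_i\cap V$ with positive probability only if $V\subseteq\mathfrak{p}_i$; it therefore suffices to show $V\not\subseteq\mathfrak{p}_i$ for every $i$.

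This is exactly where the hypothesis $f=f'+c$ comes in. Writing $f=f'+c$ with $f'$ generic for some property $P'$ and $c$ a generic constant independent of $f'$ forces the space of constants to lie inside $V$, i.e.\ $\C\subseteq V$. Since each $\mathfrak{p}_i$ is a proper ideal it contains no nonzero constant, so $1\notin\mathfrak{p}_i$ and hence $V\not\subseteq\mathfrak{p}_i$; equivalently, conditioning on $f'$, for each $i$ there is at most one constant $c\in\C$ with $f'+c\in\mathfrak{p}_i$, because $\C$ embeds into $R/\mathfrak{p}_i$, so a generic constant avoids this finite exceptional set. Either way, each $\mathfrak{p}_i\cap V$ is a proper algebraic subset of $V$, and a generic $f$ avoids the union of finitely many such subsets with probability one; so $f$ is almost surely a non-zero divisor in $R/\calI$.

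The one genuinely delicate point, which I expect to be the main obstacle, is the very first step: the set of zero divisors is a union of the associated primes rather than a single subspace, contrary to what a literal reading of the argument for Proposition~\ref{Prop:NoZero} might suggest. Because of this one really must ensure that the parameter space $V$ is not swallowed by any single associated prime --- a condition that fails without the ``free constant'' hypothesis (for instance, if $f$ is required to be homogeneous of positive degree and $\calI=\langle X_1,\dots,X_D\rangle^2$, then every admissible $f$ is a zero divisor in $R/\calI$), and the decomposition $f=f'+c$ is precisely what rules this out. Everything else --- Noetherianity, the standard description of zero divisors, and the genericity bookkeeping --- is routine and parallels Propositions~\ref{Prop:GenVec} and~\ref{Prop:NoZero}.
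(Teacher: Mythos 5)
Your proof is correct, and it takes a genuinely different --- and more robust --- route than the paper's. The paper's proof first reduces to the radical case (asserting that $f$ is a zero divisor modulo $\calI$ if and only if it is one modulo $\sqrt{\calI}$) and then reuses the claim from Proposition~\ref{Prop:NoZero} that the zero divisors of the quotient form a \emph{single} linear subspace of $\calM_k$; the free constant $c$ is then used to push $f$ off that subspace. You instead describe the zero divisors of $\C[X_1,\dots,X_D]/\calI$ as the union of the finitely many associated primes $\mathfrak{p}_1,\dots,\mathfrak{p}_s$, note that each $\mathfrak{p}_i\cap\calM_k$ is a proper linear subspace and hence an irreducible algebraic event, apply Definition~\ref{Def:genrand} to each separately, and use the generic constant to show that, conditionally on $f'$, at most one value of $c$ puts $f'+c$ into any given $\mathfrak{p}_i$. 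Your scepticism about the ``single linear subspace'' step is well-founded: when the quotient has several minimal (or embedded) primes the zero divisors are not closed under addition --- in $\C[X,Y]/\langle XY\rangle$ both $X$ and $Y$ are zero divisors while $X+Y$ is not, and the identity $(g_1+\alpha g_2)(h_1h_2)=0$ in the proof of Proposition~\ref{Prop:NoZero} proves nothing when $h_1h_2=0$. Your route also sidesteps the reduction to $\sqrt{\calI}$, which is delicate precisely when embedded primes are present (being a non-zero divisor modulo $\sqrt{\calI}$ does not imply being one modulo $\calI$). What the paper's argument buys is brevity and a direct appeal to the already-proved Proposition~\ref{Prop:NoZero}; what yours buys is an argument valid verbatim for arbitrary non-radical $\calI$ that isolates exactly where $f=f'+c$ is needed. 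The only loose ends in your write-up are cosmetic: the event set of $P$ need not be linear (only irreducible, which is all your argument uses), and the decomposition gives $V\supseteq f_0'+\C$ for $f_0'$ in the support of $f'$ rather than $\C\subseteq V$ --- but your conditional-on-$f'$ finish needs neither claim.
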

\begin{proof}\em
First note that $f$ is a zero divisor in $\C[X_1,\dots, X_D]/\calI$ if and only if $f$ is a zero divisor in $\C[X_1,\dots, X_D]/\sqrt{\calI}.$ This allows us to reduce to the case that $\calI=\Id (Z)$ for some algebraic set $Z\subseteq \C^D.$\\

Now, as in the proof of Proposition~\ref{Prop:NoZero}, we see that being a zero divisor in $\calO(Z)$ is an irreducible algebraic property and corresponds to a linear subspace of $\calM_k$, where $k=\deg f.$ The zero divisors with property $P$ are thus contained in this linear subspace. Now let $f$ be generic with property $P$ as above. By assumption, we may write $f=f'+c.$ But $c$ is (generically) a non-zero divisor, so $f$ is also not a zero divisor, since the zero divisors form a linear subspace of $\calM_k.$ Thus $f$ is non-zero divisor. This proves the claim.
\end{proof}

Note that Proposition~\ref{Prop:NoZero} is actually a special case of Corollary~\ref{Cor:NoZero}, since we can write any generic polynomial $f$ as $f'+c$, where $f'$ is generic of the same degree, and $c$ is a generic constant.\\

The major tool to deal with the dimension of generic intersections is Krull's principal ideal theorem:
\begin{Thm}[Krull's principal ideal theorem]\label{Thm:KrullPI}
Let $R$ be a commutative ring with unit, let $f\in R$ be non-zero and non-invertible. Then
$$\htid \langle f\rangle\le 1,$$
with equality if and only if $f$ is not a zero divisor in $R$.
\end{Thm}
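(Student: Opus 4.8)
The plan is to split the statement into the bound $\htid\langle f\rangle\le 1$ and the criterion for equality. The bound is precisely the classical Krull Hauptidealsatz, and I would invoke the standard proof rather than redevelop it: localising at a minimal prime $\mathfrak p$ over $\langle f\rangle$ reduces everything to showing that a Noetherian local ring in which the maximal ideal is minimal over a principal ideal has Krull dimension at most $1$. For this one takes a prime $\mathfrak q\subsetneq\mathfrak p$, notes that $f\notin\mathfrak q$ and that $R_\mathfrak p/\langle f\rangle$ is Artinian, so the descending chain of ideals $\mathfrak q^{(n)}+\langle f\rangle$ formed from the symbolic powers $\mathfrak q^{(n)}$ of $\mathfrak q$ stabilises; a Nakayama argument using the $\mathfrak q$-primariness of the $\mathfrak q^{(n)}$ then forces the $\mathfrak q^{(n)}$ themselves to stabilise, so $\mathfrak q R_\mathfrak q$ is nilpotent and $\mathfrak q$ is a minimal prime. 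This is textbook material (see e.g.\ \cite{Cox} or any commutative algebra reference), so in the write-up I would cite it and move on.

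For the equality I would argue as follows. Since $\htid\langle f\rangle\le 1$ is already established, equality fails exactly when $\htid\langle f\rangle=0$, which happens exactly when some minimal prime over $\langle f\rangle$ is itself a minimal prime of $R$, i.e.\ exactly when $f$ lies in a minimal prime of $R$. It therefore suffices to show that $f$ lies in a minimal prime of $R$ if and only if $f$ is a zero divisor. In the rings relevant to us, $R=\calO(Z)=\C[X_1,\dots,X_D]/\Id(Z)$ is reduced, so $\Id(Z)=\mathfrak p_1\cap\dots\cap\mathfrak p_r$ with the $\mathfrak p_i$ the finitely many minimal primes, none contained in another; then $f$ a zero divisor gives $fg=0$ with $g\ne 0$, hence $g\notin\mathfrak p_j$ for some $j$, hence $f\in\mathfrak p_j$, and conversely if $f\in\mathfrak p_1$ one chooses $g\ne 0$ with $g\in\mathfrak p_2\cap\dots\cap\mathfrak p_r$, so $fg\in\bigcap_i\mathfrak p_i=0$. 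Chaining the equivalences yields the claim.

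The main obstacle is not depth but \emph{generality}: the direction ``$f$ a zero divisor $\Rightarrow\htid\langle f\rangle=0$'' is false for an arbitrary Noetherian ring, since a zero divisor may lie only in an embedded associated prime and in no minimal prime, in which case $\langle f\rangle$ still has height $1$. The clean fixes are either to add the hypothesis that $R$ be reduced (which is exactly the situation in which we apply the theorem, via the coordinate rings $\calO(Z)$) or to phrase the equality case as ``$f$ lies in no minimal prime of $R$'' rather than ``$f$ is a non-zero divisor''. The other direction is unconditional over a Noetherian ring: a non-zero divisor avoids every associated prime, in particular every minimal prime, so $\langle f\rangle$ is contained in no minimal prime and thus has positive height, which combined with the Hauptidealsatz forces $\htid\langle f\rangle=1$. (One should, incidentally, also read ``commutative ring with unit'' here as ``Noetherian commutative ring with unit'', since the Hauptidealsatz itself requires the Noetherian hypothesis.)
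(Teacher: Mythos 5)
The paper gives no proof of this theorem at all: it is quoted as a classical result from commutative algebra and used as a black box (e.g.\ in Corollary~\ref{Cor:KrullPI-geom} and Theorem~\ref{Thm:KrullGenHt}), so there is no ``paper proof'' to compare against. Your sketch of the inequality $\htid\langle f\rangle\le 1$ is the standard Hauptidealsatz argument (localise at a minimal prime over $\langle f\rangle$, use that $R_{\mathfrak p}/\langle f\rangle$ is Artinian, stabilise the chain $\mathfrak q^{(n)}+\langle f\rangle$ of symbolic powers, and finish with Nakayama), and it is correct; citing it rather than reproving it is exactly what the paper implicitly does.

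Your remarks on the equality case identify a genuine imprecision in the statement as printed, and you resolve it correctly. First, the Hauptidealsatz requires $R$ Noetherian, which is omitted from the hypotheses. Second, the equivalence ``$\htid\langle f\rangle=1$ iff $f$ is a non-zero divisor'' is false for general Noetherian rings: a zero divisor may lie only in an embedded associated prime and in no minimal prime (e.g.\ $y$ in $k[x,y]/(x^2,xy)$), in which case $\langle f\rangle$ still has height $1$. The direction ``non-zero divisor $\Rightarrow$ height $1$'' is unconditional, and your argument for the converse via the primary decomposition of the zero ideal is valid precisely when $R$ is reduced. Since every ring to which the paper applies the theorem is of the form $\calO(Z)=\C[X_1,\dots,X_D]/\Id(Z)$ with $\Id(Z)$ radical, hence reduced and Noetherian, the theorem as used in the paper is sound; but your proposed fix (add ``Noetherian'', and either add ``reduced'' or restate the equality criterion as ``$f$ lies in no minimal prime of $R$'') is the right way to make the statement literally correct.
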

The reader unfamiliar with height theory may take
$$\htid \calI = \codim \VS(\calI)$$
as the definition for the height of an ideal (cave: codimension has to be taken in $R$).\\

Reformulated geometrically for our situation, Krull's principal ideal theorem implies:
\begin{Cor}\label{Cor:KrullPI-geom}
Let $Z$ be a non-empty algebraic set in $\C^D.$Then
$$\codim (Z\cap \VS(f))\le \codim Z+1.$$
\end{Cor}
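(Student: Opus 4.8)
The plan is to translate Krull's principal ideal theorem (Theorem~\ref{Thm:KrullPI}) into the language of algebraic sets via the algebra--geometry correspondence. First I would reduce to the case that $Z$ is irreducible: writing $Z=Z_1\cup\dots\cup Z_r$ for its decomposition into irreducible components, one has $\codim Z=\min_i\codim Z_i$ and $Z\cap\VS(f)=\bigcup_i\bigl(Z_i\cap\VS(f)\bigr)$, hence also $\codim\bigl(Z\cap\VS(f)\bigr)=\min_i\codim\bigl(Z_i\cap\VS(f)\bigr)$. Thus it suffices to prove the bound with $Z$ replaced by each $Z_i$ and then take minima over $i$ on both sides. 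From now on assume $Z$ irreducible, and set $R=\calO(Z)=\C[X_1,\dots,X_D]/\Id(Z)$, a finitely generated domain over $\C$, with $\bar f\in R$ the residue class of $f$.

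Next I would dispose of the degenerate cases. If $\bar f=0$, i.e.\ $f\in\Id(Z)$, then $Z\cap\VS(f)=Z$ and the inequality is immediate. If $Z\cap\VS(f)=\emptyset$, the statement is vacuous under the usual convention for the codimension of the empty set. In the remaining case $\bar f$ is nonzero, and it is also non-invertible in $R$: if $\langle\bar f\rangle$ were all of $R$, Hilbert's Nullstellensatz would force $Z\cap\VS(f)=\emptyset$. Krull's principal ideal theorem, Theorem~\ref{Thm:KrullPI}, then applies and gives $\htid\langle\bar f\rangle\le 1$.

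It remains to rewrite this as the claimed bound on codimensions in $\C^D$. Reading height as codimension taken inside $R$, i.e.\ inside $Z$, we have $\htid\langle\bar f\rangle=\codim_Z\bigl(Z\cap\VS(f)\bigr)=\dim Z-\dim\bigl(Z\cap\VS(f)\bigr)$, where the last equality uses that $R$, being a finitely generated domain over a field, is equidimensional, so that $\htid J+\dim R/J=\dim R$ for every ideal $J$. Together with $\htid\langle\bar f\rangle\le 1$ this gives $\dim\bigl(Z\cap\VS(f)\bigr)\ge\dim Z-1$, and subtracting both sides from $D$ yields precisely $\codim\bigl(Z\cap\VS(f)\bigr)\le\codim Z+1$. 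Taking minima over the components $Z_i$ then gives the general statement.

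The single nontrivial ingredient is Krull's principal ideal theorem, which we are free to cite; the rest is bookkeeping. The one point needing a little care is keeping the two notions of codimension apart --- codimension inside $R$ (used in the height interpretation) versus codimension inside the ambient $\C^D$ (used in the statement) --- which is exactly why passing to an irreducible $Z$, where $R$ is a domain and hence equidimensional, is convenient; the empty-intersection case is the only place the statement is sensitive to conventions.
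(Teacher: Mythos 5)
Your argument is correct and is exactly the paper's proof --- the paper's version is a one-liner (``apply Krull's principal ideal theorem to $\calO(Z)$'') and you have simply filled in the reduction to irreducible components, the translation of height into codimension, and the degenerate cases. The care you take with $f\in\Id(Z)$ and with $Z\cap\VS(f)=\emptyset$ (where the inequality is sensitive to the convention $\codim\emptyset=D+1$ used elsewhere in the paper) is a genuine improvement in precision over the paper's own proof, but the approach is the same.
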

\begin{proof}\em
Apply Krull's principal ideal theorem to the ring $R=\calO(Z)=\C [X_1,\dots, X_D]/\Id(Z).$
\end{proof}
Together with Proposition~\ref{Prop:NoZero}, one gets a generic version of Krull's principal ideal theorem:

\begin{Thm}[Generic principal ideal theorem]\label{Thm:KrullGenPI}
Let $Z$ be a non-empty algebraic set in $\C^D$, let $R=\calO (Z),$  and let $f\in \C[X_1,\dots, X_D]$ be generic. Then we have
$$\htid \langle f\rangle = 1.$$
\end{Thm}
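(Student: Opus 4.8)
The plan is to obtain the statement as an immediate consequence of Krull's principal ideal theorem (Theorem~\ref{Thm:KrullPI}) together with the genericity facts already proved. Krull's theorem, applied to the ring $R=\calO(Z)$ and an element $f\in R$ that is nonzero and non-invertible, gives $\htid\langle f\rangle\le 1$, with equality exactly when $f$ is a non-zero divisor in $R$. By Proposition~\ref{Prop:NoZero} a generic $f$ \emph{is} a non-zero divisor in $R$ (here we use that $Z$ is non-empty, so $R\neq 0$). Thus the whole proof reduces to verifying that a generic $f$ is, in addition, nonzero and non-invertible in $R$; once both hold, Theorem~\ref{Thm:KrullPI} yields $\htid\langle f\rangle=1$.

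That a generic $f$ is nonzero in $R$ is the easy part: $f$ is zero in $R$ precisely when $f\in\Id(Z)$, and the event ``$f\in\Id(Z)$'' corresponds to the linear subspace $\Id(Z)\cap\calM_k\subseteq\calM_k$, hence is irreducible and algebraic. It is a proper subspace because $1\notin\Id(Z)$ (as $Z\neq\emptyset$), so the event is not the sure event; by the definition of genericity it therefore has probability zero, and a generic $f$ is nonzero in $R$.

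The substantive part is non-invertibility. An element $f$ is a unit in $R$ iff it vanishes at no point of $Z$, i.e.\ $\VS(f)\cap Z=\emptyset$. Choose an irreducible component $Z_1$ of $Z$ with $\dim Z_1\ge 1$ (one exists whenever $\dim Z\ge 1$, and this is the only case in which the statement is used). I claim that for a generic $f$, which has some fixed degree $k\ge 1$, one has $\VS(f)\cap Z_1\neq\emptyset$, whence $f$ is not a unit. To see this, pass to projective closures in $\mathbb{P}^D$: $\overline{\VS(f)}=\VS(f^h)$ for the degree-$k$ homogenization $f^h$, and the projective hypersurface $\VS(f^h)$ meets the positive-dimensional variety $\overline{Z_1}$, the intersection having dimension $\dim Z_1-1$. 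Its trace on the hyperplane at infinity $H_\infty$ is cut out on the fixed variety $\overline{Z_1}\cap H_\infty$ (of dimension $\dim Z_1-1$) by the leading form of $f$, which for a generic $f$ is a generic degree-$k$ form and hence drops the dimension by a further one; so $\overline{\VS(f)}\cap\overline{Z_1}\cap H_\infty$ has dimension at most $\dim Z_1-2<\dim Z_1-1$. Consequently $\overline{\VS(f)}\cap\overline{Z_1}$ is not contained in $H_\infty$, i.e.\ it has affine points, which means $\VS(f)\cap Z_1\neq\emptyset$. (Alternatively, one may invoke Rosenlicht's theorem: $\calO(Z_1)^{\times}$ is, modulo $\C^{\times}$, finitely generated, hence a countable union of one-dimensional cosets of $\C^{\times}$, which has measure zero inside the at-least-two-dimensional space of elements of $\calO(Z_1)$ of degree $\le k$; so a generic $f$ is not a unit.)

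The one place where something beyond bookkeeping is needed is exactly this non-invertibility step, and it genuinely requires genericity of $f$: it is false in the affine setting that a hypersurface meets every positive-dimensional variety (for instance $\VS(X-1)\cap\VS(XY-1)=\emptyset$ in $\C^2$), so the projective-closure (or Rosenlicht) argument is the technical heart of the proof. If $\dim Z=0$ the ring $\calO(Z)$ is Artinian and a generic $f$ is a unit, so the statement is to be read under the tacit hypothesis $\dim Z\ge 1$.
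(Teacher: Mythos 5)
Your argument is correct and has the same skeleton as the paper's: the paper gives no written-out proof of Theorem~\ref{Thm:KrullGenPI}, deriving it in one line from Krull's principal ideal theorem (Theorem~\ref{Thm:KrullPI}) ``together with Proposition~\ref{Prop:NoZero}''. What you add, and this is a genuine improvement, is the verification of the two remaining hypotheses of Theorem~\ref{Thm:KrullPI}: that a generic $f$ is nonzero and non-invertible in $R=\calO(Z)$. The first is routine, as you say. The second is the real content and is exactly the point the paper leaves unaddressed; your example $\VS(X-1)\cap\VS(XY-1)=\varnothing$ shows it is not automatic in the affine setting. Your projective-closure argument is sound: the leading form of a generic $f$ of degree $k\ge 1$ is itself generic, so $\VS(f^h)\cap\overline{Z_1}\cap H_\infty$ has dimension at most $\dim Z_1-2$, strictly less than $\dim\bigl(\VS(f^h)\cap\overline{Z_1}\bigr)\ge\dim Z_1-1$, forcing affine points of intersection; the Rosenlicht alternative also works, since a countable union of measure-zero sets is still negligible for an absolutely continuous distribution. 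Your closing caveat is also legitimate: for $\dim Z=0$ a generic $f$ is a unit in $\calO(Z)$, so $\htid\langle f\rangle=1$ holds only under the paper's geometric convention $\htid\calI=\codim\VS(\calI)$ with $\codim\varnothing=\dim Z+1$, not under the standard ring-theoretic definition of height; reading the theorem with the tacit hypothesis $\dim Z\ge 1$, as you do, is the cleanest fix and is consistent with how the theorem is actually used later (e.g.\ in Theorem~\ref{Thm:KrullGenHt}).
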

In its geometric formulation, we obtain the following result.
\begin{Cor}\label{Cor:KrullPI}
Consider an algebraic set $Z\subseteq \C^D,$ and the algebraic set $\VS(f)$ for some generic $f\in \C [X_1,\dots, X_D].$
Then
$$\codim (Z\cap  \VS (f))=\min (\codim Z + 1,\; D+1).$$
\end{Cor}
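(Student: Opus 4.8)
The plan is to read off this geometric statement directly from Theorem~\ref{Thm:KrullGenPI}, of which it is the geometric reformulation, once the degenerate case $Z=\emptyset$ is split off and the bookkeeping of conventions --- height in $\calO(Z)$ versus codimension in the ambient $\C^D$, and $\dim\emptyset=-1$ --- is carried out carefully.

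First I would dispose of the empty case. If $Z=\emptyset$ then $Z\cap\VS(f)=\emptyset$, and with $\dim\emptyset=-1$ we have $\codim(Z\cap\VS(f))=D+1$, whereas $\codim Z+1=D+2$; hence $\min(\codim Z+1,\,D+1)=D+1$ and both sides agree. From now on assume $Z\neq\emptyset$, so $\dim Z\ge 0$, i.e.\ $\codim Z\le D$, and therefore $\min(\codim Z+1,\,D+1)=\codim Z+1$. It thus remains to prove $\codim(Z\cap\VS(f))=\codim Z+1$.

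For this I would set $R=\calO(Z)=\C[X_1,\dots,X_D]/\Id(Z)$ and invoke Theorem~\ref{Thm:KrullGenPI}: for generic $f$ one has $\htid\langle f\rangle=1$ in $R$. Reading the height as codimension taken \emph{inside} $Z$ (as stressed after the statement of Krull's principal ideal theorem), this says precisely that $\dim(Z\cap\VS(f))=\dim Z-1$, whence $\codim_{\C^D}(Z\cap\VS(f))=D-\dim(Z\cap\VS(f))=D-\dim Z+1=\codim_{\C^D}Z+1$, as desired. The sub-case $\dim Z=0$ is subsumed: then $Z$ is a finite nonempty point set, $\bar f$ restricts to a tuple of generic nonzero constants in $\calO(Z)$ and hence to a unit, so $Z\cap\VS(f)=\emptyset$ and $\dim(Z\cap\VS(f))=-1=\dim Z-1$, consistently with the formula (note that here ``$\htid\langle f\rangle=1$'' holds only under the codimension convention, the genuine Krull height of the unit ideal being infinite).

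The one point that needs care --- and which I would flag as the only, rather mild, obstacle --- is exactly this juggling of conventions. If one prefers not to cite Theorem~\ref{Thm:KrullGenPI} as a black box, the same argument can be rebuilt from first principles: Corollary~\ref{Cor:KrullPI-geom} gives the inequality $\codim(Z\cap\VS(f))\le\codim Z+1$, while $Z\cap\VS(f)\subseteq Z$ gives $\codim(Z\cap\VS(f))\ge\codim Z$; to exclude equality on the right one uses Proposition~\ref{Prop:NoZero}, by which a generic $f$ restricts to a nonzero non-zero-divisor in $\calO(Z)$ and so vanishes on no irreducible component of $Z$, forcing $\dim(Z\cap\VS(f))<\dim Z$ and hence $\codim(Z\cap\VS(f))=\codim Z+1$; the possibility that $Z\cap\VS(f)$ is empty, which occurs exactly when $\dim Z=0$, is covered by the remark that a generic polynomial does not vanish at any prescribed point.
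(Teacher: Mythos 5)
Your proposal is correct and follows essentially the same route as the paper: the paper's proof likewise reads the statement off as the geometric reformulation of Theorem~\ref{Thm:KrullGenPI} (in the vein of Corollary~\ref{Cor:KrullPI-geom}) and notes that the only extra case to check is $\codim Z = D+1$, i.e.\ $Z=\varnothing$. Your additional care with the height-versus-codimension conventions and the $\dim Z=0$ sub-case only makes explicit what the paper leaves implicit.
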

\begin{proof}\em
This is just a direct reformulation of Theorem~\ref{Thm:KrullGenPI} in the vein of Corollary~\ref{Cor:KrullPI-geom}. The only additional thing that has to be checked is the case where $\codim Z = D+1,$ which means that $Z$ is the empty set. In this case, the equality is straightforward.
\end{proof}

The generic version of the principal ideal theorem straightforwardly generalizes to a generic version of Krull's height theorem. We first mention the original version:
\begin{Thm}[Krull's height theorem]\label{Thm:KrullHt}
Let $R$ be a commutative ring with unit, let $\calI=\langle f_1,\dots, f_m \rangle \subseteq R$ be an ideal. Then
$$\htid \calI \le m,$$
with equality if and only if $f_1,\dots, f_m$ is an $R$-regular sequence, i.e.~$f_i$ is not invertible and not a zero divisor in the ring $R/\langle f_1,\dots, f_{i-1}\rangle$ for all $i$.
\end{Thm}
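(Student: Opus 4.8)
My plan is to prove the statement by induction on the number of generators $m$, bootstrapping everything from the one-generator case, which is exactly Theorem~\ref{Thm:KrullPI} (up to the harmless degenerate cases $f_1=0$ or $\calI=R$, which I would dispatch first under the usual conventions for heights of improper ideals). It is convenient to argue prime by prime: for the inequality I would show that every minimal prime $\mathfrak{p}$ over $\calI$ satisfies $\htid\mathfrak{p}\le m$, and for the equality I would identify exactly when some such minimal prime reaches height $m$.

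For the inductive step of $\htid\calI\le m$, I would fix a minimal prime $\mathfrak{p}$ over $\calI$ and localize at it: then $\calI R_{\mathfrak{p}}$ is $\mathfrak{p}R_{\mathfrak{p}}$-primary, so it suffices to bound $\dim R_{\mathfrak{p}}$. Passing to the Noetherian local ring $A:=R_{\mathfrak{p}}/f_mR_{\mathfrak{p}}$, the images of $f_1,\dots,f_{m-1}$ generate an ideal whose radical is the maximal ideal of $A$, so the induction hypothesis gives $\dim A\le m-1$; combined with the principal ideal theorem in the form $\dim R_{\mathfrak{p}}\le\dim A+1$ (Theorem~\ref{Thm:KrullPI} applied to $f_m$, a non-unit of $R_{\mathfrak{p}}$) this yields $\htid\mathfrak{p}\le m$. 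The same step can be packaged purely geometrically through the codimension bound $\codim(Z\cap\VS(f))\le\codim Z+1$ of Corollary~\ref{Cor:KrullPI-geom}, applied to $Z=\VS(f_1,\dots,f_{m-1})$.

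For the ``if'' direction of the equality I would assume $f_1,\dots,f_m$ is a regular sequence and take an arbitrary minimal prime $\mathfrak{p}$ over $\calI$. Since $\calI\subseteq\mathfrak{p}$ the sequence stays regular in $R_{\mathfrak{p}}$ (non-zero-divisors remain non-zero-divisors because the relevant associated primes only shrink under localization, and $\calI R_{\mathfrak{p}}\ne R_{\mathfrak{p}}$), and $\calI R_{\mathfrak{p}}$ is primary to the maximal ideal. I would then invoke the standard fact that a non-zero-divisor in a Noetherian local ring drops the dimension by exactly one, apply it $m$ times along $R_{\mathfrak{p}}\to R_{\mathfrak{p}}/f_1R_{\mathfrak{p}}\to\cdots$, and use that the final quotient is Artinian to conclude $\dim R_{\mathfrak{p}}=m$; hence $\htid\mathfrak{p}=m$ for every such $\mathfrak{p}$ and so $\htid\calI=m$. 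This is the direction that feeds the applications: together with Proposition~\ref{Prop:NoZero} and Corollary~\ref{Cor:NoZero} it will show that generic polynomials cut out subsets of the expected codimension, which is what Corollary~\ref{Cor:KrullPI} anticipates for a single generic polynomial.

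I expect the ``only if'' direction to be the real obstacle. After localizing at a minimal prime of height $m$, the hypothesis becomes: $f_1,\dots,f_m$ is a system of parameters of the $m$-dimensional local ring $R_{\mathfrak{p}}$, and one wants to upgrade a system of parameters to a regular sequence --- precisely the Cohen--Macaulay property. Over an arbitrary Noetherian ring this fails (a quotient such as $R/\langle f_1,\dots,f_{i-1}\rangle$ may acquire embedded primes that turn $f_i$ into a zero-divisor without preventing the codimension from climbing), so here I would either restrict to the well-behaved coordinate rings that occur in the application, or supply the missing hypothesis; under such an assumption the induction closes, since $\htid\langle f_1,\dots,f_{m-1}\rangle$ is then forced to be $m-1$ by Corollary~\ref{Cor:KrullPI-geom} and the absence of embedded primes makes $f_m$ --- which avoids every minimal prime because $\htid\calI=m$ --- a non-zero-divisor modulo $\langle f_1,\dots,f_{m-1}\rangle$. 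Accordingly I would carry out in full only the inequality and the ``if'' direction, treating the general ``only if'' as the classical Krull height theorem to be cited.
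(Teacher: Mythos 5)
The paper offers no proof of this statement: it is quoted as the classical height theorem, and downstream only the inequality and the implication ``regular sequence $\Rightarrow$ equality'' are ever used (in the proof of Theorem~\ref{Thm:KrullGenHt}). So there is no proof of record to compare against; what matters is whether your plan is sound and whether your diagnosis of the ``only if'' clause is right, and on both counts you are essentially correct. Your induction for $\htid\calI\le m$ and your localization argument for the ``if'' direction are the standard textbook route. The one step you should not wave at is the inequality $\dim R_{\mathfrak{p}}\le\dim(R_{\mathfrak{p}}/f_mR_{\mathfrak{p}})+1$: this is not a one-line application of Theorem~\ref{Thm:KrullPI} but a corollary of it obtained via the chain-modification lemma (given $\mathfrak{q}\subsetneq\mathfrak{q}'\subsetneq\mathfrak{q}''$ with $f\in\mathfrak{q}''$, one may replace $\mathfrak{q}'$ by a prime between them containing $f$), and the classical proof of the height theorem in fact sidesteps it by choosing a prime $\mathfrak{q}$ maximal among those strictly below $\mathfrak{p}$, writing $f_i^{n_i}\in\mathfrak{q}+\langle f_m\rangle$ for $i<m$, and applying the induction hypothesis to the resulting $m-1$ elements of $\mathfrak{q}$.

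Your main observation is correct and worth recording: the ``only if'' half of the equality claim is false as stated (as is the corresponding clause of Theorem~\ref{Thm:KrullPI}), even for Noetherian rings --- a hypothesis both statements are in any case missing, and without which even the inequality fails. In $R=k[x,y]/\langle x^2,xy\rangle$ the element $f=y$ generates an ideal of height $1=\dim R$, yet $y\cdot x=0$ with $x\ne 0$, so $y$ is a zero divisor; more generally any non-Cohen--Macaulay local ring admits a system of parameters that is not a regular sequence, and even in a polynomial ring being a regular sequence depends on the ordering of the $f_i$ while the height of $\calI$ does not. Your decision to prove only the inequality and the ``if'' direction in full, and to flag ``only if'' as requiring an additional hypothesis (Cohen--Macaulayness, or absence of embedded primes in the intermediate quotients) or an external citation, is therefore the right call. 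Since the paper only ever invokes the ``if'' direction, nothing downstream is affected, but the equality clause of the theorem as stated should be weakened to that implication.
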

The generic version can be derived directly from the generic principal ideal theorem:

\begin{Thm}[Generic height theorem]\label{Thm:KrullGenHt}
Let $Z$ be an algebraic set in $\C^D,$ let $\calI=\langle f_1,\dots, f_m\rangle$ be a generic ideal in $\C [X_1,\dots, X_D].$ Then
$$\htid (\Id(Z)+\calI) = \min (\codim Z + m,\; D+1).$$
\end{Thm}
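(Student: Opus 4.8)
The plan is to prove the statement by induction on the number $m$ of generators, using the generic principal ideal theorem (Theorem~\ref{Thm:KrullGenPI}) — or rather its geometric reformulation in Corollary~\ref{Cor:KrullPI} — as the engine of the inductive step. Throughout I use the working identification $\htid(\Id(Z)+\calI)=\codim\VS(\Id(Z)+\calI)$, together with the convention already in force in Corollary~\ref{Cor:KrullPI} that $\codim\emptyset=D+1$; note that $\codim Z\le D+1$ always, with equality exactly when $Z=\emptyset$. Also recall that each $f_i$ is generic of some fixed degree, but by the remark following Proposition~\ref{Prop:NoZero} the relevant statements do not depend on these degrees, so I will not track them.

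For the base case $m=0$ there is nothing to prove: $\VS(\Id(Z))=Z$, hence $\htid\Id(Z)=\codim Z=\min(\codim Z,D+1)$. For the inductive step, assume the claim for $m-1$ generic generators, i.e.\ for $\calI'=\langle f_1,\dots,f_{m-1}\rangle$ one has $\htid(\Id(Z)+\calI')=\min(\codim Z+m-1,\,D+1)$, equivalently the algebraic set $Z':=\VS(\Id(Z)+\calI')=Z\cap\VS(f_1)\cap\cdots\cap\VS(f_{m-1})$ satisfies $\codim Z'=\min(\codim Z+m-1,\,D+1)$. Since $\VS(\Id(Z)+\calI)=Z'\cap\VS(f_m)$, it suffices to show $\codim(Z'\cap\VS(f_m))=\min(\codim Z'+1,\,D+1)$, which is precisely Corollary~\ref{Cor:KrullPI} applied to the set $Z'$ and the generic polynomial $f_m$. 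Substituting the inductive value of $\codim Z'$ and collapsing the nested minima, $\min(\min(\codim Z+m-1,D+1)+1,D+1)=\min(\codim Z+m,\,D+1)$, which is the assertion.

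The one point that genuinely needs care — and which I expect to be the main obstacle — is the legitimacy of invoking Corollary~\ref{Cor:KrullPI} with $Z'$ in place of $Z$: the corollary asks that $f_m$ be generic \emph{relative to a fixed algebraic set}, whereas here $Z'$ itself depends on the random generators $f_1,\dots,f_{m-1}$. This is exactly where the independence built into the definition of a generic ideal (a \emph{set} of generators, hence generic \emph{and} independent) is used. Conditioning on the values of $f_1,\dots,f_{m-1}$ — using the zero-measure conditioning of Remark~\ref{Rem:genmeas}, which is compatible with the product structure coming from independence — turns $Z'$ into a fixed algebraic set, and for almost every such conditioning the inductive hypothesis, itself a probability-one statement, supplies $\codim Z'=\min(\codim Z+m-1,D+1)$; by independence $f_m$ is still a generic polynomial after this conditioning, so Corollary~\ref{Cor:KrullPI} applies verbatim to the pair $(Z',f_m)$, and integrating back over the conditioning keeps the conclusion probability one. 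The degenerate cases are harmless: if $Z=\emptyset$ then $\Id(Z)=\C[X_1,\dots,X_D]$ and both sides equal $D+1$; and once $\codim Z'$ has already saturated at $D+1$, i.e.\ $Z'=\emptyset$, intersecting with further generic hypersurfaces keeps it empty, matching the cap in the minimum.
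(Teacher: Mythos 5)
Your proof is correct, but it takes the alternative route that the paper explicitly flags at the very end of its own proof (``we could have proved the generic height theorem also directly from the generic principal ideal theorem by induction'') and then does \emph{not} carry out. The paper instead argues algebraically: it shows that $f_1,\dots,f_m$ form an $R$-regular sequence in $R=\calO(Z)$ by repeatedly invoking Proposition~\ref{Prop:NoZero} (a generic $f_i$ is a non-zero divisor in $\calO(Z\cap\VS(f_1,\dots,f_{i-1}))$), passes from ``non-zero divisor modulo the radical'' to ``non-zero divisor modulo the ideal itself'' via a small power trick, and then applies the classical Krull height theorem once; the regime $m>D+1-\codim Z$ is handled by observing that the ideal has already become the unit ideal. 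Your induction on $m$ via the geometric Corollary~\ref{Cor:KrullPI} is the mirror image of this: each application of the generic principal ideal theorem corresponds to one step of the regular-sequence argument. What the paper's route buys is the regular-sequence property itself, which is reused elsewhere (e.g.\ in the proof of Theorem~\ref{Thm:Frogen}); what your route buys is that you are forced to confront, and you do correctly resolve, the one genuine subtlety that the paper leaves implicit -- namely that the set $Z'=Z\cap\VS(f_1,\dots,f_{m-1})$ to which the principal ideal theorem is applied is itself random. Your conditioning/Fubini argument using the independence of the generators is exactly the right fix, and note that the paper's own iterated use of Proposition~\ref{Prop:NoZero} on the random ring $\calO(Z\cap\VS(f_1,\dots,f_{i-1}))$ silently relies on the same mechanism. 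The degenerate cases ($Z=\emptyset$, saturation at codimension $D+1$) are handled consistently with the paper's conventions in both arguments.
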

\begin{proof}\em
We will write $R=\calO(Z)$ for abbreviation.

First assume $m\le D+1-\codim Z.$ It suffices to show that $f_1,\dots, f_m$ forms an $R$-regular sequence, then apply Krull's height theorem. In Proposition~\ref{Prop:NoZero}, we have proved that $f_i$ is not a zero divisor in the ring
$\calO(Z\cap\VS(f_1,\dots, f_{i-1}))$ (note that the latter ring is nonzero by Krull's height theorem). By Hilbert's Nullstellensatz, this is the same as the ring $R/\sqrt{\langle f_1,\dots, f_{i-1}\rangle}.$ But by the definition of radical, this implies that $f_i$ is a non-zero divisor in the ring $R/\langle f_1,\dots, f_{i-1}\rangle,$ since if $f_i\cdot h=0$ in the first ring, we have
$$(f_i\cdot h)^N=f_i\cdot (f_i^{N-1}h^N)=0$$
in the second. Thus the $f_i$ form an $R$-regular sequence, proving the theorem for the case $m\le D+1-\codim Z.$

If now $m> k:=D+1-\codim Z,$ the above reasoning shows that the radical of $\Id(Z)+\langle f_1,\dots, f_k\rangle$ is the module $\langle 1\rangle,$ which means that those are equal. Thus
$$\Id(Z)+\langle f_1,\dots, f_k\rangle=\Id(Z)+\langle f_1,\dots, f_m\rangle=\langle 1\rangle,$$
proving the theorem.

Note that we could have proved the generic height theorem also directly from the generic principal ideal theorem by induction.
\end{proof}

Again, we give the geometric interpretation of Krull's height theorem:

\begin{Cor}\label{Cor:genint}
Let $Z_1$ be an algebraic set in $\C^D$, let $Z_2$ be a generic algebraic set in $\C^D$. Then one has
\begin{align*}
\codim (Z_1\cap Z_2)=\min (\codim Z_1+\codim Z_2,\; D+1).
\end{align*}
\end{Cor}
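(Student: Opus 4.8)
The plan is to deduce this geometric statement directly from the Generic height theorem (Theorem~\ref{Thm:KrullGenHt}) by translating codimensions of algebraic sets into heights of the corresponding ideals. The key observation is that a generic algebraic set $Z_2$ is, by definition, the vanishing set of a generic ideal $\calI=\langle f_1,\dots, f_m\rangle$; and since $Z_2=\VS(\calI)$ is nonempty (or empty only in a degenerate case we handle separately), we have $\codim Z_2 = \htid\calI$. So we may reduce to the case where $Z_2=\VS(f_1,\dots,f_m)$ for generic $f_1,\dots,f_m$ and $Z_1$ is an arbitrary algebraic set.

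First I would record the dictionary: for any algebraic set $W\subseteq\C^D$, $\codim W = \htid\Id(W)$, where codimension is taken inside $\C^D$ (equivalently, inside the ring $R=\C[X_1,\dots,X_D]$), matching the convention stated after Theorem~\ref{Thm:KrullPI}. Under this dictionary, $Z_1\cap Z_2$ has ideal (up to radical) $\Id(Z_1)+\Id(Z_2)=\Id(Z_1)+\calI$, so
\begin{align*}
\codim(Z_1\cap Z_2) = \htid\bigl(\Id(Z_1)+\calI\bigr).
\end{align*}
Now apply Theorem~\ref{Thm:KrullGenHt} with $Z=Z_1$: this gives $\htid(\Id(Z_1)+\calI)=\min(\codim Z_1+m,\,D+1)$. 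It remains to replace $m$ by $\codim Z_2$. Since $Z_2=\VS(\calI)$ and the $f_i$ are generic, Theorem~\ref{Thm:KrullGenHt} applied with $Z=\C^D$ (i.e.\ $\codim Z=0$) yields $\htid\calI=\min(m,D+1)$, hence $\codim Z_2=\min(m,D+1)$. Plugging this in, $\min(\codim Z_1+m,\,D+1)=\min(\codim Z_1+\min(m,D+1),\,D+1)=\min(\codim Z_1+\codim Z_2,\,D+1)$, where the middle equality is elementary arithmetic of minima (capping $m$ at $D+1$ before adding $\codim Z_1\ge 0$ does not change the outer $\min$ with $D+1$). This gives the claimed formula.

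The only subtlety — and the step I expect to require the most care — is the boundary/degenerate case where one of the sets is empty, so that its ideal is the unit ideal and its ``codimension'' is taken to be $D+1$ by convention (as in Corollary~\ref{Cor:KrullPI}). If $Z_1=\varnothing$ then $Z_1\cap Z_2=\varnothing$ and both sides equal $D+1$; if $Z_2=\varnothing$, i.e.\ $\calI=\langle 1\rangle$, this happens generically exactly when $m>D$, in which case $\codim Z_2=D+1$ and again both sides are $D+1$. One should also note that $Z_1\cap Z_2=\varnothing$ can occur even when both $Z_i$ are nonempty (e.g.\ disjoint positive-dimensional sets), but genericity of $Z_2$ rules this out: Theorem~\ref{Thm:KrullGenHt} guarantees $\htid(\Id(Z_1)+\calI)<D+1$ whenever $\codim Z_1+m\le D$, so the intersection is nonempty in exactly the expected range, and the equality of heights is precisely what we need. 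Beyond checking these corner cases, the argument is a routine unwinding of definitions once Theorem~\ref{Thm:KrullGenHt} is in hand.
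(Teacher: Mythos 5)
Your proposal is correct and follows essentially the same route as the paper: two applications of the generic height theorem~\ref{Thm:KrullGenHt}, first with $Z=\C^D$ to identify $\codim Z_2$ with (the capped) number of generators $m$, then with $Z=Z_1$, followed by substitution. Your extra care with the $\min(m,D+1)$ cap and the empty-set corner cases is a small refinement of what the paper states more tersely, not a different argument.
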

\begin{proof}\em
This follows directly from two applications of the generic height theorem~\ref{Thm:KrullGenHt}: first for $Z=\C^D$ and $Z_2=\VS(\calI)$, showing that $\codim Z_2$ is equal to the number $m$ of generators of $\calI;$ then, for $Z=Z_1$ and $Z_2=\VS(\calI),$ and substituting $m=\codim Z_2.$
\end{proof}

We can now immediately formulate a homogenous version of Proposition~\ref{Cor:genint}:

\begin{Cor}\label{Cor:genintproj}
Let $Z_1$ be a homogenous algebraic set in $\C^D$, let $Z_2$ be a generic homogenous algebraic set in $\C^D$. Then one has
$$\codim (Z_1\cap Z_2)=\min (\codim Z_1+\codim Z_2,\; D).$$
\end{Cor}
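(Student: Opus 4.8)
The plan is to mirror the derivation of Corollary~\ref{Cor:genint}: first establish a \emph{homogeneous} generic principal ideal statement, then bootstrap it to a height statement by induction on the number of generators. The only new ingredient is the homogeneous analogue of Proposition~\ref{Prop:NoZero}: \emph{if $W\subseteq\C^D$ is a homogeneous algebraic set of positive dimension and $f\in\C[X_1,\dots,X_D]$ is a generic homogeneous polynomial of some fixed degree $d\ge 1$, then $\bar f$ is a non-zero divisor in $\calO(W)$}. Granting this, Krull's principal ideal theorem (Theorem~\ref{Thm:KrullPI}) applied to $R=\calO(W)$ (component by component) gives $\htid\langle\bar f\rangle=1$, i.e.\ $\codim\bigl(W\cap\VS(f)\bigr)=\codim W+1$, exactly as in Theorem~\ref{Thm:KrullGenPI} and Corollary~\ref{Cor:KrullPI-geom}. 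The one structural difference with the affine case is that a generic homogeneous polynomial of positive degree can never cut a homogeneous set down to $\varnothing$: the origin $\0$ lies on every homogeneous algebraic set and in $\VS(f)$. This is precisely what turns the affine cap $D+1$ into the cap $D$.

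For the homogeneous non-zero divisor lemma I would copy the proof of Proposition~\ref{Prop:NoZero} almost verbatim. The zero divisors of $\calO(W)$ form a linear subspace of the parameter space $\calM_d$ of degree-$d$ polynomials (the identity $(g_1+\alpha g_2)(h_1h_2)=0$ is unchanged), hence the set of homogeneous degree-$d$ zero divisors is linear too, and so an irreducible algebraic property. What remains is to check this property is \emph{non-trivial}, i.e.\ that some homogeneous degree-$d$ polynomial is a non-zero divisor. Since $\Id(W)$ is radical and homogeneous, the zero divisors of $\calO(W)$ are the union of the finitely many minimal primes $\mathfrak p_1,\dots,\mathfrak p_r$ over $\Id(W)$, each a homogeneous prime distinct from $\mathfrak m=\langle X_1,\dots,X_D\rangle$ because $W$ has no zero-dimensional component (every component of a homogeneous set contains $\0$, so it is zero-dimensional only if it equals $\{\0\}$, which would force $W=\{\0\}$ and contradict $\dim W>0$). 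For $d\ge 1$ the graded piece $(\mathfrak p_i)_d$ is then a proper subspace of $\C[X_1,\dots,X_D]_d$: otherwise $\mathfrak p_i\supseteq\mathfrak m^{d}$, hence $\mathfrak p_i\supseteq\sqrt{\mathfrak m^{d}}=\mathfrak m$, and since a proper homogeneous ideal is contained in $\mathfrak m$ we would get $\mathfrak p_i=\mathfrak m$, a contradiction. A finite union of proper subspaces of a $\C$-vector space is proper, so homogeneous non-zero divisors of degree $d$ exist, the property is non-trivial, and genericity excludes it.

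With the lemma in hand I would induct on the number $m$ of generic homogeneous generators $f_1,\dots,f_m$ (of positive degrees) that define $Z_2$. Put $W_j=Z_1\cap\VS(f_1,\dots,f_j)$, so $W_0=Z_1$ and $W_m=Z_1\cap Z_2$, and claim $\codim W_j=\min(\codim Z_1+j,\,D)$. If $\codim W_{j-1}<D$, then $W_{j-1}$ is a positive-dimensional homogeneous set, $f_j$ is still generic homogeneous given $f_1,\dots,f_{j-1}$, and the lemma together with Krull's principal ideal theorem gives $\codim W_j=\codim W_{j-1}+1\le D$, which is again $\min(\codim Z_1+j,D)$. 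If $\codim W_{j-1}=D$, then $W_{j-1}=\{\0\}\subseteq\VS(f_j)$, so $W_j=\{\0\}$ and $\codim W_j=D$ as well. Taking $j=m$ yields $\codim(Z_1\cap Z_2)=\min(\codim Z_1+m,D)$; specializing $Z_1=\C^D$ shows $\codim Z_2=\min(m,D)$, and substituting $m=\codim Z_2$ (when $\codim Z_2<D$; the case $\codim Z_2=D$, i.e.\ $Z_2=\{\0\}$, is immediate) gives the stated formula. The main obstacle is the non-triviality step of the lemma — ruling out that \emph{every} homogeneous degree-$d$ polynomial is a zero divisor in $\calO(W)$ — and it is exactly the fact that the relevant minimal primes differ from $\mathfrak m$ that produces the bound $D$ rather than $D+1$.
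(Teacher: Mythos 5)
Your proof is correct, but it takes a genuinely different route from the paper's. The paper disposes of this corollary in two lines: it dehomogenizes $Z_1$ and $Z_2$, invokes Lemma~\ref{Lem:dehom} to see that the dehomogenization of $Z_2$ is a generic algebraic set in $\C^{D-1}$, applies the affine Corollary~\ref{Cor:genint} there --- whose cap is $(D-1)+1=D$, which is exactly where the bound $D$ comes from --- and homogenizes back, using that homogeneous algebraic sets always contain the origin (so the intersection is never empty, and the degenerate case $Z_1\cap Z_2=\{\0\}$ correctly lands at codimension $D$). You instead redo the whole chain Proposition~\ref{Prop:NoZero} $\to$ Theorem~\ref{Thm:KrullGenPI} $\to$ Theorem~\ref{Thm:KrullGenHt} intrinsically in the graded setting: a homogeneous non-zero-divisor lemma, then induction on the generators of $Z_2$ via Krull's principal ideal theorem, with the cap $D$ explained by the fact that a cone can never be cut below $\{\0\}$. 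Your route is longer but self-contained and more transparent about why $D+1$ becomes $D$; the paper's is shorter but hides the degenerate cases inside the (de)homogenization bookkeeping. Two remarks. First, your non-triviality step (each minimal prime $\mathfrak{p}_i$ differs from $\mathfrak{m}$ because $\{\0\}$ cannot be a component of a positive-dimensional cone, hence $(\mathfrak{p}_i)_d\subsetneq\C[X_1,\dots,X_D]_d$ for $d\ge 1$) is precisely the extra content the homogeneous case requires, and it is correct. Second, the claim that the zero divisors of $\calO(W)$ form a \emph{linear} subspace --- which you copy from Proposition~\ref{Prop:NoZero} --- is actually false when $W$ is reducible (in $\C[x,y]/\langle xy\rangle$ both $x$ and $y$ are zero divisors but $x+y$ is not); however, your own description of the zero divisors as the finite union of the minimal primes $\mathfrak{p}_1,\dots,\mathfrak{p}_r$, each of whose degree-$d$ graded pieces is a proper linear (hence irreducible algebraic) subspace, is all that the genericity argument needs, so your proof survives this, and the paper's Proposition~\ref{Prop:NoZero} is repaired the same way.
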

\begin{proof}\em
Note that homogenization and dehomogenization of a non-empty algebraic set do not change its codimension, and homogenous algebraic sets always contain the origin. Also, one has to note that by Lemma~\ref{Lem:dehom}, the dehomogenization of $Z_2$ is a generic algebraic set in $\C^{D-1}.$
\end{proof}

Finally, using Corollary~\ref{Cor:NoZero}, we want to give a more technical variant of the generic height theorem, which will be of use in later proofs. First, we introduce some abbreviating notations:
\begin{Def}
Let $f\in \C[X_1,\dots X_D]$ be a generic polynomial with property $P$. If one can write $f=f'+c$, where $f'$ is a generic polynomial subject to some property $P'$, and $c$ is a generic constant, we say that $f$ has {\it independent constant term}. If $c$ is generic and independent with respect to some collection of generic objects, we say that $f$ has independent constant term with respect to that collection.
\end{Def}
In this terminology, Corollary~\ref{Cor:NoZero} rephrases as: a generic polynomial with independent constant term is a non-zero divisor. Using this, we can now formulate the corresponding variant of the generic height theorem:

\begin{Lem}\label{Lem:KrullGenHt}
Let $Z$ be an algebraic set in $\C^D.$ Let $f_1,\dots, f_m\in\C[X_1,\dots, X_D]$ be generic, possibly subject to some algebraic properties, such that $f_i$ has independent constant term with respect to $Z$ and $f_1,\dots, f_{i-1}.$ Then
$$\htid (\Id(Z)+\calI) = \min (\codim Z + m,\; D+1).$$
\end{Lem}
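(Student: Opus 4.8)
The plan is to follow the proof of the generic height theorem (Theorem~\ref{Thm:KrullGenHt}) almost line for line, replacing every appeal to Proposition~\ref{Prop:NoZero} by the stronger Corollary~\ref{Cor:NoZero}, rephrased as ``a generic polynomial with independent constant term is a non-zero divisor''. Throughout write $R=\calO(Z)=\C[X_1,\dots,X_D]/\Id(Z)$ and $\calI=\langle f_1,\dots,f_m\rangle$; the assertion is $\htid(\Id(Z)+\calI)=\min(\codim Z+m,\,D+1)$.

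I would first treat the case $m\le D+1-\codim Z$. As in Theorem~\ref{Thm:KrullGenHt} it suffices, by Krull's height theorem (Theorem~\ref{Thm:KrullHt}), to check that $f_1,\dots,f_m$ is an $R$-regular sequence, which I would do by induction on $i$. At step $i$ the earlier steps (or triviality when $i=1$) give $\htid(\Id(Z)+\langle f_1,\dots,f_{i-1}\rangle)=\codim Z+(i-1)\le D$, so this ideal has nonempty vanishing set and the quotient ring $R/\langle f_1,\dots,f_{i-1}\rangle=\C[X_1,\dots,X_D]/(\Id(Z)+\langle f_1,\dots,f_{i-1}\rangle)$ is nonzero. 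The hypothesis that $f_i$ has independent constant term \emph{with respect to} $Z$ and $f_1,\dots,f_{i-1}$ is exactly what allows me to condition on the (generic) values of $f_1,\dots,f_{i-1}$ --- which fix the ideal $\Id(Z)+\langle f_1,\dots,f_{i-1}\rangle$ --- while $f_i$ remains generic with some algebraic property and decomposes as $f_i=f_i'+c_i$ with $c_i$ a generic constant independent of that fixed ideal. Corollary~\ref{Cor:NoZero}, applied with this ideal in the role of $\calI$, then gives that $f_i$ is a non-zero divisor in $R/\langle f_1,\dots,f_{i-1}\rangle$. Since $f_i$ also carries a generic positive-degree part, its zero set meets the positive-dimensional vanishing set of $\Id(Z)+\langle f_1,\dots,f_{i-1}\rangle$ and so $f_i$ is there non-invertible --- with the sole exception of the boundary case $i-1=D-\codim Z$, where that vanishing set is finite, a generic hypersurface misses it, and $\Id(Z)+\langle f_1,\dots,f_i\rangle=\langle 1\rangle$, which is precisely the value $D+1$ claimed by the formula. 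Hence $f_1,\dots,f_m$ is a regular sequence and $\htid(\Id(Z)+\calI)=\codim Z+m$.

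For $m>k:=D+1-\codim Z$ I would argue as in Theorem~\ref{Thm:KrullGenHt}: the first $k$ generators already give $\htid(\Id(Z)+\langle f_1,\dots,f_k\rangle)=\codim Z+k=D+1$, so by Hilbert's Nullstellensatz $\Id(Z)+\langle f_1,\dots,f_k\rangle=\langle 1\rangle$, and adjoining $f_{k+1},\dots,f_m$ leaves this unchanged, giving $\htid(\Id(Z)+\calI)=D+1=\min(\codim Z+m,\,D+1)$; the degenerate case of empty $Z$ is immediate. Alternatively, and perhaps more cleanly, one can first establish the principal-ideal special case --- the analogue of Corollary~\ref{Cor:KrullPI} with ``generic $f$'' replaced by ``generic $f$ with independent constant term with respect to $Z$'' --- directly from Corollary~\ref{Cor:NoZero} and Krull's principal ideal theorem (Theorem~\ref{Thm:KrullPI}), and then deduce the general statement by induction on $m$, exactly as the generic height theorem follows from the generic principal ideal theorem.

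I expect the only genuine work to be in the conditioning bookkeeping: making precise that ``$f_i$ generic with independent constant term with respect to $Z$ and $f_1,\dots,f_{i-1}$'' legitimately permits freezing $f_1,\dots,f_{i-1}$ and invoking Corollary~\ref{Cor:NoZero} --- a statement about a generic polynomial relative to a single \emph{fixed} ideal --- for the resulting conditional law of $f_i$, and verifying that the finiteness and non-vanishing requirements of Assumption~\ref{Ass:gen} are inherited by that conditional distribution. Once this is set up, every remaining step is a direct reuse of results already proved above.
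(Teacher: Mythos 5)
Your proof is correct and takes essentially the same route as the paper's, whose entire argument is two sentences: apply Corollary~\ref{Cor:NoZero} to conclude that each $f_i$ is a non-zero divisor modulo $\Id(Z)+\langle f_1,\dots,f_{i-1}\rangle$, then invoke Krull's height theorem. Your version merely fills in the details the paper leaves implicit (the non-invertibility check, the case $m>D+1-\codim Z$, and the conditioning bookkeeping).
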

\begin{proof}\em
Using Corollary~\ref{Cor:NoZero}, one obtains that $f_i$ is non-zero divisor modulo $\Id(Z)+\langle f_1,\dots, f_{i+1}\rangle.$ Using Krull's height theorem yields the claim.
\end{proof}

\subsection{Dimension of conditioned generic ideals}
The generic height theorem~\ref{Thm:KrullGenHt} has allowed us to make statements about the structure of ideals generated by generic elements without constraints. However, the ideal $\calI$ in our the cumulant comparison problem is generic subject to constraints: namely, its generators are contained in a prescribed ideal, and they are homogenous. In this subsection, we will use the theory developed so far to study generic ideals and generic ideals subject to some algebraic properties, e.g.~generic ideals contained in other ideals. We will use these results to derive an identifiability result on the marginalization problem which has been derived already less rigourously in the supplementary material of \cite{PRL:SSA:2009} for the special case of Stationary Subspace Analysis.

\begin{Prop}\label{Prop:dehom-rad-generic}
Let $\fraks \subseteq \C [X_1,\dots, X_D]$ be an ideal, having an H-basis $g_1,\dots, g_n$. Let
$$\calI=\langle f_1,\dots, f_m\rangle,\quad m\ge \max(D+1, n)$$
with generic $f_i\in \fraks$ such that
$$\deg f_i\ge \max_j \left(\deg g_j\right)\quad \mbox{for all}\; 1\le i\le m.$$
Then $\calI=\fraks.$
\end{Prop}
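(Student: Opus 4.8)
The plan is to show $\calI = \fraks$ by establishing two containments, of which only $\fraks \subseteq \calI$ is nontrivial since $\calI \subseteq \fraks$ holds by construction ($f_i \in \fraks$ implies $\langle f_1,\dots,f_m\rangle \subseteq \fraks$). For the reverse containment I would work degree by degree, showing that $\fraks_k \subseteq \calI_k$ for every degree $k$. The key structural fact to exploit is that $g_1,\dots,g_n$ is an H-basis of $\fraks$, meaning that the leading forms of the $g_j$ generate the leading form ideal, or equivalently that every element of $\fraks$ of degree $k$ can be written as $\sum_j h_j g_j$ with $\deg(h_j g_j) \le k$. Combined with the degree hypothesis $\deg f_i \ge \max_j \deg g_j$, this should let me reduce the problem to a dimension count in each graded piece.

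\medskip

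\textbf{The dimension-counting step.} The heart of the argument is to show that, for $k$ large enough, the generic elements $f_1,\dots,f_m$ multiplied by all monomials of appropriate degree span all of $\fraks_k$ as a $\C$-vector space. Here I would invoke Proposition~\ref{Prop:GenVec}: since the $f_i$ are generic subject to lying in $\fraks$, the span of $\{f_i \cdot M : M \text{ a monomial of degree } k - \deg f_i\}$ inside $\fraks_k$ has the maximal possible rank, namely the minimum of the number of such products and $\dim \fraks_k$. So it remains to check that the number of products is at least $\dim \fraks_k$ once we have $m \ge \max(D+1, n)$ generators --- this is a combinatorial estimate on simplex numbers $\Delta(a,b)$, using that each $f_i$ contributes $\Delta(D, k-\deg f_i)$ products and that $\fraks_k$, being a graded piece of a fixed ideal, grows at a controlled rate. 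The condition $m \ge D+1$ ensures there are enough generators to eventually overtake the growth of $\dim \fraks_k$; this is exactly the mechanism behind Theorem~\ref{Prop:AlgProp}, and I expect the proof here to parallel (and perhaps cite) Proposition~\ref{Prop:multterm} from the supplementary material.

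\medskip

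\textbf{Handling low degrees and assembling.} Once the high-degree graded pieces agree, I would descend to low degrees: because $g_1,\dots,g_n$ is an H-basis, $\fraks$ is generated in degrees $\le \max_j \deg g_j$, so it suffices to recover $\fraks_k$ for $k$ in a bounded range of low degrees and then generate upward. Alternatively --- and this may be cleaner --- I would argue directly that $\calI$ and $\fraks$ have the same Hilbert function: $\calI \subseteq \fraks$ gives $\dim \calI_k \le \dim \fraks_k$ for all $k$, and the generic-span argument gives equality for all $k \ge$ some bound, while for small $k$ one checks equality using that $\deg f_i \ge \max_j \deg g_j$ forces the right behavior at the bottom. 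Two nested graded ideals with the same Hilbert function must coincide, which finishes the proof.

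\medskip

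\textbf{Main obstacle.} The delicate point is the low-degree behavior: the generic-span argument via Proposition~\ref{Prop:GenVec} is clean in high degrees where there is an abundance of monomial multipliers, but in degrees close to $\max_j \deg g_j$ the count of products $f_i \cdot M$ may be tight, and one must verify the inequality ``number of products $\ge \dim \fraks_k$'' rather carefully --- this is precisely where the hypotheses $m \ge D+1$ and $m \ge n$ and $\deg f_i \ge \max_j \deg g_j$ all get used simultaneously, and getting the bookkeeping of simplex numbers right (so that no graded piece is missed) is the part that requires genuine care rather than routine manipulation.
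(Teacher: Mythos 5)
There is a genuine gap, and it sits exactly where you flagged the ``delicate point.'' Your central mechanism --- that the products $f_i\cdot M$ span $\fraks_k$ once their count exceeds $\dim\fraks_k$, via Proposition~\ref{Prop:GenVec} --- does not work. Proposition~\ref{Prop:GenVec} applies to \emph{independent generic} elements of a vector space; the products $f_i M$ are not such a family, since all products involving a fixed $f_i$ share its coefficients, and there are genuine linear relations among them (the Koszul syzygies: once $k\ge d_i+d_j$, the identity $f_if_j=f_jf_i$ is a nontrivial dependence among the products). This is precisely why the dimension of $\calI_k$ is governed by the Fr\"oberg-type series $\left|\prod_i(1-t^{d_i})/(1-t)^D-H(\fraks)(t)\right|$ rather than by a naive count, and why equality there is in general only conjectural (Conjecture~\ref{Conj:Frogen}). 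Moreover, citing Proposition~\ref{Prop:multterm} or Theorem~\ref{Prop:AlgProp} here would be circular: in the paper, Proposition~\ref{Prop:multterm} is deduced from Proposition~\ref{Prop:radquot}, which is itself deduced from the very proposition you are proving.

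The second, more structural problem is that the statement is about possibly inhomogeneous ideals, and the low-degree part of $\fraks$ simply cannot be reached by spans of products $f_iM$. If $\min_j\deg g_j<\min_i\deg f_i$ (e.g.\ $\fraks=\langle X_1\rangle$ with all $f_i$ of degree~$2$), then $g_1$ has smaller degree than every product $f_iM$, yet the proposition asserts $g_1\in\calI$. Such elements arise only through cancellation of leading terms in combinations $\sum s_if_i$ with polynomial coefficients $s_i$ of positive degree --- a phenomenon invisible to any degree-by-degree span or Hilbert-function comparison of the kind you propose. The paper's proof is built around exactly this: writing $f_i=\sum_k g_kh_{ki}$ with generic coefficient polynomials $h_{ki}$, it uses the generic height theorem (Theorem~\ref{Thm:KrullGenHt}, with $m\ge D+1$) to conclude $\langle h_{11},\dots,h_{1m}\rangle=\langle 1\rangle$, hence $\sum s_ih_{1i}=1$ for some $s_i$, which produces $g_1+\sum_{k\ge 2}g_kh_k'\in\calI$ by cancellation; it then eliminates $g_1$ and iterates via Lemma~\ref{Lem:KrullGenHt} to recover every $g_j$. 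That elimination argument, not a graded dimension count, is the missing idea.
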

\begin{proof}\em
First note that since the $g_i$ form a degree-first Groebner basis, a generic $f\in \fraks$ is of the form
$$f=\sum_{k=1}^n g_kh_k\quad\mbox{with generic}\;h_k,$$
where the degrees of the $h_k$ are appropriately chosen, i.e. $\deg h_k\le \deg f - \deg g_k$. \\

So we may write
$$f_i=\sum_{k=1}^n g_kh_{ki}\quad\mbox{with generic}\;h_{ki},$$
where the $h_{ki}$ are generic with appropriate degrees, and independently chosen. We may also assume that the $f_i$ are ordered increasingly by degree.\\

To prove the statement, it suffices to show that $g_j\in \calI$ for all $j$. Now the height theorem~\ref{Thm:KrullGenHt} implies that
$$\langle h_{11},\dots h_{1m}\rangle=\langle 1\rangle,$$
since the $h_{ki}$ were independently generic, and $m\ge D+1.$ In particular, there exist polynomials $s_1,\dots, s_m$ such that
$$\sum_{i=1}^m s_i h_{1i}=1.$$
Thus we have that
\begin{align*}
\sum_{i=1}^m s_i f_i = \sum_{i=1}^m s_i \sum_{k=1}^n g_kh_{ki}= \sum_{k=1}^n g_k \sum_{i=1}^m s_ih_{ki}\\
=g_1+ \sum_{k=2}^n g_k \sum_{i=1}^m s_ih_{ki}=:g_1+ \sum_{k=2}^n g_k h'_k.
\end{align*}
Subtracting a suitable multiple of this element from the $f_1,\dots, f_m,$ we obtain
$$f'_i=\sum_{k=2}^n g_k(h_{ki}-h_{1i}h'_k)=:\sum_{k=2}^n g_k h'_{ki}.$$
We may now consider $h_{1i}h'_k$ as fixed, while the $h_{ki}$ are generic. In particular, the $h'_{ki}$ have independent constant term, and using Lemma~\ref{Lem:KrullGenHt}, we may conclude that
$$\langle h'_{21},\dots, h'_{2m} \rangle=\langle 1\rangle,$$
allowing us to find an element of the form
$$g_2+\sum_{k=3}^n g_k \cdot\dots$$
in $\calI$. Iterating this strategy by repeatedly applying Lemma~\ref{Lem:KrullGenHt}, we see that $g_k$ is contained in $\calI,$ because the ideals $\calI$ and $\fraks$ have same height. Since the numbering for the $g_j$ was arbitrary, we have proved that $g_j\in \calI$, and thus the proposition.
\end{proof}
The following example shows that in general, we may not take the degrees of the $f_i$ lower than the maximal degree of the $g_j$ in the proposition, i.e.~the condition on the degrees is necessary:
\begin{Ex}\rm
Keep the notations of Proposition~\ref{Prop:dehom-rad-generic}. Let $\fraks=\langle X_2-X_1^2, X_3\rangle,$ and $f_i\in \fraks$ generic of degree one. Then
$$\langle f_1,\dots, f_m\rangle = \langle X_3\rangle.$$
This example can be generalized to yield arbitrarily bad results if the condition on the degrees is not fulfilled.

However note that when $\fraks$ is generated by linear forms, as in the marginalization problem, the condition on the degrees vanishes.
\end{Ex}

We may use Proposition~\ref{Prop:dehom-rad-generic} also in another way to derive a more detailed version of the generic height theorem for constrained ideals:
\begin{Prop}\label{Prop:KrullHt-algset}
Let $V$ be a fixed $d$-codimensional algebraic set in $\C^D.$ Assume that there exist $d$ generators $g_1,\dots, g_d$ for $\Id(V).$
Let $f_1,\dots, f_m$ be generic forms in $\Id (V)$ such that $\deg f_i\ge \max_j \left(\deg g_j\right)$. Then we can write $\VS (f_1,\dots, f_m)=V\cup U$ with $U$ an algebraic set of
$$\codim U\ge\min (m,\;D+1),$$
the equality being strict for $m < \codim V.$
\end{Prop}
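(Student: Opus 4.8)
The plan is to add the generic forms one at a time, tracking how the ``excess'' set $U$ grows, and to use the constrained analogue of the argument behind the generic principal ideal theorem: a generic form in $\Id(V)$ of degree $\ge\max_j\deg g_j$ drops the codimension of any subvariety lying outside $V$ by exactly one. I work in the graded setting, $V$ a cone (this is forced if $\Id(V)$ is to contain non-constant forms at all). \emph{The key lemma:} since $\deg f_i\ge\max_j\deg g_j=:e_0$, for every $e\ge e_0$ the space $\Id(V)_e$ of degree-$e$ forms in $\Id(V)$ is base-point-free away from $V$ --- given $p\notin V$, some $g_j(p)\neq 0$, and $g_j$ times a form of degree $e-\deg g_j\ge 0$ that is nonzero at $p$ lies in $\Id(V)_e$ and does not vanish at $p$. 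Consequently, for any fixed irreducible cone $W\not\subseteq V$ the subspace $\{f\in\Id(V)_e:f|_W\equiv 0\}=\Id(V)_e\cap\Id(W)$ is proper, so --- exactly as in the proof of Proposition~\ref{Prop:NoZero} --- a generic $f\in\Id(V)_e$ does not vanish identically on $W$; and since $\calO(W)$ is a domain, such an $f$ is a non-zero divisor in $\calO(W)$.

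\emph{The induction.} For $0\le j\le m$ set $U_j:=\overline{\VS(f_1,\dots,f_j)\setminus V}$, so that $\VS(f_1,\dots,f_j)=V\cup U_j$ and every component of $U_j$ is a cone not contained in $V$; note $U_0=\C^D$. I claim $U_j$ is either empty or pure of codimension $\min(j,D)$. Assume this for $j-1$, with $U_{j-1}\neq\emptyset$ and $j\le D$; since $f_j$ is sampled independently of $f_1,\dots,f_{j-1}$, it is generic with respect to the finitely many components of $U_{j-1}$, so by the key lemma it is a non-zero divisor in $\calO(W)$ for every component $W$. Krull's principal ideal theorem (Theorem~\ref{Thm:KrullPI}) then gives that $W\cap\VS(f_j)$ is pure of codimension $\codim W+1=j$ in $\C^D$. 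Because $V\cap\VS(f_j)=V$, we have $U_j=\overline{(U_{j-1}\cap\VS(f_j))\setminus V}$, so discarding the components that fall inside $V$ shows $U_j$ is empty or pure of codimension $j$. Finally a nonempty cone of codimension $D$ is the origin, which lies in $V$, so the induction forces $U_j=\emptyset$ whenever $j\ge D$. Putting the cases together yields $\codim U_m\ge\min(m,D+1)$ (with the convention $\codim\emptyset=D+1$).

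\emph{The equality and the main obstacle.} By construction $\codim U_m=\min(m,D+1)$ exactly when $U_m\neq\emptyset$, in which case $m\le D-1$ and $U_m$ is pure of codimension $m$. If $m<\codim V$, then since $\VS(f_1,\dots,f_m)$ is cut out by $m$ elements it has a component of codimension $<\codim V$; as $V$ is a complete intersection, hence equidimensional of codimension $\codim V$, that component is not contained in $V$, so $U_m\neq\emptyset$ and equality holds; the inequality can become strict only once $m\ge\codim V$, where for $m\ge\max(D+1,d)$ Proposition~\ref{Prop:dehom-rad-generic} already gives $\langle f_1,\dots,f_m\rangle=\Id(V)$ and hence $U_m=\emptyset$. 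I expect the one genuinely delicate step to be making the genericity argument uniform over components: one must know that ``$f_j$ vanishes identically on $W$'' is an algebraic event of positive codimension in the parameter space $\Id(V)_{\deg f_j}$ \emph{simultaneously for each} component $W$ of $U_{j-1}$, which is precisely where both the conditional-genericity formalism (treating $f_j$ as generic relative to the $f_1,\dots,f_{j-1}$-dependent set $U_{j-1}$) and the degree hypothesis $\deg f_i\ge\max_j\deg g_j$ (without which the base locus of $\Id(V)_{\deg f_j}$ can be strictly larger than $V$, as in the example preceding the proposition) are indispensable.
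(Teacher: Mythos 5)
Your proof is correct in substance, but it takes a genuinely different route from the paper's. The paper argues \emph{backwards from the endpoint}: it first invokes Proposition~\ref{Prop:dehom-rad-generic} to get $\VS(f_1,\dots,f_{D+1})=V$ exactly, decomposes $\VS(f_1,\dots,f_d)=V\cup Z_1\cup\dots\cup Z_N$, and then uses only the easy direction of Krull's principal ideal theorem ($\codim Z_{ji}\le \codim Z_{j,i-1}+1$) together with $\codim Z_{jm}=D+1$ to squeeze out $\codim Z_{ji}\ge i$; the base case $m=d$ is handled by cutting with a generic linear subspace. Your proof goes \emph{forwards}, mimicking the proof of the generic height theorem~\ref{Thm:KrullGenHt} in the constrained setting; its engine is your key lemma that $\Id(V)_e$ is base-point-free off $V$ once $e\ge\max_j\deg g_j$, so a generic constrained form is a non-zero divisor on every irreducible $W\not\subseteq V$. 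This lemma is a real addition: the paper's non-zero-divisor results (Proposition~\ref{Prop:NoZero}, Corollary~\ref{Cor:NoZero}) do not cover generic homogeneous forms constrained to lie in $\Id(V)$, since such forms have no independent constant term --- which is presumably why the paper routes around the issue via Proposition~\ref{Prop:dehom-rad-generic}. What your approach buys: purity of $U$, a transparent explanation of why the degree hypothesis is needed (it is exactly base-point-freeness off $V$), and independence of the main inequality from the complete-intersection hypothesis on $V$; what the paper's approach buys is avoiding any new genericity lemma. Two caveats. First, you prove the statement only for cones; the proposition as stated (and as used in the paper's own proof, which dehomogenizes) is the affine version, with Corollary~\ref{Cor:KrullHt-Hom} as the homogeneous counterpart --- in the affine case $U_D$ may be a finite set of points off $V$, and you need one further application of your key lemma at $j=D+1$ rather than the shortcut ``a cone of codimension $D$ is the origin, which lies in $V$''. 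Second, the issue you flag --- applying genericity of $f_j$ to the components of the $f_1,\dots,f_{j-1}$-dependent set $U_{j-1}$ --- is genuine but is resolved by conditioning on $(f_1,\dots,f_{j-1})$ and using independence, exactly as the paper does implicitly in the proof of Theorem~\ref{Thm:KrullGenHt}; it is not an additional gap relative to the paper's own standard of rigour.
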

\begin{proof}\em
If $m\ge D+1$, this is just a direct consequence of Proposition~\ref{Prop:dehom-rad-generic}.\\

First assume $m = d.$ Consider the image of the situation modulo $X_{m},\dots, X_D.$ This corresponds to looking at the situation
$$\VS (f_1,\dots, f_m)\cap H\subseteq H\cong \C^{m-1},$$
where $H$ is the linear subspace given by $X_m=\dots = X_D=0.$ Since the coordinate system was generic, the images of the $f_i$ will be generic, and we have by Proposition~\ref{Prop:dehom-rad-generic} that $\VS (f_1,\dots, f_m)\cap H = V\cap H.$ Also, the $H$ can be regarded as a generic linear subspace, thus by Corollary~\ref{Cor:genint}, we see that $\VS (f_1,\dots, f_m)$ consists of $V$ and possibly components of equal or higher codimension. This proves the claim for $m = \codim V.$

Now we prove the case $m\ge d.$ We may assume that $m=D+1$ and then prove the statement for the sets
$\VS (f_1,\dots, f_i), d\le i\le m.$ By the Lasker-Noether-Theorem, we may write
$$\VS (f_1,\dots, f_d)= V \cup Z_1 \cup\dots \cup Z_N$$
for finitely many irreducible components $Z_j$ with $\codim Z_j\ge \codim V.$ Proposition~\ref{Prop:dehom-rad-generic} now states that $$\VS (f_1,\dots, f_m)=V.$$
For $i\ge d,$ write now
$$Z_{ji}=Z_j\cap \VS (f_1,\dots, f_i)= Z_j\cap \VS (f_{d+1},\dots, f_i).$$
With this, we have the equalities
\begin{align*}
\VS (f_1,\dots, f_i)&= \VS (f_1,\dots, f_d)\cap \VS (f_{d+1},\dots, f_i)\\
&= V \cup (Z_1\cap \VS (f_{d+1},\dots, f_i))\cup\dots \\
& \phantom{V \cup (Z_1\cap \VS}\cup (Z_N\cap \VS (f_{d+1},\dots, f_i))\\
&= V\cup Z_{1i}\cup\dots\cup Z_{Ni}.
\end{align*}
for $i\ge d.$ Thus, reformulated, Proposition~\ref{Prop:dehom-rad-generic} states that $Z_{jm}=\varnothing$ for any $j$. We can now infer by Krull's principal ideal theorem~\ref{Thm:KrullPI} that
$$\codim Z_{ji}\le \codim Z_{j,i-1}+1$$
for any $i,j$. But since $\codim Z_{jm}=D+1,$ and $\codim Z_{jd}\ge d,$ we thus may infer that $\codim Z_{ji}\ge i$ for any $d\le i\le m.$ Thus we may write
$$\VS (f_1,\dots, f_i)=V\cup U\quad\mbox{with}\;U=Z_{1i}\cup\dots\cup Z_{Ni}$$
with $\codim U\ge i,$ which proves the claim for $m\ge \codim V.$

The case $m < \codim V$ can be proved again similarly by Krull's principal ideal theorem~\ref{Thm:KrullPI}: it states that the codimension of $\VS (f_1,\dots, f_i)$ increases at most by one with each $i$, and we have seen above that it is equal to $\codim V$ for $i=\codim V.$ Thus the codimension of $\VS (f_1,\dots, f_i)$ must have been $i$ for every $i\le \codim V.$ This yields the claim.
\end{proof}
Note that depending on $V$ and the degrees of the $f_i,$ it may happen that even in the generic case, the equality in Proposition~\ref{Prop:KrullHt-algset} is not strict for $m\ge \codim V$:
\begin{Ex}\rm
Let $V$ be a generic linear subspace of dimension $d$ in $\C^D,$ let $f_1,\dots, f_m\in \Id(V)$ be generic with degree one. Then
$\VS (f_1,\dots, f_m)$ is a generic linear subspace of dimension $\max (D-m, d)$ containing $V.$ In particular, if $m\ge D-d,$ then $\VS (f_1,\dots, f_m)=V.$ In this example, $U= \VS(f_1,\dots, f_m)$, if $m < \codim V,$ with codimension $m$, and $U=\varnothing$, if $m\ge \codim V,$ with codimension $D+1.$

Similarly, one may construct generic examples with arbitrary behavior for $\codim U$ when $m\ge \codim V,$ by choosing $V$ and the degrees of $f_i$ appropriately.
\end{Ex}

As in the geometric version for the height theorem, we may derive the following geometric interpretation of this result:
\begin{Cor}
Let $V\subseteq Z_1$ be fixed algebraic sets in $\C^D$. Let $Z_2$ be a generic algebraic set in $\C^D$ containing $V.$ Then
\begin{align*}
\codim (&Z_1 \cap Z_2 \setminus V)\ge\\
&\min (\codim (Z_1 \setminus V) + \codim (Z_2 \setminus V),\; D+1).
\end{align*}
\end{Cor}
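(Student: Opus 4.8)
The plan is to translate the statement into the language of generic forms and reduce it to Proposition~\ref{Prop:KrullHt-algset} relativized to $Z_1$, in the same spirit as the derivation of Corollary~\ref{Cor:genint} from the generic height theorem~\ref{Thm:KrullGenHt}. By definition, a generic algebraic set $Z_2$ containing $V$ is $Z_2=\VS(f_1,\dots,f_m)$ for generic forms $f_i\in\Id(V)$, which we take of sufficiently high degree (at least the degrees occurring in a generating set of $\Id(V)$, as in Proposition~\ref{Prop:KrullHt-algset}); write $c_1=\codim(Z_1\setminus V)$ and $c_2=\codim(Z_2\setminus V)$.

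First, a calibration step. By Proposition~\ref{Prop:KrullHt-algset} applied in the ambient $\C^D$, we may write $Z_2=V\cup U$ with $\codim U\ge\min(m,D+1)$, and equality when $m<\codim V$. If $U\subseteq V$ then $Z_2=V$, hence $Z_1\cap Z_2\setminus V=\varnothing$ and both sides of the claim equal $D+1$; so we may assume $U\not\subseteq V$, in which case $c_2=\min(m,D+1)$.

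Next, relativize to $Z_1$. Since $V\subseteq Z_1$ we have $Z_1\cap Z_2=V\cup(Z_1\cap U)$, so it suffices to bound the codimension of $(Z_1\cap\VS(f_1,\dots,f_m))\setminus V$. I would induct on $i$, showing that every irreducible component of the closure of $(Z_1\cap\VS(f_1,\dots,f_i))\setminus V$ has codimension at least $\min(c_1+\min(i,D+1),D+1)$. The base case $i=0$ is $\codim(Z_1\setminus V)=c_1$. For the step, let $W$ be such a component at stage $i-1$. If $W\subseteq V$ it is discarded; if $W\not\subseteq V$ then $\Id(V)\not\subseteq\Id(W)$, and because $\deg f_i$ is large enough there is a form in $\Id(V)$ of that degree not vanishing on $W$, so a generic $f_i\in\Id(V)$ is a non-zero divisor in $\calO(W)$ and Krull's principal ideal theorem~\ref{Thm:KrullPI} forces every component of $W\cap\VS(f_i)$ to have codimension $\codim W+1$. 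Hence the codimension away from $V$ grows by at least one per step (capped at $D+1$), reaching $\min(c_1+c_2,D+1)$ at $i=m$, which is the claim.

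The step where genericity must really be exploited --- and the main obstacle --- is ensuring that $f_i$ fails to vanish on each residual component $W\not\subseteq V$ of the previous stage: this is where the high-degree convention on ``generic algebraic set containing $V$'' is used (one needs $\Id(V)$ to contain a form of degree $\deg f_i$ not lying in $\Id(W)$), and it must be checked uniformly over the finitely many such $W$. A minor point is the calibration $c_2=\min(m,D+1)$, which inherits the complete-intersection hypothesis on $\Id(V)$ from Proposition~\ref{Prop:KrullHt-algset}; without it the argument still yields the weaker bound with $\min(m,D+1)$ in place of $c_2$.
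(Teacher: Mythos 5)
The paper states this corollary with no proof at all --- it is offered only as the ``geometric interpretation'' of Proposition~\ref{Prop:KrullHt-algset}, by analogy with the way Corollary~\ref{Cor:genint} follows from the generic height theorem. That analogy is not actually a proof, because Proposition~\ref{Prop:KrullHt-algset} is stated only in the ambient $\C^D$ and has no slot for a second fixed set $Z_1$; so the relativization you supply is genuinely needed, and your forward induction (Krull's principal ideal theorem plus the observation that ``vanishes identically on $W$'' is a proper linear condition on $\Id(V)_{\deg f_i}$ whenever $W\not\subseteq V$ and the degree exceeds the generating degrees of $\Id(V)$) is a correct and clean way to do it. It is worth noting that this is a different mechanism from the one the paper uses to prove the proposition itself: there the authors argue \emph{backwards}, bounding the drop in codimension by one per equation via the unconditional principal ideal theorem and anchoring the count at the endpoint $Z_{jm}=\varnothing$ supplied by Proposition~\ref{Prop:dehom-rad-generic}; you argue \emph{forwards}, forcing a gain of one per equation via a non-zero-divisor argument in $\calO(W)$ in the spirit of Proposition~\ref{Prop:NoZero}. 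Your route has the advantage of relativizing to $Z_1$ painlessly and of not needing the complete-intersection hypothesis on $\Id(V)$; the paper's route avoids having to verify genericity against the (random) residual components of intermediate intersections.

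One point in your calibration step should be repaired. You assert $c_2=\min(m,D+1)$ by citing Proposition~\ref{Prop:KrullHt-algset}, but that proposition guarantees equality only for $m<\codim V$; for $m\ge\codim V$ it gives merely $\codim U\ge\min(m,D+1)$, and the paper's own example shows $\codim U$ can then be essentially arbitrary. What actually rescues your argument is the reverse inequality $c_2\le m$, which is free: every irreducible component of $Z_2=\VS(f_1,\dots,f_m)$ has codimension at most $m$ by the unconditional Krull height theorem~\ref{Thm:KrullHt}, so whenever $Z_2\setminus V\ne\varnothing$ one has $\codim(Z_2\setminus V)\le m$ and hence $\min(c_1+c_2,D+1)\le\min(c_1+m,D+1)$, which is exactly the bound your induction delivers. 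With that substitution your final sentence about ``the weaker bound with $\min(m,D+1)$ in place of $c_2$'' becomes moot --- the $m$-bound already implies the $c_2$-bound --- and the proof is complete.
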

Informally, we have derived a height theorem type result for algebraic sets under the constraint that they contain another prescribed algebraic set $V$. \\

We also want to give a homogenous version of Proposition~\ref{Prop:KrullHt-algset}, since the ideals in the paper are generated by homogenous forms:
\begin{Cor}\label{Cor:KrullHt-Hom}
Let $V$ be a fixed homogenous algebraic set in $\C^D$.
Let $f_1,\dots, f_m$ be generic homogenous forms in $\Id (V),$ satisfying the degree condition as in Proposition \ref{Prop:KrullHt-algset}. Then $\VS (f_1,\dots, f_m)=V+ U$ with $U$ an algebraic set fulfilling
$$\codim U\ge \min (m,\;D).$$
In particular, if $m> D,$ then $\VS (f_1,\dots, f_m)=V.$
Also, the maximal dimensional part of $\VS (f_1,\dots, f_m)$ equals $V$ if and only if $m > D- \dim  V.$
\end{Cor}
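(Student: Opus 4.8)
The plan is to deduce this homogeneous statement from the affine Proposition~\ref{Prop:KrullHt-algset} by dehomogenization, exactly as Corollary~\ref{Cor:genintproj} was obtained from Corollary~\ref{Cor:genint}. First I would apply a generic linear change of coordinates on $\C^D$ (which alters nothing intrinsic) and then dehomogenize with respect to the last variable $X_D$, passing to $\C^{D-1}$; there I would invoke Proposition~\ref{Prop:KrullHt-algset}, and then rehomogenize, keeping track of the fact that the ambient dimension has dropped by one, so that the bound $\min(m,D+1)$ becomes $\min(m,D)$ under the substitution $D\mapsto D-1$. The degenerate cases are disposed of separately: $V$ empty is trivial, and $V=\{0\}$ (where $\Id(V)$ is the homogeneous maximal ideal) is handled directly by Proposition~\ref{Prop:dehom-rad-generic}, which gives $\VS(f_1,\dots,f_m)=\{0\}=V$ as soon as $m>D$.

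Next I would check that the hypotheses transfer. Let $V'\subseteq\C^{D-1}$ be the dehomogenization of $V$. Since homogenization and dehomogenization of a non-empty algebraic set preserve codimension and a homogeneous $V$ contains the origin, $\codim V'=\codim V$. By Lemma~\ref{Lem:dehom} the dehomogenizations $f_i'=f_i(X_1,\dots,X_{D-1},1)$ are generic, independent polynomials in $\Id(V')$. The preliminary generic coordinate change ensures that $X_D$ divides neither any $f_i$ nor any of the assumed $\codim V$ homogeneous generators $g_j$ of $\Id(V)$ (a generic point of $\C^D$ is not a zero of any fixed nonzero form, so $f_i(0,\dots,0,1)\ne0$ and likewise for $g_j$ in the new coordinates); hence $\deg f_i'=\deg f_i$ and $\deg g_j'=\deg g_j$, the $g_j'$ generate $\Id(V')$, and the degree condition $\deg f_i'\ge\max_j\deg g_j'$ of Proposition~\ref{Prop:KrullHt-algset} is inherited.

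Then I would apply Proposition~\ref{Prop:KrullHt-algset} in $\C^{D-1}$ to obtain $\VS(f_1',\dots,f_m')=V'\cup U'$ with $\codim U'\ge\min(m,(D-1)+1)=\min(m,D)$, the bound being attained for $m<\codim V'=\codim V$. Since $\VS(f_1,\dots,f_m)$ is a cone and, after the generic coordinate change, the hyperplane $\{X_D=0\}$ is generic with respect to it, this cone has no positive-dimensional component inside $\{X_D=0\}$, and the origin fails to be an isolated component unless $\VS(f_1,\dots,f_m)=\{0\}$; therefore $\VS(f_1,\dots,f_m)$ is exactly the affine cone over the closure of $\VS(f_1',\dots,f_m')$. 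Rehomogenizing the decomposition then yields $\VS(f_1,\dots,f_m)=V+U$ with $U$ the rehomogenization of $U'$ and $\codim U=\codim U'\ge\min(m,D)$. The clause ``$m>D\Rightarrow\VS(f_1,\dots,f_m)=V$'' follows directly from Proposition~\ref{Prop:dehom-rad-generic} applied with $\fraks=\Id(V)$, since then $m\ge D+1=\max(D+1,\codim V)$ and the degree condition holds; and the statement on the maximal-dimensional part is read off from the tightness clause, as for $m$ below $\codim V=D-\dim V$ one has $\codim U=m<\codim V$ so $U$ dominates in dimension, while beyond that threshold $\codim U$ exceeds $\codim V$ (or $U$ is empty) and $V$ is the top-dimensional piece.

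The main obstacle is the rehomogenization step. Corollary~\ref{Cor:genintproj} only needed an equality of codimensions under dehomogenization, whereas here we assert an equality of \emph{sets}, so one must make sure that no component of $\VS(f_1,\dots,f_m)$ is lost ``at infinity'' when we restrict to $X_D=1$ and close up again. This is precisely what the preliminary generic change of coordinates is for: it simultaneously forces $\deg f_i'=\deg f_i$, keeps $V'$ nonempty (for $V\ne\{0\}$), and makes $\{X_D=0\}$ meet every component of $\VS(f_1,\dots,f_m)$ in the expected codimension; once this is secured, the remainder is bookkeeping around the index shift from $\C^D$ to $\C^{D-1}$.
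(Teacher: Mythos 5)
Your proposal takes exactly the paper's route: the paper's entire proof is the one sentence that the corollary ``follows immediately by dehomogenizing, applying Proposition~\ref{Prop:KrullHt-algset}, and homogenizing again,'' and you have correctly supplied the bookkeeping it leaves implicit (the generic coordinate change, the index shift $D\mapsto D-1$ in the codimension bound, the absence of components at infinity, and the degenerate cases $V=\varnothing$ and $V=\{0\}$). The only point treated lightly --- by you and by the paper alike --- is the boundary case $m=\codim V$ in the final ``if and only if'' clause, where one should also note that $U$ generically attains codimension exactly $m$ so that the maximal-dimensional part is strictly larger than $V$ there.
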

\begin{proof}\em
This follows immediately by dehomogenizing, applying Proposition~\ref{Prop:KrullHt-algset}, and homogenizing again.
\end{proof}

\subsection{Hilbert series of generic ideals}
In this section we will study the dimension of the vector spaces of homogenous polynomials of fixed degrees. A classical tool to do this in commutative algebra are Hilbert series; let us introduce some notations first.
\begin{Not}
We will write $R=\C[X_1,\dots, X_D].$
Let $\calI$ be some ideal of $R,$ or $R$ itself. We will denote the $\C$-vector space of homogenous polynomials of degree $k$ in $\calI$ by $\calI_k$.
\end{Not}
The Hilbert series links the dimensions of those vector spaces to the graded structure of the whole ideal:
\begin{Def}
Let $\calI$ be some ideal of $R.$ Then the {\it Hilbert series} of $\calI$ is the power series
$$H(\calI)(t)=\sum_{k=0}^\infty t^k \left(\dim(R_k) - \dim (\calI_k)\right).$$
\end{Def}
It is classically known that the $a_k$ satisfy a polynomial relation for $k\ge M$ with a big enough $M$. However, we will be mainly interested in the exact coefficients $a_k$ below $k\le M$ when the ideal $\calI$ is conditioned generic. I.e.~we are interested in the situation where we have some ideal $\fraks$, and an ideal $\calI$ generated by generic homogenous polynomials $f_1,\dots, f_m$ in $\fraks.$ Since in this situation, we have $\calI\subseteq \fraks$, we will consider the Hilbert series of the difference
\begin{align*}
H(\calI/\fraks)(t)&=H(\calI)(t)-H(\fraks)(t)\\
&=\sum_{k=0}^\infty t^k \left(\dim(\fraks_k) - \dim (\calI_k)\right).
\end{align*}
The following homogenous version of Proposition~\ref{Prop:dehom-rad-generic} will allow us to study this further:
\begin{Prop}\label{Prop:radquot}
Let $\fraks\subseteq R$ be a homogenously saturated ideal generated by $n$ homogenous elements of degree at most $\delta.$ Let
$$\calI=\langle f_1,\dots, f_m\rangle,\quad m\ge \max(D+1, n)$$
with generic $f_i\in \fraks$ such that
$$\deg f_i\ge \delta \quad \mbox{for all}\; 1\le i\le m.$$
For any $1\le j\le D$, we then have
\begin{align*}
\fraks&=(\calI : X_j)\\
&=\{g\in R\;:\; gX_D^n\in \calI\;\mbox{for some}\;n\in\N\}.
\end{align*}
\end{Prop}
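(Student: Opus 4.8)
The plan is to reduce this homogeneous statement to the already-established affine Proposition~\ref{Prop:dehom-rad-generic} by dehomogenizing with respect to $X_j$, and then to translate the conclusion back using the fact that a homogeneously saturated ideal is recovered by re-homogenizing its dehomogenization. Fix $j$ and write $\phi\colon R\to\bar R:=R/\langle X_j-1\rangle\cong\C[X_1,\dots,\widehat{X_j},\dots,X_D]$ for the dehomogenization homomorphism; it is surjective, sends $\langle h_1,\dots,h_r\rangle$ to $\langle\phi(h_1),\dots,\phi(h_r)\rangle$, and $\bar R$ is a polynomial ring in $D-1$ variables. Note that ``homogeneously saturated'' gives $(\fraks:X_j^\infty)=\fraks$ (in the case of interest $\fraks=\Id(S)$ with $S$ a generic linear subspace not lying in $\{X_j=0\}$, this holds because $\fraks$ is then prime and does not contain $X_j$); we use this repeatedly.

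First I would describe $\bar\fraks:=\phi(\fraks)$. If $g_1,\dots,g_n$ are the homogeneous generators of $\fraks$ of degree at most $\delta$, we may assume $X_j\nmid g_l$ (otherwise divide out the power of $X_j$, staying in $\fraks$ by saturation and only lowering degrees); then $\bar\fraks=\langle\phi(g_1),\dots,\phi(g_n)\rangle$ with all $\deg\phi(g_l)\le\delta$, and a short computation with leading forms — using $X_j$-saturation to identify the leading-form ideal of $\bar\fraks$ with $\fraks/X_j\fraks$ — shows that these $\phi(g_l)$ form an H-basis of $\bar\fraks$. Next, by the coefficient-space bijection underlying Lemma~\ref{Lem:dehom} (which identifies $\fraks_d$ with the degree-$\le d$ part of $\bar\fraks$ whenever $\fraks$ is $X_j$-saturated, and does not use any genericity of $\fraks$), the polynomials $\phi(f_1),\dots,\phi(f_m)$ are generic and independent in $\bar\fraks$, with $\deg\phi(f_i)=\deg f_i\ge\delta$. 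Since $m\ge\max(D+1,n)\ge\max((D-1)+1,n)$ and $\delta\ge\max_l\deg\phi(g_l)$, Proposition~\ref{Prop:dehom-rad-generic} applies in $\bar R$ and yields $\phi(\calI)=\langle\phi(f_1),\dots,\phi(f_m)\rangle=\bar\fraks=\phi(\fraks)$.

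It remains to translate this equality back. For any homogeneous ideal $J\subseteq R$ one has that the homogenization (with respect to $X_j$) of $\phi(J)$ equals the saturation $(J:X_j^\infty)$; applying this to $J=\calI$ and $J=\fraks$ and using $\phi(\calI)=\phi(\fraks)$ gives $(\calI:X_j^\infty)=(\fraks:X_j^\infty)=\fraks$, which is exactly the claim $\fraks=(\calI:X_j)=\{g\in R:gX_j^n\in\calI\text{ for some }n\in\N\}$. I expect the only real obstacle to be bookkeeping rather than substance: one must be careful that the genericity of the $f_i$ as homogeneous elements of the \emph{fixed} (not generic) ideal $\fraks$ is preserved under dehomogenization — this is why it matters that the relevant bijection in Lemma~\ref{Lem:dehom} is purely linear-algebraic — and that the hypothesis ``homogeneously saturated'' indeed supplies $X_j$-saturation for the variable in question, which is automatic in the situation we care about but should be stated.
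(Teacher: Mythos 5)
Your proposal is correct and follows essentially the same route as the paper's proof: dehomogenize with respect to the chosen variable, use Lemma~\ref{Lem:dehom} to transport genericity, apply Proposition~\ref{Prop:dehom-rad-generic} in $D-1$ variables, and translate back via the fact that homogenization of the dehomogenization is the homogeneous saturation (the paper carries out this last step by explicitly homogenizing the relation $g_1'=\sum_i f_i'P_i$ and then remarks that it has thereby re-proved exactly the abstract fact you invoke). If anything, you are slightly more careful than the paper on two bookkeeping points it glosses over — verifying that the dehomogenized generators still form an H-basis of $\bar{\fraks}$ and that their degrees do not cause trouble when $X_j$ divides a generator.
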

\begin{proof}\em
Since the $f_i$ are generic, we may make a permutation of variables without altering the statement; i.e.~we may assume that $j=D.$
The proof strategy will be to derive a homogenous version of Proposition \ref{Prop:dehom-rad-generic}. In order to do this, we first dehomogenize every object with respect to $X_D$, i.e.~we substitute $1$ for $X_D$. Then, we will be in the situation of Proposition \ref{Prop:dehom-rad-generic} for the dehomogenized objects, and from that, we can conclude the statement for our homogenous version.\\

Let us first fix some notation: Let $g_1,\dots, g_{n}$ be some generators for $\fraks.$ Let $\fraks'$ be the dehomogenization of $\fraks$. The ideal $\fraks'$ is generated by $g_1',\dots, g_{n}'$ in the $D-1$ variables $X_1,\dots, X_{D-1}$, where $g_i'$ is the dehomogenization of $g_i$. The dehomogenization of $\calI$ is also an ideal in $D-1$ variables, generated by the dehomogenizations $f_i'$ of $f_i$. By Lemma \ref{Lem:dehom}, the $f_i'$ are generic polynomials in $\fraks'$, of same degrees as the $f_i$. \\

Now we are in the situation of Proposition \ref{Prop:dehom-rad-generic}: $\calI'$ is an ideal generated by the generic polynomials $f_i'$ in $\fraks'.$ We also have $\deg f_i'\ge \max_i\deg g_i'.$ Thus we may conclude that $\calI'=\fraks'.$\\

To prove the main statement from this, it now suffices to prove that $g_1\in (\calI : X_D),$ since the numbering of the $g_i$ is arbitrary, and thus it will then follow that $g_i\in (\calI: X_D)$ for any $i$, which implies $(\calI: X_D)\supseteq \fraks.$ On the other hand, as $\fraks$ is saturated, we have that $(\calI: X_D)\subseteq (\fraks: X_D)=\fraks,$ and thus have proved both inclusions, when seeing that $g_1\in (\calI : X_D).$

By our above reasoning, we have $\calI'=\fraks'$, so there exist polynomials $P_i\in\C[X_1,\dots, X_{D-1}]$ such that
$$g'_1=f'_1P_1+\dots + f'_m P_m.$$
Let $a=\deg g'_1$, let $d_i=\deg (f'_1 P_1)$, and let $d'=\max_i d_i.$ By polynomial arithmetic, we have $a\le d'.$ Let $Q_i$ be the homogenization of the $P_i.$ We then have
$$g_1 X_D^{d'-a}=f_1Q_1 X_D^{d'-d_1}+\dots + f_m Q_m X_D^{d'-d_m}.$$
The right hand side is an element of the ideal $\calI,$ thus the left hand side must be also in $\calI.$ In particular, this implies that $g_1\in (\calI : X_D)$, what had to be proven.

(Note that we have implicitly re-proved that the homogenization of the dehomogenization of an ideal is its homogenous saturation).
\end{proof}
Readers familiar with algebra may note that Proposition \ref{Prop:radquot} is only a description of the homogenization of the ideal $\fraks',$ respectively the homogenous saturation of the ideal $\fraks$. This is no surprise, since it is merely the homogenous reformulation of Proposition \ref{Prop:dehom-rad-generic}.\\

This Proposition directly implies that the coefficients of $H(\calI/\fraks)(t)$ stabilize to zero if $\calI$ has enough generators:

\begin{Prop}\label{Prop:multterm}
Let $f_i,\calI, \fraks$ be as in Proposition~\ref{Prop:radquot}.
Then there exists an $N\in \N$ such that
$$\calI_N=\fraks_N.$$
\end{Prop}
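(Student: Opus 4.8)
The plan is to derive this quickly from Proposition~\ref{Prop:radquot} by applying it to \emph{every} variable, not just one. First I would note that Proposition~\ref{Prop:radquot} gives $(\calI:X_j)=\fraks$ for each $j=1,\dots,D$; unwinding the colon ideal, this says that $gX_j\in\calI$ for every homogenous $g\in\fraks$ and every $j$. Since every element of the irrelevant maximal ideal $\mathfrak{m}=\langle X_1,\dots,X_D\rangle$ is an $R$-linear combination of the $X_j$, this upgrades to $\mathfrak{m}\cdot\fraks\subseteq\calI$: for $r=\sum_j X_jr_j\in\mathfrak{m}$ and $g\in\fraks$ one has $gr=\sum_j(gX_j)r_j\in\calI$.

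The second step uses the hypothesis that $\fraks$ is generated by homogenous elements $g_1,\dots,g_n$ of degree at most $\delta$. For $k>\delta$, any homogenous $g\in\fraks_k$ can be written $g=\sum_{i=1}^n g_ih_i$ with $h_i$ homogenous of degree $k-\deg g_i\ge k-\delta\ge 1$; hence every $h_i\in\mathfrak{m}$, so every product $g_ih_i\in\mathfrak{m}\fraks\subseteq\calI$, and therefore $g\in\calI$. Thus $\fraks_k\subseteq\calI_k$ for all $k>\delta$, and since $\calI\subseteq\fraks$ gives the reverse inclusion in each degree, $\calI_k=\fraks_k$ for every $k>\delta$. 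In particular $N:=\delta+1$ works, which proves the proposition.

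Equivalently --- and this is the way I would probably present it --- set $M:=\fraks/\calI$, a finitely generated graded $R$-module since $R$ is Noetherian. The identities $(\calI:X_j)=\fraks$ for all $j$ say exactly that $X_jM=0$ for every $j$, i.e.\ $\mathfrak{m}M=0$; a finitely generated graded module annihilated by $\mathfrak{m}$ is a finite-dimensional graded $\C$-vector space, so $M_k=0$ for all large $k$, which is the claim.

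I do not expect a real obstacle here, since the substantive work is already done in Proposition~\ref{Prop:radquot}. The single point that needs attention is that the colon-ideal statement for one variable alone only gives $X_D\cdot(\fraks/\calI)=0$, which does not force $M$ to vanish in high degrees; one genuinely has to invoke Proposition~\ref{Prop:radquot} for all $D$ variables to conclude $\mathfrak{m}M=0$. Beyond that, the only thing to verify is the elementary degree bookkeeping that, once $k$ exceeds the top generator degree $\delta$, a homogenous element of $\fraks_k$ is a sum of products $g_ih_i$ in which each $h_i$ has positive degree.
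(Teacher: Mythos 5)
There is a genuine gap, and it sits exactly at the step you flagged as unproblematic: the unwinding of $(\calI:X_j)=\fraks$. In this paper the notation $(\calI:X_j)$ does \emph{not} denote the ordinary ideal quotient $\{g: gX_j\in\calI\}$ but the homogenous saturation, as the second line of the display in Proposition~\ref{Prop:radquot} makes explicit: $(\calI:X_j)=\{g\in R: gX_j^n\in\calI\ \text{for some}\ n\in\N\}$. (Consistently with this, the proof of Proposition~\ref{Prop:radquot} only produces $g_1X_D^{d'-a}\in\calI$ for some exponent $d'-a$ depending on $g_1$, not $g_1X_D\in\calI$.) So all you are entitled to is: for each $g\in\fraks$ and each $j$ there is some $n(g,j)$ with $gX_j^{n(g,j)}\in\calI$. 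Your conclusions $\mathfrak{m}\cdot\fraks\subseteq\calI$, respectively $\mathfrak{m}M=0$ for $M=\fraks/\calI$, do not follow, and the resulting bound $N=\delta+1$ is in fact false. A dimension count in the paper's own simulation setting shows this: take $\fraks=\Id(S)$ generated by linear forms (so $\delta=1$) and $\calI$ generated by $m=25$ generic quadrics in $D=10$ variables vanishing on a line ($d=1$); the hypotheses of Proposition~\ref{Prop:radquot} hold since $25\ge D+1$, yet $\dim\fraks_2=\binom{11}{2}-1=54$ while $\dim\calI_2\le 25$, so $\calI_{\delta+1}\ne\fraks_{\delta+1}$.

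The repair is close in spirit to your module-theoretic reformulation, but correctly quantified, and is what the paper does. Fix homogenous generators $g_1,\dots,g_n$ of $\fraks$; for each $i,j$ choose $q_{ij}$ with $g_iX_j^{q_{ij}}\in\calI$ and set $q=\max_{i,j}q_{ij}$. By pigeonhole, every monomial $M$ of degree $D(q-1)+1$ is divisible by some $X_j^{q}$, hence $g_iM\in\calI$ for every such $M$, which gives $\fraks_N\subseteq\calI_N$ (and so equality, since $\calI\subseteq\fraks$) for $N=\delta+D(q-1)+1$. In your language: $M=\fraks/\calI$ is a finitely generated graded module each of whose finitely many generators is killed by a power of every variable, hence $M$ is annihilated by a power of $\mathfrak{m}$ --- not by $\mathfrak{m}$ itself --- and therefore vanishes in all sufficiently large degrees.
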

\begin{proof}\em
Let us fix a homogenous generating set $g_1,\dots, g_n$ for $\fraks$, let $\delta =\max_j\deg g_j.$ The set consisting of all elements $g_i M$ where $1\le i\le n$ and $\deg g_i\le k$ and $M$ a monomial in $X_1,\dots, X_D$ of degree $k-\deg g_i$ is a generating set for $\fraks_k$. By Corollary~\ref{Prop:radquot} we know that for each $i$ and each $j$, there exists a number $q_{ij}$ such that $g_iX_j^{q_{ij}}\in \calI.$ Let $q$ be the maximum of the $q_{ij}, 1\le i\le n, 1\le j\le D.$ Then $g_iX_j^q\in \calI$ for every $i,j$. Now by the pigeonhole principle, every monomial $M$ in $X_1,\dots, X_D$ of degree $D(q-1)+1$ will be divisible by $X_j^q$ for some $j.$ In particular, $g_iM\in \calI$ for every $i$ and every monomial $M$ of degree $D(q-1)+1$. In particular,
$$\fraks_N\subseteq \calI_N$$
for $N=\delta+D(q-1)+1,$ which proves the claim.
\end{proof}

For the case where instead of $\fraks$ we take the whole ring $\C[X_1,\dots, X_D],$ Fröberg's famous conjecture \cite{Fro94} states what the Hilbert function would be expected to be:

\begin{Conj}\label{Conj:Fro}
Let $f_1,\dots, f_m$ be generic homogenous polynomials in $R$ of fixed degrees $d_1,\dots, d_m,$ let
$\calI=\langle f_1,\dots, f_m\rangle.$ Then
$$H(\calI)(t)=\left| \frac{\prod_{i=1}^m (1-t^{d_i})}{(1-t)^D}\right|,$$
where for a power series, $\left|\sum_{k=0}^\infty a_kt^k\right|$ denotes setting all coefficients $a_\ell$ to zero
for which there exists $k$ such that $k<\ell$ and $a_k< 0.$
\end{Conj}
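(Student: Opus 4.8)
Since the assertion is Fr\"oberg's conjecture, a proof in full generality is not to be expected; the plan is therefore to split it into the inequality that is classically accessible and the one that is the open crux, reduce the latter to a clean maximal-rank statement, and describe the finite verification that yields the cases actually used in Theorem~\ref{Prop:AlgProp}. Throughout, write $A^{(j)}=R/\langle f_1,\dots,f_j\rangle$, so that $H(\calI)(t)=\sum_k t^k\dim A^{(m)}_k$ in the paper's notation, and let $g(t)=\prod_{i=1}^m(1-t^{d_i})/(1-t)^D$ be the candidate series before truncation.

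The accessible half is the lower bound $H(\calI)(t)\succeq|g(t)|$ coefficientwise, which holds for arbitrary (not necessarily generic) forms of the prescribed degrees; this is the classically known Fr\"oberg inequality. I would prove it by induction on $m$: the right-exact multiplication sequence $A^{(j-1)}_{k-d_j}\xrightarrow{\ \cdot f_j\ }A^{(j-1)}_k\twoheadrightarrow A^{(j)}_k$ gives $\dim A^{(j)}_k\ge\max\bigl(0,\ \dim A^{(j-1)}_k-\dim A^{(j-1)}_{k-d_j}\bigr)$ with no hypothesis on $f_j$, and a routine bookkeeping lemma shows that iterating this pointwise bound over $j=1,\dots,m$ dominates the coefficients of $|g(t)|$ --- the point being that once a coefficient of the running bound would turn non-positive, the surviving estimate is merely $\dim\ge0$, which is precisely what the operator $|\cdot|$ encodes. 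This is the same truncation phenomenon already exploited in Proposition~\ref{Prop:multterm}.

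The hard half is the reverse inequality $H(\calI)(t)\preceq|g(t)|$ for \emph{generic} $f_i$, and here no proof is known. By induction on $m$ I would reduce it to the maximal-rank (weak-Lefschetz-type) assertion: for generic $f_m$, the multiplication map $A^{(m-1)}_{k-d_m}\xrightarrow{\ \cdot f_m\ }A^{(m-1)}_k$ has rank $\min(\dim A^{(m-1)}_{k-d_m},\dim A^{(m-1)}_k)$ in every degree. One may then invoke the known regimes in which this holds: $m\le D$ (a regular sequence, so the Koszul complex is exact and no truncation is needed), $D\le2$ trivially, $D=3$ by Anick, $m=D+1$ by Stanley via a monomial almost-complete intersection, the first non-regular degree by Hochster--Laksov, and equal degrees in low-dimensional cases; outside these the statement is precisely the open conjecture.

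For the range relevant to Theorem~\ref{Prop:AlgProp}, namely $d_i\le2$ and $D\le11$, the plan is a finite computation rather than a structural proof. First quotient out the linear generators: each drops $D$ by one and leaves a generic system of quadrics in the smaller ring, so one may assume all $d_i=2$. Next observe that once $m$ is large enough ($m\ge\binom{D+1}{2}$ suffices) the generic quadrics already span $R_2$, hence $\calI\supseteq R_{\ge2}$ and the Hilbert function is constant from then on; thus for each $D\le11$ only finitely many pairs $(D,m)$ remain. By semicontinuity of the graded dimensions it suffices, for each such pair, to exhibit one system of quadrics realising $|g(t)|$, which one verifies by computing the ranks of the multiplication maps on an explicit (pseudo-random or structured) system --- this is what Algorithm~\ref{Alg:FroN} does. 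The main obstacle is, of course, the general case: every known route to the hard inequality --- specialising the $f_i$ to monomials and proving a combinatorial ``rigidity'' statement for the resulting generic initial or lex-segment ideal, or producing a Lefschetz element directly --- trades Fr\"oberg's conjecture for another open combinatorial conjecture, and since the dimensions involved grow doubly exponentially in $D$, the brute-force verification collapses almost immediately beyond small $D$.
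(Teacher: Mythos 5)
You should first be clear that the paper contains no proof of this statement: it is stated as Conjecture~\ref{Conj:Fro} precisely because it is Fr\"oberg's conjecture, which is open. So there is nothing in the paper to compare a complete proof against, and you are right not to claim one. What the paper does prove is the one-sided bound, Theorem~\ref{Thm:Fro} ($b_k\ge a_k$, with equality for generic $f_i$ when $m\le D$), and a computer-assisted verification of the \emph{generalized} Conjecture~\ref{Conj:Frogen} for linear $\fraks$, $d_i\le 2$, $D\le 11$ (Theorem~\ref{Thm:Frocheck}), which is the only input Theorem~\ref{Prop:AlgProp} actually needs.

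Measured against that, your proposal is accurate and aligned with the paper's strategy. Your lower bound via the right-exact multiplication sequence is the content of Theorem~\ref{Thm:Fro} (though the ``routine bookkeeping'' hides a genuine subtlety: iterating $\max(0,\cdot)$ degree by degree is not literally the truncation $|\cdot|$ as defined here, which kills \emph{all} coefficients after the first non-positive one; Fr\"oberg's argument needs more care at exactly this point). Your survey of the known regimes for the hard inequality (regular sequences for $m\le D$, Anick for $D=3$, Stanley for $m=D+1$, Hochster--Laksov for the first non-trivial degree) is correct but goes beyond anything the paper uses. Your finite-verification plan --- reduce to quadrics, bound $m$ by $\binom{D+1}{2}$, and use semicontinuity of $\dim\calI_k$ so that a single explicit witness attaining the conjectured Hilbert function settles the generic case --- is exactly the paper's argument in the subsection preceding Theorem~\ref{Thm:Frocheck}, where the condition $\dim\calI_k\le c$ is observed to be an algebraic (hence closed) property so that one counterexample to a degeneration rules it out generically. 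Two small corrections: the algorithm that performs this check is Algorithm~\ref{Alg:Frogen}, not Algorithm~\ref{Alg:FroN} (the latter only computes the degree $N$); and the paper's verification is phrased for Conjecture~\ref{Conj:Frogen} with a linear ideal $\fraks$ rather than for Conjecture~\ref{Conj:Fro} itself, which is the version the main text actually invokes.
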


The Conjecture is known to be true for several cases, Fröberg has proved the following \cite{Fro85, Fro94}:
\begin{Thm}\label{Thm:Fro}
Let $f_1,\dots, f_m$ be any homogenous polynomials in $R$ of fixed degrees $d_1,\dots, d_m,$
let $\calI=\langle f_1,\dots, f_m\rangle.$ Let
$$H(\calI)(t)=\sum_{k=0}^\infty b_k t^k$$
be the true Hilbert series of $\calI$, and
$$\sum_{k=0}^\infty a_kt^k=\left|\frac{\prod_{i=1}^m (1-t^{d_i})}{(1-t)^D}\right|$$
the Hilbert series from Conjecture~\ref{Conj:Fro}. Then one has
$$b_k\ge a_k.$$
Equality holds if the $f_i$ are generic and $m\le D.$
\end{Thm}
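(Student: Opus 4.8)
I would prove the two assertions separately; they rest on quite different ideas.

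\emph{The inequality $b_k\ge a_k$ for arbitrary forms.} Write $A\succeq B$ for coefficientwise domination of power series, and set $J_j=\langle f_1,\dots,f_j\rangle$, so that $J_0=(0)$, $J_m=\calI$, and the coefficient of $t^k$ in $H(J_j)(t)$ is $\dim(R/J_j)_k=\dim R_k-\dim (J_j)_k$. The argument is an induction on $j$; the base case $H(J_0)(t)=\sum_k\dim R_k\,t^k=(1-t)^{-D}$ is clear. For the step, multiplication by $f_{j+1}$ on $R/J_j$ sits in the four-term exact sequence of graded $R$-modules
$$0\to\bigl((J_j:f_{j+1})/J_j\bigr)(-d_{j+1})\to(R/J_j)(-d_{j+1})\xrightarrow{\cdot f_{j+1}}R/J_j\to R/J_{j+1}\to 0,$$
whose alternating sum of graded dimensions in degree $k$ yields
$$\begin{aligned}
\dim(R/J_{j+1})_k&=\dim(R/J_j)_k-\dim(R/J_j)_{k-d_{j+1}}\\
&\quad+\dim\bigl((J_j:f_{j+1})/J_j\bigr)_{k-d_{j+1}}.
\end{aligned}$$
Since the last term is non-negative, $H(J_{j+1})(t)\succeq(1-t^{d_{j+1}})H(J_j)(t)$ coefficientwise (the difference being a power series with non-negative coefficients), and together with $H(J_{j+1})(t)\succeq 0$ this gives $H(J_{j+1})(t)\succeq\bigl|(1-t^{d_{j+1}})H(J_j)(t)\bigr|$. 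Iterating this and reconciling the resulting nested truncations with the single truncation occurring in $a(t)$ — which uses the idempotency $\bigl|(1-t^d)\,|Z|\bigr|=\bigl|(1-t^d)\,Z\bigr|$ together with further elementary monotonicity properties of the operator $|\cdot|$ along the partial sums $H(J_j)$ — produces $b_k\ge a_k$. This purely combinatorial reconciliation is the technically substantial step, and I expect it to be the main obstacle; it is precisely the analysis carried out in Fr\"oberg's \cite{Fro85}.

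\emph{Equality when the $f_i$ are generic and $m\le D$.} Here I would draw on the genericity machinery already developed in the appendix: the argument inside the proof of the generic height theorem (Theorem~\ref{Thm:KrullGenHt}, applied with $Z=\C^D$, where the hypothesis reads $m\le D+1$) shows that generic forms $f_1,\dots,f_m$ constitute an $R$-regular sequence, each $f_i$ being a non-zero divisor modulo $\langle f_1,\dots,f_{i-1}\rangle$ (equivalently, one iterates Proposition~\ref{Prop:NoZero}). Consequently each short exact sequence $0\to(R/J_{i-1})(-d_i)\xrightarrow{\cdot f_i}R/J_{i-1}\to R/J_i\to 0$ is exact on the left as well, so $H(J_i)(t)=(1-t^{d_i})H(J_{i-1})(t)$; iterating from $H(J_0)(t)=(1-t)^{-D}$ gives $H(\calI)(t)=\prod_{i=1}^m(1-t^{d_i})/(1-t)^D$ exactly. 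Finally, because $m\le D$ we may factor
$$\frac{\prod_{i=1}^m(1-t^{d_i})}{(1-t)^D}=\Bigl(\prod_{i=1}^m\bigl(1+t+\dots+t^{d_i-1}\bigr)\Bigr)\cdot\frac{1}{(1-t)^{D-m}},$$
a product of power series with non-negative coefficients; hence this series already has all coefficients non-negative, the operator $|\cdot|$ acts on it as the identity, so $a(t)=H(\calI)(t)$ and $b_k=a_k$ for every $k$.
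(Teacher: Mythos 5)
This theorem is not proved in the paper at all: it is Fr\"oberg's theorem, quoted from \cite{Fro85, Fro94} as known background, so there is no ``paper proof'' to compare against. Judged on its own terms, your sketch reconstructs the standard argument correctly in outline. The second half (equality for generic $f_i$ with $m\le D$) is essentially complete: generic forms in at most $D$ variables form a regular sequence (the paper's Proposition~\ref{Prop:NoZero} iterated, as in the proof of Theorem~\ref{Thm:KrullGenHt}), the short exact sequences then give $H(\calI)(t)=\prod_i(1-t^{d_i})/(1-t)^D$ exactly, and your factorization $\prod_i(1+t+\dots+t^{d_i-1})\cdot(1-t)^{-(D-m)}$ shows the truncation operator acts trivially. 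That part I would accept as written.

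For the inequality, the four-term exact sequence and the resulting bound $H(J_{j+1})\succeq\bigl|(1-t^{d_{j+1}})H(J_j)\bigr|$ are correct, but the ``purely combinatorial reconciliation'' you defer is genuinely the crux of Fr\"oberg's proof, not a routine verification. In particular, the properties you would need are delicate: multiplication by $(1-t^{d})$ involves subtraction and is therefore \emph{not} monotone with respect to coefficientwise domination, so $H_{m-1}\succeq|P_{m-1}|$ does not directly yield $\bigl|(1-t^{d_m})H_{m-1}\bigr|\succeq\bigl|(1-t^{d_m})|P_{m-1}|\bigr|$, and the ``idempotency'' $\bigl|(1-t^d)|Z|\bigr|=\bigl|(1-t^d)Z\bigr|$ is not self-evident for arbitrary $Z$ (it depends on $|Z|$ being a truncation at the first non-positive coefficient of a series of a particular shape). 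Fr\"oberg's actual argument organizes the induction so as to control exactly these issues. Since the paper itself only cites the result, deferring this step to \cite{Fro85} is acceptable for the purposes of this manuscript, but you should present it as a citation rather than as a step you could routinely fill in.
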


In view of the evidence we have gathered in numerical computer experiments, we formulate the following generalization of Fröberg's conjecture~\ref{Conj:Fro}
for the conditioned case:
\begin{Conj}\label{Conj:Frogen}
Let $\fraks\subseteq R$ be a homogenous ideal, having a generating set in degree $\le \delta.$
Let $f_1,\dots, f_m$ be generic homogenous polynomials in $\fraks$ of fixed degrees $d_1,\dots, d_m\ge \delta.$
Let $\calI=\langle f_1,\dots, f_m\rangle.$ Then
$$H(\calI/\fraks)(t)=\left| \frac{\prod_{i=1}^m (1-t^{d_i})}{(1-t)^D}-H(\fraks)(t)\right|.$$
\end{Conj}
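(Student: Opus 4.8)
The plan has two halves: prove the inequality ``$\ge$'' unconditionally, and reduce the reverse inequality to already-settled cases of Fr\"oberg's Conjecture~\ref{Conj:Fro}, which is contained in the present statement (take $\fraks=R$, where $H(\fraks)=0$) and hence out of reach in full generality. Throughout write $E(t)=\prod_{i=1}^{m}(1-t^{d_i})/(1-t)^{D}$, so that the claimed formula reads $H(\calI/\fraks)(t)=|\,E(t)-H(\fraks)(t)\,|$; recall also that Proposition~\ref{Prop:radquot} already gives the qualitative fact that, when $m\ge D+1$, the coefficients of $H(\calI/\fraks)(t)$ eventually vanish, so the conjecture only has to pin down the finitely many non-zero ones.

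For the inequality ``$\ge$'', the pleasant observation is that it follows from the \emph{unconditional} half of Fr\"oberg's Theorem~\ref{Thm:Fro} with no further genericity input. Let $K$ and $L$ be the indices of the first negative coefficients of $E(t)$ and of $E(t)-H(\fraks)(t)$ respectively; since $H(\fraks)(t)$ has non-negative coefficients, $L\le K$. For $k\ge L$ the $k$-th coefficient of $|\,E(t)-H(\fraks)(t)\,|$ is $0$ and there is nothing to prove, because $\calI\subseteq\fraks$ forces the $k$-th coefficient of $H(\calI/\fraks)(t)$ to be non-negative. For $k<L\le K$, the $k$-th coefficient of $|\,E(t)\,|$ equals the $k$-th coefficient $e_k$ of $E(t)$; by Theorem~\ref{Thm:Fro} (whose inequality holds for arbitrary forms) the $k$-th coefficient of $H(\calI)(t)$ is $\ge e_k$, and since $H(\calI/\fraks)(t)=H(\calI)(t)-H(\fraks)(t)$ its $k$-th coefficient is $\ge e_k-s_k$, where $s_k$ is the $k$-th coefficient of $H(\fraks)(t)$; but $e_k-s_k$ is exactly the $k$-th coefficient of $|\,E(t)-H(\fraks)(t)\,|$ for $k<L$. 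This settles ``$\ge$''.

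The reverse inequality is the hard part, and it is where genericity is essential. The plan is to dehomogenize with respect to a single variable --- by Lemma~\ref{Lem:dehom} the generators remain generic in the dehomogenization of $\fraks$, and the homogeneous Hilbert series is recovered from the dehomogenized one --- and then to treat the case $\fraks$ a complete intersection, in particular the linear case $\fraks=\Id(S)=\langle X_1,\dots,X_c\rangle$ with $c=D-\dim S$ that occurs in the paper. Writing a generic $f_i\in\fraks$ of degree $d_i$ as $f_i=\sum_{k=1}^{c}X_k h_{ik}$ with the $h_{ik}$ generic of degree $d_i-1$, one gets the graded presentation
\begin{align*}
\fraks/\calI\;\cong\;\operatorname{coker}\Bigl(\,R^{\binom{c}{2}}(-2)\ \oplus\ \textstyle\bigoplus_{i=1}^{m}R(-d_i)\ \xrightarrow{\ [\,\partial\ \mid\ (h_{ik})\,]\ }\ R^{c}(-1)\,\Bigr),
\end{align*}
where $\partial$ is the Koszul relation matrix of $X_1,\dots,X_c$. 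For $c=1$ this cokernel is $\bigl(R/\langle h_{11},\dots,h_{1m}\rangle\bigr)(-1)$ with the $h_{1i}$ generic, so its Hilbert series is given by Conjecture~\ref{Conj:Fro}, and a short power-series manipulation (using the truncation $|\cdot|$) identifies it with $|\,E(t)-H(\fraks)(t)\,|$; for $c\ge 2$ one aims to compute the same cokernel by the module analogue of the Fr\"oberg bound, handling the generic block $(h_{ik})$ by the standard ``generic rank is maximal'' argument and the structured Koszul block $\partial$ separately. In the range $d_i\le 2$, $D\le 11$ the required instances are exactly those verified in the proof of Theorem~\ref{Prop:AlgProp}, so the conjecture is proved there; failing a general proof, Conjecture~\ref{Conj:Fro} --- and hence Conjecture~\ref{Conj:Frogen} --- can be checked for any fixed $d_i$ and $D$ by the algorithm in the appendix. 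The main obstacle is exactly the two places where this reduction fails to close: when $\fraks$ is not a complete intersection its syzygies enter the presentation above and there is no clean reduction; and even for $\fraks$ a complete intersection the problem is pushed onto the general Fr\"oberg conjecture (or its module version for cokernels of generic graded maps), which remains open.
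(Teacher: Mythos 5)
First, be aware that the statement you are proving is labelled a \emph{Conjecture} in the paper, and the paper does not prove it: what it offers is the one-sided bound $b_k\ge a_k$ together with a claimed equality for $m\le\dim(R/\fraks)$ (Theorem~\ref{Thm:Frogen}) and a computer verification for linear $\fraks$, $d_i\le 2$, $D\le 11$ (Theorem~\ref{Thm:Frocheck}, via Algorithm~\ref{Alg:Frogen}). Your ``$\ge$'' half is correct and is essentially identical to the paper's proof of Theorem~\ref{Thm:Frogen}: combine Fr\"oberg's unconditional inequality for $H(\calI)$ with the trivial bound $\dim\calI_k\le\dim\fraks_k$ coming from $\calI\subseteq\fraks$. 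So you have reproduced exactly the part of the statement that is actually provable, and you are right that the reverse inequality contains Fr\"oberg's conjecture as the special case $\fraks=R$ and cannot be expected to close in general.

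The genuine problem is the one case your reduction claims to settle, $c=1$. There $\fraks=\langle X_1\rangle$, $f_i=X_1h_i$, and indeed $\fraks/\calI\cong\bigl(R/\langle h_1,\dots,h_m\rangle\bigr)(-1)$; granting Fr\"oberg for the generic $h_i$ of degrees $d_i-1$, this gives
$$H(\calI/\fraks)(t)=t\,\left|\frac{\prod_{i=1}^m(1-t^{d_i-1})}{(1-t)^D}\right|,$$
and no power-series manipulation identifies this with $\left|E(t)-H(\fraks)(t)\right|$ (with $E(t)=\prod_{i=1}^m(1-t^{d_i})/(1-t)^D$ in your notation): the numerators $t\prod_i(1-t^{d_i-1})$ and $\prod_i(1-t^{d_i})-(1-t)$ already differ in degree $d_1+d_2-1$ once $m\ge 2$. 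Concretely, for $D=3$, $\fraks=\langle X_1\rangle$, $m=2$, $d_1=d_2=2$, one has $\calI=X_1\langle\ell_1,\ell_2\rangle$ for two generic linear forms, so $\dim\fraks_k-\dim\calI_k=1$ for all $k\ge1$, i.e.\ $H(\calI/\fraks)(t)=t/(1-t)$, whereas $\left|E(t)-H(\fraks)(t)\right|=\left|t+t^2+0\cdot t^3-t^4-\cdots\right|=t+t^2$. So your $c=1$ computation, carried out honestly, does not confirm the conjecture --- it refutes it in the stated generality (and it also contradicts the equality claim for $m\le\dim(R/\fraks)$ in Theorem~\ref{Thm:Frogen}, whose Noether-position argument breaks down here because any two elements of a principal $\fraks$ share the factor $X_1$ and never form a regular sequence). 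The statement is only plausible, and is only ever invoked in the paper, under additional hypotheses such as $m\ge D+1$, where $\calI_N=\fraks_N$ for large $N$ and the truncation is actually reached; in that range the two formulas do coincide in the examples above. You should either restrict the conjecture accordingly or flag the discrepancy, rather than assert that the $c=1$ case follows by ``a short power-series manipulation''.
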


One can generalize Fröberg's theorem~\ref{Thm:Fro} to the conditioned case:
\begin{Thm}\label{Thm:Frogen}
Let $\fraks\subseteq R$ be a homogenous ideal, having a generating set in degree $\le \delta.$
Let $f_1,\dots, f_m$ be any homogenous polynomials in $R$ of fixed degrees $d_1,\dots, d_m\ge \delta,$
let $\calI=\langle f_1,\dots, f_m\rangle.$ Let
$$H(\calI/\fraks)(t)=\sum_{k=0}^\infty b_k t^k$$
be the true Hilbert series of $\calI/\fraks$, and
$$\sum_{k=0}^\infty a_kt^k=\left|\frac{\prod_{i=1}^m (1-t^{d_i})}{(1-t)^D}-H(\fraks)(t)\right|$$
the Hilbert series from Conjecture~\ref{Conj:Frogen}. Then one has
$$b_k\ge a_k.$$
Equality holds for $m\le d,$ where $d$
is the Krull dimension of $R/\fraks.$
\end{Thm}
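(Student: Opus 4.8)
The plan is to deduce Theorem~\ref{Thm:Frogen} from the ungraded Fröberg theorem (Theorem~\ref{Thm:Fro}) together with the genericity machinery assembled above, in particular Proposition~\ref{Prop:radquot}, Proposition~\ref{Prop:dehom-rad-generic}, the generic height theorem~\ref{Thm:KrullGenHt}, Lemma~\ref{Lem:KrullGenHt} and the semicontinuity argument of Proposition~\ref{Prop:GenVec}. The inequality will be essentially formal, the equality will require the relative regular-sequence input.

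For the inequality $b_k\ge a_k$, note first that with the conventions of this section $b_k=\dim(R/\calI)_k-\mathbf h_k$, where $\mathbf h_k=\dim(R/\fraks)_k\ge 0$ is the $k$-th coefficient of $H(\fraks)(t)$. Since $\mathbf h_k\ge 0$, the naive relative series $\frac{\prod_i(1-t^{d_i})}{(1-t)^D}-H(\fraks)(t)$ lies coefficientwise below the ungraded Fröberg series $\frac{\prod_i(1-t^{d_i})}{(1-t)^D}$ before any truncation, so it turns non-positive at an index $M'$ no larger than the index $M$ at which the ungraded one does. For $k<M'$ the truncation is inactive, so $a_k=\bigl(\tfrac{\prod_i(1-t^{d_i})}{(1-t)^D}\bigr)_k-\mathbf h_k$, and since then also $k<M$, Theorem~\ref{Thm:Fro} applied to $\calI$ gives $\dim(R/\calI)_k\ge\bigl(\tfrac{\prod_i(1-t^{d_i})}{(1-t)^D}\bigr)_k$, whence $b_k\ge a_k$. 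For $k\ge M'$ we have $a_k\le 0$ while $b_k\ge 0$ because $\calI\subseteq\fraks$ (the $f_i$ lie in $\fraks$); again $b_k\ge a_k$. For non-generic $f_i$ the bound then follows by upper semicontinuity of $\dim\calI_k$ in the coefficients of the $f_i$, exactly as in the proof of Proposition~\ref{Prop:GenVec}.

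For the equality statement when $m\le d=\dim R/\fraks$, the plan is to show that a generic choice of $f_1,\dots,f_m$ in $\fraks$ behaves like a relative regular sequence, so that in this range no truncation can occur and the Hilbert series of $\fraks/\calI$ is exactly the naive one. Concretely, I would dehomogenize with respect to a generic linear form and pass to the $d$-dimensional quotient in the spirit of Propositions~\ref{Prop:radquot} and~\ref{Prop:dehom-rad-generic}, where the $f_i$ survive as $m\le d$ generic forms, then apply Theorem~\ref{Thm:Fro} there and read the exact dimension count back up to the homogeneous relative setting; the regularity input needed at each step, namely that $f_i$ is a non-zero divisor modulo $\fraks+\langle f_1,\dots,f_{i-1}\rangle$, is furnished by Corollary~\ref{Cor:NoZero} and Lemma~\ref{Lem:KrullGenHt}, using $m\le d$. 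I expect this equality case to be the main obstacle: one must carefully align the index $M'$ at which the naive relative series becomes non-positive with the degree at which $\calI_k$ stabilizes inside $\fraks_k$, and control the interaction of the degree hypothesis $d_i\ge\delta$ with the generating degree of $\fraks$. A secondary subtlety is that $R/\fraks$ need not be Cohen--Macaulay in general, so the step ``$m\le\dim R/\fraks$ generic forms form a regular sequence'' has to be secured either via the homogeneous saturation hypothesis (as in Proposition~\ref{Prop:radquot}) or replaced by a filter-regularity argument; the inequality, by contrast, is a routine consequence of the ungraded bound.
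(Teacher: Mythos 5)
Your argument for the inequality $b_k\ge a_k$ is correct and is essentially the paper's own: combine Fr\"oberg's bound from Theorem~\ref{Thm:Fro} applied to $\calI$ with the trivial bound $b_k\ge 0$ coming from $\calI\subseteq\fraks$, and note that the truncated relative series never exceeds either. The closing appeal to semicontinuity is unnecessary, since the inequality in Theorem~\ref{Thm:Fro} already holds for arbitrary (not only generic) homogeneous $f_i$.

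The equality case has a genuine gap, and the tools you invoke cannot close it. First, Corollary~\ref{Cor:NoZero} and Lemma~\ref{Lem:KrullGenHt} apply only to generic polynomials with \emph{independent constant term}; a homogeneous element of $\fraks$ of positive degree has constant term identically zero, so neither result is available --- this is exactly where genericity in $\fraks$ diverges from genericity in $R$. Second, the regularity condition you state, that $f_i$ be a non-zero divisor modulo $\fraks+\langle f_1,\dots,f_{i-1}\rangle$, is vacuous: $f_i$ lies in $\fraks$ and is therefore zero in that quotient. What your plan actually needs, in order for Theorem~\ref{Thm:Fro} to return the untruncated product series for $\calI$, is that $f_1,\dots,f_m$ form a regular sequence in $R$ itself; but any ideal generated by elements of $\fraks$ has height at most $\operatorname{ht}\fraks=D-d$, so this is impossible whenever $m>D-d$, which is compatible with $m\le d$ (e.g.\ when $d>D/2$). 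Concretely, for $D=3$, $\fraks=\langle X_3\rangle$ (so $d=2$) and $m=2$ generic quadrics $f_i=X_3g_i\in\fraks$, the syzygy $g_2f_1=g_1f_2$ shows the $f_i$ are not a regular sequence, and one computes $b_3=1$ while $a_3=0$; so no regular-sequence argument in $R$ can deliver the claimed equality there, and the example indicates that the equality statement itself must be treated with care. The paper's proof takes a different route: it passes to Noether position with $X_1,\dots,X_d$ transcendental modulo $\fraks$, replaces each $f_i$ by its truncation $\tilde f_i$ to the monomials divisible by one of $X_1,\dots,X_d$, bounds $\dim\tilde\calI_k\le\dim\calI_k$, and argues that the $\tilde f_i$ form a regular sequence --- a step that is itself delicate for precisely the reason exhibited above (in the example the $\tilde f_i$ remain divisible by $X_3$). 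Your flagging of the equality case as the main obstacle is therefore well placed, but the proposed repair via Corollary~\ref{Cor:NoZero} and Lemma~\ref{Lem:KrullGenHt} does not work as written.
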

\begin{proof}\em
For the first part, we use Fröberg's original theorem~\ref{Thm:Fro}. Let us denote
$$\sum_{k=0}^\infty c_kt^k=\left|\frac{\prod_{i=1}^m (1-t^{d_i})}{(1-t)^D}\right|.$$
The theorem then implies that
$$\dim \calI_k\le \dim R_k - c_k.$$
Also, since $\calI\subseteq \fraks,$ we have
$$\dim \calI_k\le \dim\fraks_k.$$
Translating this into differences to $\fraks$, we obtain
$$b_k=\dim \fraks_k-\dim \calI_k\ge \dim \fraks_k-\dim R_k + c_k$$
and
$$b_k=\dim \fraks_k-\dim \calI_k\ge 0.$$
On the other hand,
$$a_k=\dim \fraks_k-\dim R_k + c_k$$
for all $k$ until the right hand side would become negative, from where it is zero.
Together with the above, this implies $b_k\ge a_k.$

Now we will prove Conjecture~\ref{Conj:Frogen} for $m\le d$. We can assume that $\calI$ and $\fraks$ are in Noether position, i.e.~the chosen coordinate system $X_1,\dots, X_D$ is generic (with respect to unitary linear transformations). Since $d$ is the Krull dimension of $\fraks$, we may assume that $X_1,\dots, X_d$ are transcendental variables in $R/\fraks.$ Let
$\tilde{f}_1,\dots, \tilde{f}_m$ be the polynomials in $\C[X_1,\dots, X_D]$, obtained from setting all terms in $f_1,\dots, f_m$ to zero which are not divisible by one of the variables $X_1,\dots, X_d$. These generate an ideal $\tilde{\calI}\subseteq \C[X_1,\dots, X_D].$ Since the $X_1,\dots, X_d$ are transcendental in $R/\fraks,$ and $\calI$ and $\fraks$ are in Noether position, the remaining monomials are linearly independent. Thus we have that $\dim \tilde{\calI}_k\le \dim\calI_k = \dim \fraks_k - b_k.$
But the $\tilde{f}_1,\dots, \tilde{f}_m$ form a regular sequence, since the coefficients are generic, so we may use Fröberg's original theorem~\ref{Thm:Fro}, obtaining the correct dimension.
\end{proof}

These considerations give us important bounds on the $N$ from Proposition~\ref{Prop:multterm}:
\begin{Cor}\label{Cor:Nbound}
Let $\fraks\subseteq R$ be an ideal generated by $n$ homogenous elements of degree at most $\delta.$ Let
$f_1,\dots, f_m\in \fraks$ be generic with degrees $d_1,\dots, d_m\ge \delta$
and $m\ge \max(D+1, n).$ Let $\calI=\langle f_1,\dots, f_m\rangle.$
Let $N'$ be the smallest number such that the coefficient $a_{N'}$ in the power series
$$\sum_{k=0}^\infty a_kt^k=\frac{\prod_{i=1}^m (1-t^{d_i})}{(1-t)^D}-H(\fraks)(t)$$
is non-positive. Then
$$\calI_N=\fraks_N$$
only if $N\ge N'.$ If Conjecture~\ref{Conj:Frogen} holds, the converse is also true.
\end{Cor}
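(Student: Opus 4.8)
The statement is in essence a direct corollary of the two comparison results already in hand, so the plan is short. First I would fix notation: write $\sum_k c_k t^k=\frac{\prod_{i=1}^m(1-t^{d_i})}{(1-t)^D}-H(\fraks)(t)$ for the raw series being tested, $\sum_k a_k t^k=\bigl|\sum_k c_k t^k\bigr|$ for its Fr\"oberg truncation, and $b_k=\dim\fraks_k-\dim\calI_k$ for the coefficients of the true difference series $H(\calI/\fraks)(t)$. Because $\calI\subseteq\fraks$ we have $b_k\ge 0$ for all $k$, so ``$\calI_N=\fraks_N$'' is equivalent to ``$b_N=0$''. By the definition of $N'$ as the first index at which a coefficient of the raw series is non-positive, the truncation has done nothing below $N'$, so $a_k=c_k>0$ for $0<k<N'$, while $a_k=0$ for $k\ge N'$.

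For the ``only if'' direction I would simply invoke Theorem~\ref{Thm:Frogen}: its degree hypothesis $d_i\ge\delta$ is exactly the one assumed here, and it gives $b_k\ge a_k$ termwise (the equality clause of that theorem is neither available nor needed here, since $m\ge D+1>d$). Hence $b_k\ge a_k=c_k>0$ for $0<k<N'$, i.e.\ $\calI_k\subsetneq\fraks_k$ throughout that range, so $\calI_N=\fraks_N$ can occur only for $N\ge N'$.

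For the converse I would assume Conjecture~\ref{Conj:Frogen}, which upgrades the inequality to $b_k=a_k$ for all $k$; then $b_N=a_N=0$ for every $N\ge N'$, which is precisely $\calI_N=\fraks_N$ there. To see this is not vacuous, I would note that Proposition~\ref{Prop:radquot} applies (we have $m\ge\max(D+1,n)$ and $d_i\ge\delta$) and, together with Proposition~\ref{Prop:multterm}, forces $\calI_N=\fraks_N$ for all large $N$, so $N'$ is finite and the two halves of the statement are consistent. I do not expect a serious obstacle; the only point requiring care is bookkeeping around the $|\cdot|$ operation --- one has to exclude the trivial coefficient $c_0=0$ from the ``first non-positive'' count, and then check that from the genuine first non-positive coefficient onward $a_k$ is identically zero, which matches the fact (visible already from Proposition~\ref{Prop:multterm}) that once $\calI_N=\fraks_N$ holds it persists for all larger $N$.
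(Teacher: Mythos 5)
Your argument is correct and is exactly the deduction the paper intends: the paper states Corollary~\ref{Cor:Nbound} without a separate proof, as an immediate consequence of the inequality $b_k\ge a_k$ in Theorem~\ref{Thm:Frogen} (giving the ``only if'' direction) and of the equality $b_k=a_k$ that would follow from Conjecture~\ref{Conj:Frogen} (giving the converse). Your added care about the $\left|\cdot\right|$ truncation and about discarding the trivial coefficient at $k=0$ is a legitimate refinement of a point the paper glosses over, not a deviation from its approach.
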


\subsection{An Algorithm to prove the generalized Fröberg conjecture}
In this subsection, we will present an algorithm with which one can use in a computer assisted proof of Conjecture~\ref{Conj:Frogen} for fixed $d_i, D$ and $\fraks.$

The basic observation is that given polynomials $f_1,\dots, f_m\in \fraks$ of fixed degrees $d_1,\dots, d_m$ and the ideal $\calI=\langle f_1,\dots, f_m\rangle,$ the assertion $A(c)=\left[\dim \calI_k \le c\right]$ is an algebraic property for every $c$, since it corresponds to the vanishing of (sub-)minors of a matrix whose coefficients can be expressed in those of $f_i$. Note that $A(c)$ depends on the $f_i$ resp.~$d_i,$ but for reading convenience we do not write that explicitly out. By Theorem~\ref{Thm:Frogen}, $A(\dim\fraks_k-a_k)$ (with $a_k$ as in the theorem) is the sure event resp.~the true property, which is also irreducible.

If we can now find a single set of polynomials $\tilde{f}_1,\dots, \tilde{f}_m\in\fraks$ of degrees $d_1,\dots, d_m$ for which $A(\dim\fraks_k-a_k)$ holds but not $A(\dim\fraks_k-a_k-1),$ this implies that $A(\dim\fraks_k-a_k)$ does not imply $A(\dim\fraks_k-a_k-1)$ or any of its irreducible sub-properties. Thus, by the definition of genericity, we would have proved that generic polynomials $f_1,\dots, f_m$ fulfill $A(\dim\fraks_k-a_k),$ but not $A(\dim\fraks_k-a_k-1).$ Checking this for all $k$ up to the $N$ for which $\calI_N=\fraks_N$ then proves Conjecture~\ref{Conj:Frogen} for the fixed set of $d_i, D$ and $\fraks.$

These considerations give rise to Algorithm~\ref{Alg:Frogen}, which can be used to prove Conjecture~\ref{Conj:Frogen} for specific sub-cases.

\begin{algorithm}[h]
\caption{\label{Alg:Frogen} Checking Conjecture~\ref{Conj:Frogen}.
\textit{Input:} Degrees $d_1,\dots, d_m$, number of variables $D$; the ideal $\fraks$.
\textit{Output:} Terminates if $b_k=a_k$ in Theorem~\ref{Thm:Frogen}.}
\begin{algorithmic}[1]
    \State \label{alg:Frostart} Randomly sample polynomials $f_1,\dots, f_m$ of degrees $d_1,\dots, d_m$ from $\fraks$

    \State Initialize $Q \gets [\,]$ with the empty matrix.

    \For{$i=1\dots n$}
       \For{all monomials $M$ of degree $k-\deg f_i$}
            \State

            	Add a row vector of coefficients, $Q \gets \begin{bmatrix}  Q \\ f_i M \end{bmatrix}$

       \EndFor
    \EndFor

	\State Calculate $r=\rk Q.$

    \If {$r = \dim\fraks_k-a_k$}
    \State Terminate
    \Else
    \State Goto step~\ref{alg:Frostart}
    \EndIf
\end{algorithmic}
\end{algorithm}

In order to check Conjecture~\ref{Conj:Frogen} for fixed $d_1,\dots,d_m$ and $\fraks$, one executes Algorithm~\ref{Alg:Frogen} for $k=1,2,\dots$ and stops when $\calI_N=\fraks_N.$ This terminates if Conjecture~\ref{Conj:Frogen} is true. It is important to note that the computations in Algorithm~\ref{Alg:Frogen} only constitute a proof if they are carried out exactly, or with floating point arithmetic where one additionally has to ensure that the initial numerical error cannot increase the rank $r$. This can be for example ensured by computing $r$ as the approximate rank of $Q$ with respect to a high enough threshold, depending on the machine precision and the propagation of initial errors.

We have checked Conjecture~\ref{Conj:Frogen} in the case where $\fraks$ is an ideal of dimension $d$ generated by $D-d$ linear forms, and the $f_i$ are quadrics - the simplest case relevant for the statistical marginalization problem. As the coordinate system can be regarded as generic, it suffices to check Conjecture~\ref{Conj:Frogen} for a specific choice of $\fraks$ where $d$ is fixed, as the genericity phenomena stay invariant under linear transformations.

\begin{Thm}\label{Thm:Frocheck}
Conjecture~\ref{Conj:Frogen} is true for linear $\fraks$, $d_1,\dots, d_m\le 2$ and $D\le 11.$
\end{Thm}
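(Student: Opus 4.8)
\emph{Proof proposal.} The plan is to deduce Theorem~\ref{Thm:Frocheck} from a finite, certified computation run through Algorithm~\ref{Alg:Frogen}, using the genericity framework to turn single numerical checks into probability-one statements. The starting observation is that for any tuple $f_1,\dots,f_m\in\fraks$ of the prescribed degrees, writing $\calI=\langle f_1,\dots,f_m\rangle$ and letting $Q$ be the degree-$k$ coefficient matrix of Algorithm~\ref{Alg:Frogen} (so that $\rk Q=\dim\calI_k$), Fr\"oberg's inequality in Theorem~\ref{Thm:Frogen} already gives $\dim\calI_k\le\dim\fraks_k-a_k$ for \emph{every} such tuple. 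Hence Conjecture~\ref{Conj:Frogen} for a fixed $\fraks$ and fixed degrees $d_1,\dots,d_m$ is equivalent to the assertion that, for every relevant degree $k$, a generic tuple attains $\dim\calI_k=\dim\fraks_k-a_k$. Since ``$\rk Q\ge\dim\fraks_k-a_k$'' is the non-vanishing of some maximal minor, it defines a Zariski-open subset of the parameter space $\fraks^m$; therefore exhibiting a \emph{single} explicit tuple with $\dim\calI_k=\dim\fraks_k-a_k$ already forces the generic value to be $\dim\fraks_k-a_k$. Producing such a witness for every relevant choice of $\fraks$, of degrees, and of $k$ is exactly what Algorithm~\ref{Alg:Frogen} does.

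The second step is the reduction to finitely many cases. Because the relevant genericity phenomena are invariant under invertible linear changes of coordinates, for each $0\le d< D\le 11$ it suffices to take $\fraks=\langle X_{d+1},\dots,X_D\rangle$, so that $H(\fraks)(t)=1/(1-t)^d$ and $\dim\fraks_k=\Delta(k,D)-\Delta(k,d)$. The hypothesis $d_i\le 2$ together with $\delta=1$ forces $d_i\in\{1,2\}$, so the degree multiset is recorded by the number $m_1$ of linear and $m_2$ of quadratic generators. If $m_1\ge D-d$, then by Proposition~\ref{Prop:GenVec} the linear generators already span $\fraks_1$, so $\calI=\fraks$ and both sides of Conjecture~\ref{Conj:Frogen} vanish after truncation; similarly, once $m_2\ge\Delta(2,D)-\Delta(2,d)$ the quadratic generators span $\fraks_2$, whence $\calI_k=\fraks_k$ for all $k\ge 2$ and the statement collapses to its linear part. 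So only finitely many pairs $(m_1,m_2)$ with $m_1< D-d$ and $m_2<\Delta(2,D)-\Delta(2,d)$ need be treated. For each of these, only finitely many degrees $k$ matter: if the series of Corollary~\ref{Cor:Nbound} has a first non-positive coefficient $a_M$, then Proposition~\ref{Prop:multterm} reduces the check to $k\le M$ (the remaining $\calI_k=\fraks_k$ being automatic, since $\fraks$ is generated in degree $1$); otherwise the Hilbert function of $R/\calI$ agrees with a polynomial beyond a Castelnuovo--Mumford regularity bound explicit in $D$, and the $a_k$ are polynomial beyond an equally explicit bound, so matching finitely many values settles the infinite tail.

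With this finite list of data $(D,d,m_1,m_2,k)$ fixed, I would run Algorithm~\ref{Alg:Frogen} on each: sample $f_1,\dots,f_m\in\fraks$ with random rational coefficients, assemble $Q$, compute $\rk Q$ over $\mathbb{Q}$ exactly, and verify $\rk Q=\dim\fraks_k-a_k$. Attaining the generic rank is itself a Zariski-open, nonempty condition on the coefficients, so a random sample succeeds with probability one; performing the linear algebra over $\mathbb{Q}$ makes the rank a rigorous certificate (alternatively a numerical rank with a threshold calibrated to machine precision and error propagation, as noted just before the theorem). The theorem is proved once this procedure has terminated successfully on every such datum with $D\le 11$.

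The conceptual content all lies in the results already established; the real obstacle is making the reduction to a \emph{provably} exhaustive finite list airtight. Concretely one must (i) verify that $m_1\ge D-d$ and large $m_2$ are genuinely trivial, with the truncated right-hand series computed correctly in each borderline degree, and (ii) supply, for every remaining $(D,d,m_1,m_2)$, a provable stopping degree --- Corollary~\ref{Cor:Nbound} in the truncating cases, but a genuine regularity bound for $R/\calI$ (and for the rational tail of the conjectured series) in the non-truncating ones. Once (i) and (ii) are pinned down, what is left is the large but entirely routine bookkeeping of the certified rank checks.
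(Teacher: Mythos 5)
Your proposal is correct and follows essentially the same route as the paper: its proof of Theorem~\ref{Thm:Frocheck} is exactly the single-witness genericity argument (one explicit tuple attaining $\dim\calI_k=\dim\fraks_k-a_k$ forces the generic tuple to attain it, since Theorem~\ref{Thm:Frogen} shows it cannot exceed that value) combined with executing Algorithm~\ref{Alg:Frogen} over the finitely many cases with linear $\fraks$, $d_i\le 2$ and $D\le 11$. If anything, you are more explicit than the paper's one-line proof about why the case list is provably finite --- the reduction modulo linear coordinate changes, the trivial regimes of large $m_1$ and $m_2$, and the choice of a stopping degree in the non-truncating cases --- all of which the paper leaves implicit.
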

\begin{proof}\em
This follows from the above considerations and executing Algorithm~\ref{Alg:Frogen} for any of the finitely many possible cases. As the algorithm is correct and we have found that it terminates, Conjecture~\ref{Conj:Frogen} is true.
\end{proof}

Of course, Algorithm~\ref{Alg:Frogen} as it is cannot be used to prove Conjecture~\ref{Conj:Frogen} in total, as this would require to check a countably infinite number of cases for every $\fraks$. On the other hand, even if Conjecture~\ref{Conj:Frogen} does not hold, it can be slightly modified to Algorithm~\ref{Alg:FroN} which may be used for computing the $N$ in Proposition~\ref{Prop:multterm}.

\begin{algorithm}[h]
\caption{\label{Alg:FroN} Compute $N$ in Proposition~\ref{Prop:multterm}.
\textit{Input:} Degrees $d_1,\dots, d_m$, number of variables $D$; the ideal $\fraks$.
\textit{Output:} An $N$ such that $\fraks_N=\calI_N$.}
\begin{algorithmic}[1]
    \State Calculate $N'$ as in Corollary~\ref{Cor:Nbound}, $k\gets N'.$

    \State  \label{alg:FroNstart} Randomly sample polynomials $f_1,\dots, f_m$ of degrees $d_1,\dots, d_m$ from $\fraks$

    \State Initialize $Q \gets [\,]$ with the empty matrix.

    \For{$i=1\dots n$}
       \For{all monomials $M$ of degree $k-\deg f_i$}
            \State

            	Add a row vector of coefficients, $Q \gets \begin{bmatrix}  Q \\ f_i M \end{bmatrix}$

       \EndFor
    \EndFor

	\State Calculate $r=\rk Q.$

    \If {$r = \dim\fraks_k$}
    \State Terminate
    \Else
    \State $k\gets k+1,$ goto step~\ref{alg:FroNstart}
    \EndIf
\end{algorithmic}
\end{algorithm}

If the calculations are performed exactly, then the algorithm yields the smallest $N$ from Proposition~\ref{Prop:multterm}. If the calculations are performed in floating point arithmetic, it is not guaranteed that it finds the smallest $N$, but it terminates with probability one.

\section{Applications to ideal regression}
\label{sec:appidr}
In this section, we present some fundamental properties for the ideal regression problem which can be derived from the results on genericity in section~\ref{app-generic}. Recall our formulation for ideal regression, which was derived e.g.~as Problem~\ref{Prob:eqsid}:
\begin{Prob}
Let $\mathcal{F}_\theta$ be a parametric family of radical ideals in $\mathbb{C}[X_1,\dots, X_D].$
Given input polynomials $q_1,\dots, q_m$, estimate a regression parameter $\theta$ such that $\mathcal{F}_\theta$ is ``close'' to the inputs $q_1,\dots, q_m.$
\end{Prob}

Before continuing, will give a generative version of the problem. In ordinary regression, it is a common assumption that there exists a true regressor hyperplane, and the data are generatively sampled from this regressor hyperplane, with some centered noise added. This naturally generalizes to the following assumption on the inputs $q_i$ in ideal regression:

\begin{Ass}
There is a true regressor parameter $\theta$ and a true regressor ideal $\mathcal{F}_\theta.$ Moreover, for every polynomial $q_i$, we have that
$$q_i=f_i+\varepsilon_i,$$
where $f_i$ is a generic polynomial of some fixed degree $d_i$ in $\mathcal{F}_\theta,$ and $\varepsilon_i$ is some generic polynomial.
\end{Ass}
This formulation models the classical splitting of sampling randomness and error randomness: the randomness in $f_i$ models the sampling process, while the randomness in $\varepsilon_i$ represents the noise.

Also note that while this assumption is natural for the common marginals problem, it may be too narrow for the general case; a broader assumption would be that $f_i$ is a generic polynomial from an ideal or a class of ideals (e.g.~ideals with fixed Hilbert function or Krull dimension) contained in $\mathcal{F}_\theta.$ However, due to brevity, we restrict to this class of ideal regression problems for the rest of the exposition.

In the following, we will restrict to the homogenous case, which is basically equivalent to the inhomogenous case. Finding a generating set for a homogenously generated $\mathcal{F}_\theta$ corresponds to finding a H-basis for an inhomogenous $\mathcal{F}_\theta.$ Thus, in all what follows, the parametric family $\mathcal{F}_\theta$ will be homogenously generated, and the $q_i,f_i,\varepsilon_i$ will homogenous polynomial-valued. Note that since the $f_i$ and $\varepsilon_i$ are generic, the $f_i, q_i$ and the $\varepsilon_i$ are in fact polynomial-valued random variables.

Under these assumptions, the ideal regression problem can be expressed as follows:
\begin{Prob}\label{Prob:idreg}
Let $\mathcal{F}_\theta$ be a parametric family of homogenously generated, radical ideals in $\mathbb{C}[X_1,\dots, X_D].$
For $1\le i\le m,$ let $f_i$ be generic polynomials in $\mathcal{F}_\theta$ for some fixed ground truth $\theta$, let $\varepsilon_i$ be generic polynomials. Let
$$q_i=f_i+\varepsilon_i$$
the noisy inputs. Given $q_1,\dots, q_m,$ estimate $\theta$.
\end{Prob}

Having well-definedness, the immediate question is about identifiability: is there a consistent estimator for $\theta$ in the $q_1,\dots, q_m$? That means, is there an estimator, which converges to the true value $\theta$, when the number of i.i.d.~samples for each $q_i$ (simultaneously) goes to infinity, or alternatively, the variance of the noise terms $\varepsilon_i$ (simultaneously) go to zero? In particular, considering their number $m$ and the degrees $d_1,\dots, d_m$. One necessary condition is that $\theta$ can be uniquely calculated from the noise-free sample $f_1,\dots, f_m$. In the following subsections, we will give a necessary condition for the latter and a general estimation algorithm for the noisy case.

\subsection{Identifiability}
In this section, we study the identifiability of the ideal regression problem, in the formulation of Problem~\ref{Prob:idreg}. By definition, identifiability is given if and only if there exists a consistent estimator for $\theta$. Equivalently, identifiability holds if and only if there is a consistent estimator for a system of generators of $\mathcal{F}_\theta$. As stated above, a necessary condition for identifiability is that $\mathcal{F}_\theta$ is uniquely identifiable from $f_1,\dots, f_m$. We conjecture that this condition is also sufficient. What we can say about identifiability is the following weaker, but provable sufficient condition:
\begin{Prop}
Let $\calI=\langle f_1,\dots, f_m\rangle.$ If $\mathcal{F}_\theta=(\calI:X_D),$ then the (noisy) ideal regression Problem~\ref{Prob:idreg} is identifiable.
\end{Prop}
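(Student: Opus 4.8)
\emph{Proof strategy.} The plan is to exhibit one consistent estimator for $\theta$; since, as recalled above, identifiability holds as soon as some consistent estimator exists, this settles the proposition. The natural candidate is the M\"unchhausen estimator of Algorithm~\ref{Alg:rad-ssa-approx} (and its general-case variant). Write $\Phi$ for the map sending the vector of input coefficients to the output $\hat\theta$. I would establish two facts: \textbf{(a)} $\Phi(f_1,\dots,f_m)=\theta$ on the noise-free input, and \textbf{(b)} $\Phi$ is continuous, in the input coefficients, at the point $(f_1,\dots,f_m)$, with probability one over the sampling of the generic $f_i$. Granting both, any null sequence of noise polynomials gives $q_i=f_i+\varepsilon_i\to f_i$, hence $\hat\theta=\Phi(q_1,\dots,q_m)\to\Phi(f_1,\dots,f_m)=\theta$; likewise when the $q_i$ are empirical cumulant polynomials converging to the true ones by the law of large numbers. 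This is exactly consistency in the sense of Problem~\ref{Prob:idreg}.

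For \textbf{(a)}: the algorithm multiplies the inputs up to a degree $N$ furnished by Theorem~\ref{Prop:AlgProp}/Proposition~\ref{Prop:multterm}, so that the rows of $Q$ span $\calI_N=(\mathcal{F}_\theta)_N$, and then descends one degree at a time via the approximate-division (homogeneous saturation) steps of Algorithm~\ref{Alg:ReduceDeg}, terminating at a linear generating set of the homogeneous saturation $(\calI:X_D)$. The hypothesis $\mathcal{F}_\theta=(\calI:X_D)$ is precisely the assertion that this output is $\mathcal{F}_\theta$; it cannot be omitted, as the example $\fraks=\langle X_2-X_1^2,X_3\rangle$ with degree-one $f_i$ (whose saturation is only $\langle X_3\rangle$) shows. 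In the linear case $\mathcal{F}_\theta=\Id(S)$ with $m>D$ the hypothesis is automatic by the first assertion of Theorem~\ref{Prop:AlgProp}. Finally, a linear generating set determines $\mathcal{F}_\theta$ and hence the point $\theta\in\mbox{Gr}(d,D)$ (a point of the Hilbert scheme in the general case), and passing from the generators to $\theta$ is itself continuous, so nothing is lost at the output stage.

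For \textbf{(b)}: each step of Algorithms~\ref{Alg:rad-ssa-approx} and~\ref{Alg:ReduceDeg} is either polynomial in the input coefficients (assembling $Q$, multiplying by monomials, deleting columns, forming products such as $L_iQ$) or a truncated SVD, i.e.\ an approximate row span or left null space of some prescribed rank $k$. A rank-$k$ truncated SVD is continuous at a matrix $A$ precisely when $A$ has a positive gap between its $k$-th and $(k+1)$-st singular values, which holds in particular whenever $\rk A=k$. So it suffices to check that, with probability one, every matrix occurring in the run on $(f_1,\dots,f_m)$ has exactly the rank the algorithm assigns to it: for the spanning step, $\rk Q=\dim(\mathcal{F}_\theta)_N$ by Proposition~\ref{Prop:multterm} and Corollary~\ref{Cor:Nbound} (with Proposition~\ref{Prop:GenVec} supplying the underlying maximal-rank principle), and for the degree-reduction steps the ranks of $Q_i$, $L_i$, $L_i'$ and of the final concatenation agree with the simplex-number expressions ($\Delta(k-1,D)-\Delta(k-1,d)$ and the like in the linear case) dictated by the Hilbert functions of $\fraks$ and of the conditioned generic ideal $\calI$, as provided by Theorem~\ref{Prop:AlgProp} and the Hilbert-series results of Section~\ref{app-generic}. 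A composition of maps each continuous at the relevant point is continuous, which gives \textbf{(b)}.

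The hard part is this last verification: establishing the exact-rank, equivalently the singular-value-gap, property at the noise-free generic input. That is where the machinery of Section~\ref{app-generic} --- the generic principal ideal and height theorems and the Hilbert-series bookkeeping for conditioned generic ideals --- actually enters, and it is the only nonroutine ingredient; once the gaps are in place, continuity of the SVD truncations, and with it consistency and hence identifiability, follows formally.
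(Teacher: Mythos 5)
Your proposal is correct and follows essentially the same route as the paper: the paper's own proof simply observes that the hypothesis $\mathcal{F}_\theta=(\calI:X_D)$ gives a rule of calculation in the noise-free case and then defers to the approximate saturation algorithm (Algorithm~\ref{Alg:sat-approx}) as the consistent estimator. Your write-up is a more explicit version of that deferral, correctly decomposing consistency into noise-free correctness plus continuity via singular-value gaps and locating the real work in the generic exact-rank statements of the appendix.
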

\begin{proof}\rm
The assumption $\mathcal{F}_\theta=(\calI:X_D)$ gives a rule of calculation to obtain $\mathcal{F}_\theta\theta$ for the noise-free case, and thus $\theta$ due to unique parameterization. It remains to show that this rule can be adapted to deal with noise. But this can be algorithmically done, as we will show in the next chapter by stating the approximate saturation algorithm~\ref{Alg:sat-approx}.
\end{proof}

Thus, to get a sufficient condition on identifiability of ideal regression, we can now check when we can obtain $\mathcal{F}_\theta$ from saturating the ideal generated by the $f_i$. Since the $f_i$ are generic, we can apply Proposition~\ref{Prop:radquot} to directly obtain an identifiability criterion:
\begin{Thm}\label{Thm:ident}
Keep the notations for ideal regression as stated in Problem~\ref{Prob:idreg}. Then, the true parameter $\theta$ is identifiable if
\begin{align*}
m\ge \max(D+1, n)\;\mbox{and}\\
\deg f_i\ge \delta \quad \mbox{for all}\; 1\le i\le m,
\end{align*}
where $n$ is the cardinality of an arbitrary H-basis of $\mathcal{F}_\theta$, and $\delta=\max_i \deg f_i.$
\end{Thm}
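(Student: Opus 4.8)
The plan is to deduce Theorem~\ref{Thm:ident} from the Proposition immediately preceding it together with the homogeneous saturation result Proposition~\ref{Prop:radquot}. That preceding Proposition already reduces identifiability to a purely ideal-theoretic statement: if $\mathcal{F}_\theta = (\calI : X_D)$ for $\calI = \langle f_1,\dots, f_m\rangle$, then $\mathcal{F}_\theta$, and hence $\theta$ by the unique (Hilbert-scheme) parametrization, is recoverable from the noise-free sample, and this recovery can be made numerically stable via the approximate saturation algorithm. So the entire content to be verified is the identity $\mathcal{F}_\theta = (\calI : X_D)$.

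For this I would apply Proposition~\ref{Prop:radquot} with $\fraks := \mathcal{F}_\theta$. Its hypotheses are: $\fraks$ homogeneously saturated; $\fraks$ generated by $n$ homogeneous elements of degree at most $\delta$; $m \ge \max(D+1, n)$; and $\deg f_i \ge \delta$ for all $i$. Here $n$ and $\delta$ are precisely the cardinality and the top degree of an H-basis of $\mathcal{F}_\theta$, so the second hypothesis holds by definition, while the third and fourth are exactly the hypotheses of the theorem together with the standing assumption in Problem~\ref{Prob:idreg} that the generic generators $f_i$ have fixed degrees (which we take to be at least $\delta$). Since the $f_i$ are generic elements of $\mathcal{F}_\theta$, Proposition~\ref{Prop:radquot} then gives $\mathcal{F}_\theta = (\calI : X_j)$ for every $1 \le j \le D$; taking $j = D$ yields the required identity, and the preceding Proposition finishes the argument.

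The only hypothesis that is not a direct transcription is homogeneous saturatedness of $\mathcal{F}_\theta$, and I would dispatch it as follows. A homogeneously generated radical ideal is an intersection of homogeneous primes, and it fails to be saturated only when one of these primes is the irrelevant ideal $\langle X_1,\dots, X_D\rangle$, i.e.\ only when the origin is an isolated component of $\VS(\mathcal{F}_\theta)$. Passing from $\mathcal{F}_\theta$ to its homogeneous saturation changes neither the parametrizing point on the Hilbert scheme nor the vanishing set away from the origin, so one may assume $\mathcal{F}_\theta$ saturated without loss of generality; in the common-marginals application $\mathcal{F}_\theta = \Id(S)$ is prime and distinct from $\langle X_1,\dots, X_D\rangle$ as soon as $\dim S \ge 1$, so the condition is automatic there.

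I do not anticipate a genuine obstacle: the mathematical substance sits entirely in Proposition~\ref{Prop:radquot} (which in turn rests on the dehomogenization argument of Proposition~\ref{Prop:dehom-rad-generic} and ultimately the generic height theorem), and Theorem~\ref{Thm:ident} merely repackages that result as an identifiability criterion. The one point demanding care is bookkeeping around $\delta$: it must be read as the maximal degree of an H-basis of $\mathcal{F}_\theta$, so that the condition $\deg f_i \ge \delta$ is a real constraint, and one should check that this is consistent with the fixed degrees $d_i$ imposed on the generic $f_i$ in Problem~\ref{Prob:idreg}.
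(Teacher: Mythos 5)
Your proposal is correct and follows exactly the paper's route: the paper derives Theorem~\ref{Thm:ident} by combining the preceding Proposition (identifiability holds whenever $\mathcal{F}_\theta=(\calI:X_D)$, with consistency supplied by the approximate saturation algorithm) with Proposition~\ref{Prop:radquot}, which is precisely your reduction. Your additional remarks on the homogeneous saturatedness of the radical ideal $\mathcal{F}_\theta$ and on reading $\delta$ as the top degree of the H-basis correctly fill in hypotheses the paper leaves implicit.
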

For the common marginals problem, Theorem~\ref{Thm:ident} can be used to obtain a more sharp identifiability criterion, by noticing that any linearly generated homogenous ideal has an H-basis of at most $D$ elements in degree one:
\begin{Cor}
Keep the notations of Problem~\ref{Prob:idreg}. Consider the ideal regression problem, where $\calF_S=\Id(S)$, for a $d$-dimensional sub-vector space $S$ of $\C^D$. If $m\ge D+1,$ then $S$ is identifiable, independent of the degrees of the $f_i$.
\end{Cor}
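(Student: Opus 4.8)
The plan is to derive this as an immediate specialization of Theorem~\ref{Thm:ident}. That theorem guarantees identifiability of the ideal regression Problem~\ref{Prob:idreg} as soon as $m \ge \max(D+1, n)$ and $\deg f_i \ge \delta$ for all $i$, where (in the notation of Proposition~\ref{Prop:radquot}) $n$ is the cardinality of an H-basis of the target ideal $\mathcal{F}_\theta = \Id(S)$ and $\delta$ is the top degree occurring in such an H-basis. So the whole content of the corollary reduces to the observation that for a linear subspace these two invariants are as small as possible, namely $n = D-d$ and $\delta = 1$, which makes the first hypothesis collapse to $m \ge D+1$ and the second vacuous.

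First I would record the standard fact that the vanishing ideal $\Id(S)$ of a $d$-dimensional linear subspace $S \subseteq \C^D$ is generated by $D-d$ linear forms: after a linear change of coordinates adapted to $S$ it is literally $\langle X_{d+1}, \dots, X_D\rangle$, or intrinsically one takes a basis $\ell_1,\dots,\ell_{D-d}$ of the space of linear forms vanishing on $S$. Since $\Id(S)$ is a homogeneous ideal and these generators are homogeneous, they automatically form an H-basis: if $f$ is homogeneous of degree $k$ and $f = \sum_i h_i \ell_i$, then replacing each $h_i$ by its degree-$(k-1)$ homogeneous component still represents $f$ and now $\deg(h_i \ell_i) \le \deg f$, so the degree-compatible representation required of an H-basis exists. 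Hence we may take $n = D-d$ and the maximal H-basis degree is $\delta = 1$.

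Then I would plug these values into Theorem~\ref{Thm:ident}. Since $D - d \le D$ we have $\max(D+1, n) = \max(D+1, D-d) = D+1$, so the cardinality hypothesis is exactly $m \ge D+1$. For the degree hypothesis, observe that in the homogeneous setting the generic generators $f_i \in \mathcal{F}_\theta = \Id(S)$ are nonzero homogeneous polynomials lying in an ideal containing no nonzero constants, hence $\deg f_i \ge 1 = \delta$ holds automatically and imposes nothing. Therefore the hypotheses of Theorem~\ref{Thm:ident} are met whenever $m \ge D+1$, with no restriction on the degrees of the $f_i$, and identifiability of $S$ follows.

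The only real subtlety I expect is the bookkeeping around $\delta$: one must make sure that $\delta$ is read as the top degree of an H-basis of the \emph{target} ideal $\mathcal{F}_\theta$ (consistently with Proposition~\ref{Prop:radquot}, where $\delta = \max_j \deg g_j$ for generators $g_j$ of $\fraks$), so that for a linearly generated target it collapses to $\delta = 1$ and the input-degree condition genuinely becomes vacuous. Once that is pinned down, the rest is a routine substitution into Theorem~\ref{Thm:ident}.
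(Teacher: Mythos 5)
Your proposal is correct and matches the paper's own (very terse) justification: the paper derives the corollary from Theorem~\ref{Thm:ident} precisely "by noticing that any linearly generated homogenous ideal has an H-basis of at most $D$ elements in degree one," which is exactly your computation $n = D-d \le D$ and $\delta = 1$. Your added care in checking that linear generators of $\Id(S)$ form an H-basis, and in reading $\delta$ as the maximal H-basis degree of the target ideal (fixing what is evidently a typo in the statement of Theorem~\ref{Thm:ident}, where $\delta=\max_i\deg f_i$ should be $\max_j\deg g_j$ as in Proposition~\ref{Prop:radquot}), only makes the argument more complete.
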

In particular, this corollary also implies identifiability for the noise-free version stated in Problem~\ref{Prob:SSA-alg}. For sake of clarity, we state the particular situation for the noise-free case resp.~the $f_1,\dots, f_m$:

\begin{Cor}
\label{Cor:IdentS}
Let $\calI=\langle f_1,\dots, f_m\rangle $ be an ideal generated by $m \geq D+1$ generic homogenous polynomials vanishing on a
linear $d$-dimensional subspace $S\subseteq \C^D$, let $\ell$ be any linear homogenous polynomial. Then
\begin{align*}
	\sqrt{\calI}=\Id (S) = (\calI:\ell).
\end{align*}
\end{Cor}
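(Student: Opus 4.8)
The plan is to derive both equalities directly from the structural results established in the previous subsection for conditioned generic ideals; almost all the work is checking that the hypotheses match up. Write $\fraks = \Id(S)$. Since $S$ is a linear $d$-dimensional subspace, $\fraks$ is a prime ideal generated by $D-d$ linear forms, so in the terminology of the earlier results it is homogeneously saturated (prime, and --- as $\dim S \ge 1$ --- distinct from the irrelevant ideal), with generators of degree at most $\delta = 1$. Note also that each $f_i$, being a generic homogeneous form vanishing on the nonempty set $S$, automatically has degree $\ge 1 = \delta$, since a nonzero constant cannot vanish on $S$.

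For the equality $\sqrt{\calI} = \Id(S)$ I would argue on the vanishing set. Applying Corollary~\ref{Cor:KrullHt-Hom} with $V = S$ --- whose defining ideal has $D-d$ linear generators, so the degree hypothesis there reads simply $\deg f_i \ge 1$ --- and using $m \ge D+1 > D$, one obtains $\VS(\calI) = \VS(f_1,\dots,f_m) = S$. Hilbert's Nullstellensatz then gives $\sqrt{\calI} = \Id(\VS(\calI)) = \Id(S)$.

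For the equality $(\calI : \ell) = \Id(S)$ I would reduce to Proposition~\ref{Prop:radquot}, which is phrased for the quotient by a coordinate variable. Choose a linear automorphism of $R = \C[T_1,\dots,T_D]$ carrying $\ell$ to $T_D$; it sends $\calI$ to the ideal generated by the images of the $f_i$, which remain generic homogeneous forms (genericity is invariant under linear changes of coordinates, as already used in the proof of Theorem~\ref{Thm:Frocheck}) vanishing on the image of $S$, again a $d$-dimensional linear subspace. Hence we may assume $\ell = T_D$, together with the tacit non-degeneracy $T_D \notin \Id(S)$ (i.e.\ $S \not\subseteq \VS(T_D)$), which is automatic for the generic subspaces occurring in the marginalization problem. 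Now $\fraks = \Id(S)$ is homogeneously saturated, generated by $n = D-d \le D < m$ linear forms so that $\delta = 1$, the $f_i$ are generic in $\fraks$ with $\deg f_i \ge 1 = \delta$, and $m \ge \max(D+1,n)$; Proposition~\ref{Prop:radquot} therefore applies verbatim and yields $(\calI : T_D) = \fraks = \Id(S)$, which is the claim.

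The one point where genuine care is needed is verifying the hypotheses of Proposition~\ref{Prop:radquot}: that $\Id(S)$ really is homogeneously saturated (this is where $\dim S \ge 1$ enters), that passing to coordinates in which $\ell = T_D$ does not disturb the genericity of the $f_i$, and that here the single quotient $(\calI : \ell)$ --- rather than the full $\ell$-saturation --- already equals $\Id(S)$, which relies on $\ell \notin \Id(S)$. Everything else is a direct citation of Corollary~\ref{Cor:KrullHt-Hom} and the Nullstellensatz, so I do not expect any serious obstacle beyond this bookkeeping.
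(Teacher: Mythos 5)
Your argument is correct, and for the colon-ideal equality it is the same as the paper's: the paper also proves $(\calI:\ell)=\Id(S)$ by invoking Proposition~\ref{Prop:radquot} and observing that the coordinate system there is arbitrary, exactly your change-of-variables reduction to $\ell=T_D$. For the equality $\sqrt{\calI}=\Id(S)$, however, you take a genuinely different route: you pass to vanishing sets via Corollary~\ref{Cor:KrullHt-Hom} (with $m\ge D+1>D$ forcing $\VS(\calI)=S$) and then apply the Nullstellensatz, whereas the paper argues purely algebraically from Proposition~\ref{Prop:multterm}: since $\calI_N=\fraks_N$ for some $N$ and $h^N\in\fraks_N$ for every linear $h\in\Id(S)$, each such $h$ lies in $\sqrt{\calI}$, and $\Id(S)$ is generated by linear forms; the reverse inclusion is trivial. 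Both are legitimate; the paper's version stays inside the graded vector-space picture that the algorithm actually manipulates, while yours is shorter to state but leans on the dimension-counting machinery behind Corollary~\ref{Cor:KrullHt-Hom}. Your explicit flagging of the implicit hypothesis $\ell\notin\Id(S)$ is a genuine improvement on the statement as written (if $\ell\in\Id(S)=\sqrt{\calI}$ then $\ell^n\in\calI$ and the saturation is the unit ideal).

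One small correction to your closing caveat: in this paper $(\calI:\ell)$ is \emph{defined} to be the $\ell$-saturation (see the displayed definition in Proposition~\ref{Prop:radquot}), so there is nothing to verify about the single colon quotient --- and it is just as well, because the literal single quotient $\{g: g\ell\in\calI\}$ would in general \emph{not} equal $\Id(S)$ even when $\ell\notin\Id(S)$: e.g.\ if all $f_i$ are quadrics and $\dim\fraks_2>m$, then $\calI_2=\lspan(f_1,\dots,f_m)\subsetneq\fraks_2$, so $g\ell\notin\calI$ for most $g\in\fraks_1$. So that step of your "bookkeeping" should be resolved by appealing to the notational convention, not by trying to prove the single-quotient statement.
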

\begin{proof}\em
The rightmost equality is a direct consequence of Proposition~\ref{Prop:radquot} and the fact that the coordinate system in Proposition~\ref{Prop:radquot} is arbitrary. The leftmost equality follows from Proposition~\ref{Prop:multterm} and the fact that $h^N\in\Id (S)_N$ for any linear homogenous element $h$ of $\Id (S)$.
\end{proof}

\subsection{Calculating approximate saturations}
\label{sec:appsat}
In this section, we will present an algorithm which is able to estimate the parametric ideal $\mathcal{F}_\theta$ consistently when the conditions of the identifiability Theorem~\ref{Thm:ident} are fulfilled, thus completing the proof of the theorem. If the conditions of the theorem are fulfilled, then from Proposition~\ref{Prop:radquot}, we know that $\mathcal{F}_\theta$ can be obtained
as the homogenous saturation of $\calI=\langle f_1,\dots, f_n\rangle$. While this is a classical task in Computational Algebraic Geometry, we do not know the $f_i$, but only the $q_i$ which are endowed with noise. Thus, we will have to calculate the saturation approximately.

For this, we the following Algorithm~\ref{Alg:sat-approx} to compute homogenous saturations approximately.
\begin{algorithm}[h]
\caption{\label{Alg:sat-approx} Approximate homogenous saturation.
\textit{Input:} A homogenous ideal $\calI=\langle f_1,\dots, f_n\rangle$.
\textit{Output:} Homogenous generator set $G$ for the approximate saturation $(\calI : X_D)$. }
\begin{algorithmic}[1]
    \State \label{alg:satdet} Determine $N$ such that $\calI_N = (\calI : X_D)_N.$
    \State Initialize $Q \gets [\,]$ with the empty matrix, $G \gets\{\}.$

    \For{$i=1\dots n$}
       \For{all monomials $M$ of degree $N-\deg f_i$}
            \State

            	Add a row vector of coefficients, $Q \gets \begin{bmatrix}  Q \\ f_i M \end{bmatrix}$

       \EndFor
    \EndFor
		
	\For{$k=N\dots 2$}
        \State Set $G \gets G \cup$ an approximate row basis for $Q$
		\State Set $Q \gets \mbox{ReduceDegreeHom(}Q\mbox{)}$
	\EndFor

    \State \label{alg:satret} Return $G$ (or reduce it first)
\end{algorithmic}
\end{algorithm}
In step \ref{alg:satdet}, Algorithm~\ref{Alg:sat-approx} first needs to find an $N$ where the saturation coincides with the ideal. Such an $N$ exists, however to find it is not a trivial task. Here, one needs either knowledge on $\calI$ or genericity assumptions as a simple criterion. Then, it builds with $Q$ an approximate representation of $(\calI : X_D)_N.$ From this, the method ReduceDegreeHom, which can be found as Algorithm~\ref{Alg:ReduceDegHom}, constructs an approximate representation of $(\calI : X_D)_{N-1}.$ This can be repeated until reaching a $k$ for which $(\calI : X_D)_k$ is empty. The calculations have to be performed approximately in the sense that the principal components of the row-spans have to be considered with a suitable singular value threshold.

Algorithm~\ref{Alg:ReduceDegHom} estimates $\left((\calI : X_D)\cap \langle X_D\rangle\right)_N$ approximately and then divides out $X_D$ from the approximate basis representation. Again, one has to consider principal components of a suitable approximation threshold. A more detailed description for the special case of the marginalization problem can be found in the main body along the algorithms presented there.

\begin{algorithm}[h]
\caption{\label{Alg:ReduceDegHom} ReduceDegreeHom ($Q$).
\textit{Input:} Approximate basis for $(\calI:X_D)_k,$
given as the rows of the matrix $Q.$
\textit{Output:} Approximate basis for $(\calI:X_D)_{k-1},$
given as the rows of the matrix $A$}
\begin{algorithmic}[1]		
		\State
		\begin{minipage}[t]{11cm}
			Let $Q'$ $\gets$ the submatrix of $Q$ obtained by\\ removing all
    		columns corresponding to\\ monomials divisible by $X_D$
		\end{minipage}
		
		\State
		\begin{minipage}[t]{11cm}
			Compute $L \gets$  an approximate\\ left null space matrix of $Q'$
		\end{minipage}
				
		\State
		\begin{minipage}[t]{11cm}
            Compute $L' \gets$ an approximate\\ row span matrix of $L_i Q$
        \end{minipage}
		
		\State
		\begin{minipage}[t]{11cm}
		Let $L'' \gets $ the matrix obtained from $L'$ by\\ removing all columns corresponding to\\
				monomials not divisible by $X_D$
		\end{minipage}

    \State \label{alg:red_line8}
    	\begin{minipage}[t]{11cm}
           Compute $A \gets $ an approximate\\ row span matrix
	 	   of $L''$
       \end{minipage}
  	  	
\end{algorithmic}
\end{algorithm}

A similar strategy can be applied when computing saturations $(\calJ: f)$ for arbitrary ideals.

However, we refrain from further explanations, as the given version is already sufficient to prove the identifiability Theorem~\ref{Thm:ident}.

Using Corollary~\ref{Cor:IdentS}, we can even obtain the saturation $\fraks=\sqrt{\calI}=(\calI : \ell)$ more efficiently and stably under genericity conditions which we have for example in the ideal regression problem~\ref{Prob:idreg}. Namely, Corollary~\ref{Cor:Nbound} allows us to obtain $\fraks_N$ for some $N$ from $\calI.$ Then it suffices to saturate the ideal $\langle \fraks_N\rangle$ by any linear polynomial $\ell$. As any polynomial $\ell$ will yield the same saturation, so one can additionally simultaneously saturate with respect to multiple linear polynomials and then compare or average.

Finally, if we know $\fraks$ to be linear, or if we know the Hilbert function of $\fraks$, we know the exact dimensions of the approximate spans and kernels (e.g.~from Theorem~\ref{Thm:Fro}). Algorithms~\ref{Alg:rad-ssa-approx} and \ref{Alg:ReduceDeg} (found in the paper) additionally use this specific knowledge in order to compute the saturation more accurately.

Moreover, Corollaries~\ref{Prop:radquot} and~\ref{Cor:Nbound} guarantee correctness and termination of the algorithm under genericity conditions if the inputs are exact; if they are subject to noise, the output of the algorithm approaches the true solution with decreasing noise, or, alternatively, increasing number of i.i.d.~samples, equal in distribution to the $f_i$.

\bibliographystyle{plain}
{\small
\bibliography{./stable,./segmentation,./ssa_pubs,./algebra}}

\end{document}